\pdfoutput=1
\PassOptionsToPackage{table}{xcolor}

\documentclass{article} %
\usepackage{preprint,times}
\prepfinalcopy

\usepackage[utf8]{inputenc} %
\usepackage[T1]{fontenc}    %
\usepackage{url}            %
\usepackage{booktabs}       %
\usepackage{amsfonts}       %
\usepackage{nicefrac}       %
\usepackage{microtype}      %
\usepackage[table]{xcolor}  %
\usepackage{multirow}
\usepackage{graphicx}  %

\usepackage{subcaption}

\usepackage{algorithm}
\usepackage[noend]{algorithmic}
\usepackage{amssymb}
\usepackage{mathtools}
\usepackage{amsmath}
\usepackage{amsthm}

\theoremstyle{plain}
\newtheorem{theorem}{Theorem}[section]
\newtheorem{proposition}[theorem]{Proposition}
\newtheorem{lemma}[theorem]{Lemma}
\newtheorem{corollary}[theorem]{Corollary}
\theoremstyle{definition}
\newtheorem{definition}[theorem]{Definition}
\newtheorem{assumption}[theorem]{Assumption}
\newtheorem{example}[theorem]{Example}
\theoremstyle{remark}

\DeclareMathOperator*{\argmin}{arg\,min}
\usepackage{wrapfig}
\usepackage{float}
\usepackage{thmtools}   %

\newcommand{\mn}{\mathcal{M}_n}
\newcommand{\ma}{\mathcal{M}_a}

\newcommand{\R}{\mathbf{R}}
\newcommand{\rc}{\texttt{minrc}}
\newcommand{\prc}{\widehat{\mathcal{R}}}
\let\hat\widehat

\newcommand{\Pn}{P_{n}}            %
\newcommand{\Pa}{P_{a}}            %
\newcommand{\Gtrue}{G^{*}}         %
\newcommand{\Rtrue}{R^{*}}         %
\newcommand{\M}{\mathcal{M}}       %
\newcommand{\PossRC}{\texttt{SolRC}} %
\newcommand{\PossCn}{\texttt{R}} %
\newcommand{\Yv}{Y} 
\newcommand{\Rv}{V} 
\newcommand{\V}{\mathbf{V}}       %

\newcommand{\indep}{\perp\!\!\!\perp}

\usepackage{xspace}
\newcommand{\myalgo}{\texttt{RCA-DCM}\xspace}
\usepackage{hyperref} %
\usepackage{cleveref}  %

\usepackage{tikz}
\usetikzlibrary{positioning,fit}
\usetikzlibrary{calc}
\usetikzlibrary{arrows.meta, shapes.geometric}

\definecolor{Fclr}{RGB}{160,30,30}        %
\definecolor{shiftbase}{HTML}{2F4B7C}     %
\colorlet{shiftT}{shiftbase!90!white}     %
\colorlet{shiftD}{shiftbase!60!white}
\colorlet{shiftY}{shiftbase!32!white}
\colorlet{shiftC}{shiftbase!12!white}     %
\tikzset{
  >={Stealth[length=1.4mm, width=1.0mm]},
  cnode/.style = {circle, draw=black, line width=0.4pt, fill=white,
                  minimum size=5mm, inner sep=0pt, font=\scriptsize},
  fnode/.style = {rectangle, draw=Fclr, line width=0.5pt, fill=Fclr!10,
                  text=Fclr, minimum size=4.2mm, inner sep=0pt, font=\scriptsize},
  dedge/.style = {->, line width=0.4pt, black},
  bidir/.style = {<->, line width=0.4pt, dashed, black},
  fedge/.style = {->, line width=0.5pt, Fclr},
  fedgeq/.style= {->, line width=0.5pt, dashed, Fclr},   %
  sT/.style = {cnode, fill=shiftT, text=white},
  sD/.style = {cnode, fill=shiftD, text=white},
  sY/.style = {cnode, fill=shiftY},
  sC/.style = {cnode, fill=shiftC},
}

\tikzset{
  directed/.style={->, thick},
  bidirected/.style={<->},
  conf/.style={bidirected, dashed}, %
  nnvsm/.style={
    rectangle, draw, very thick, fill=gray!28, inner sep=0.0cm, minimum width=1.0cm, minimum height=1.0cm, rounded corners=0.05cm
  }
}
\title{Beyond Conditional Independence: Root Cause Analysis with Deep Causal Models}

\author{%
  Md Musfiqur Rahman$^{1,*}$ \quad
  Kenneth Lee$^{1}$ \quad
  Ziwei Jiang$^{2}$ \quad
  Padmaja Jonnalagedda$^{3}$ \\
  \textbf{Ruocheng Guo$^{4,\dagger}$ \quad
  Murat Kocaoglu$^{2}$} \\
  {\small $^{1}$Electrical and Computer Engineering, Purdue University} \\
  {\small $^{2}$Computer Science, Johns Hopkins University} \\
  {\small $^{3}$Intuit AI Research} \\
  {\small $^{4}$Microsoft}
}

\begin{document}

\maketitle
{\renewcommand{\thefootnote}{\fnsymbol{footnote}}\footnotetext[1]{Corresponding author: \texttt{rahman89@purdue.edu}}\footnotetext[2]{Work completed while at Intuit AI Research.}}
\lhead{Preprint. Under review.}

\begin{abstract}
Root cause analysis (RCA) is a critical problem in many real-world scenarios. RCA enables the identification of faulty or failing mechanisms in a system by comparing anomalous observations with corresponding reference (i.e., regular) observations. However, existing approaches rely either on heuristic methods or on conditional independence tests with a strong unconfoundedness assumption, and thus fail to exploit other complicated distributional constraints in the presence of latent variables.
To relax these assumptions, we model the underlying system as a causal model and the anomalous system as a change in the structural functions of the same causal model. Specifically, to handle unobserved confounders, we establish an implicit connection between distributional constraint testing and root cause analysis. To adapt our approach to data generated from arbitrary causal models, we employ the deep causal model (DCM) framework, in which we design the causal model using neural networks. Finally, we illustrate how our method, RCA-DCM, can utilize different levels of partial graphical knowledge to perform RCA.
We evaluate RCA-DCM against state-of-the-art baselines on simulated datasets,  a physics-based causal chamber and two microservice applications. RCA-DCM improves top-1 accuracy over the strongest baseline on both Sock Shop (\textbf{0.880} vs.\ 0.752) and Online Boutique (\textbf{0.776} vs.\ 0.712), and when the true root cause in the causal chamber is unobserved and acts as a latent confounder, it recovers the exact root-cause set more often than any competing method (PRR \textbf{0.846} vs.\ 0.731).

\end{abstract}

\section{Introduction}

Root cause analysis (RCA) aims to identify the components of a complex system
that are responsible for anomalies. This is an important problem in many
domains, including microservice systems and cloud
applications~\citep{wang2023hierarchical,lin2024root,liu2021microhecl,ikram2022root,shan2019epsilon,ma2020automap},
economics~\citep{bai2024causal,inoue2021new}, scientific experiments, health
monitoring~\citep{strobl2023identifying,strobl2024counterfactual}, fraud
detection~\citep{vanhoeyveld2020vat}, and credit
scoring~\citep{das2023algorithmic}. Recent critical AI applications, such as
failure attribution in LLM-based multi-agent
systems~\citep{zhang2025agent}, can also be modeled as RCA problems.

The RCA problem is challenging because failures or abnormal behavior can
propagate through the system's structural dependencies. Many existing RCA
methods rely on heuristic rules, statistical measures, or correlation-based
analyses~\citep{pham2024baro,knorr1999finding,liu2017contextual,micenkova2013explaining,macha2018explaining,gupta2019beyond}.
These approaches may fail to identify the root cause effectively,
particularly in systems with complex structures, where a change in a
mechanism affects the distributions of downstream variables in nontrivial
ways. Thus, causal knowledge is essential for distinguishing the root cause
from other variables.

\begin{figure}[t!]
\vspace{-9mm}
\centering
\begin{minipage}[c]{0.29\linewidth}
\centering
\begin{minipage}[t]{0.48\linewidth}
\centering
\begin{tikzpicture}
  \node[cnode] (X) at (0,0)     {$X$};
  \node[cnode] (Y) at (1.1,0)   {$Y$};
  \node[fnode] (F) at (0,-0.85) {$F$};
  \path (0.55,0.45) node {};                 %
  \draw[fedge] (F) -- (X);
  \draw[dedge] (X) -- (Y);
\end{tikzpicture}\\[0.5mm]
{\scriptsize (i) $F \indep Y \mid X$.\\ $\hat{R}=\{X\}$: correct.\par}
\end{minipage}\hfill
\begin{minipage}[t]{0.48\linewidth}
\centering
\begin{tikzpicture}
  \node[cnode] (X) at (0,0)     {$X$};
  \node[cnode] (Y) at (1.1,0)   {$Y$};
  \node[fnode] (F) at (0,-0.85) {$F$};
  \draw[fedge] (F) -- (X);
  \draw[dedge] (X) -- (Y);
  \draw[bidir] (X) to[bend left=40]
        node[midway, above=-1pt, font=\tiny] {$U$} (Y);
\end{tikzpicture}\\[0.5mm]
{\scriptsize (ii) $F \not\indep Y \mid X$.\\ $\hat{R}=\{X,Y\}$: wrong!\par}
\end{minipage}\\[1mm]
{\footnotesize (a) CI-based RCA with and without a latent confounder.\par}
\end{minipage}
\hfill
\begin{minipage}[c]{0.19\linewidth}
\centering
\begin{tikzpicture}
  \node[fnode] (F) at (0,0.95)    {$F$};
  \node[sC]    (C) at (1.3,0.95)  {$C$};
  \node[sT]    (T) at (0,0)       {$T$};
  \node[sY]    (Y) at (1.3,0)     {$Y$};
  \node[sD]    (D) at (0.65,-1.0) {$D$};
  \draw[fedge]  (F) -- (T);
  \draw[fedge]  (F) -- (C);
  \draw[fedgeq] (F) -- node[midway, fill=white, inner sep=0.6pt,
                            font=\scriptsize\bfseries, text=Fclr] {?} (Y);
  \draw[dedge] (T) -- (Y);
  \draw[dedge] (T) -- (D);
  \draw[dedge] (Y) -- (D);
  \draw[bidir] (T) to[bend right=35] (Y);
\end{tikzpicture}\\[1mm]
\begin{tikzpicture}
  \shade[left color=shiftC, right color=shiftT, draw=black!50, line width=0.3pt]
    (0,0) rectangle (1.5,0.12);
  \node[font=\tiny, anchor=east]  at (0,0.06)    {weak};
  \node[font=\tiny, anchor=west]  at (1.5,0.06)  {strong};
  \node[font=\tiny, anchor=south] at (0.75,0.12) {shift};
\end{tikzpicture}\\[0.5mm]
{\footnotesize (b) Is $Y$ a root cause?\par}
\end{minipage}
\hfill
\begin{minipage}[c]{0.48\linewidth}
\centering
\includegraphics[width=\linewidth]{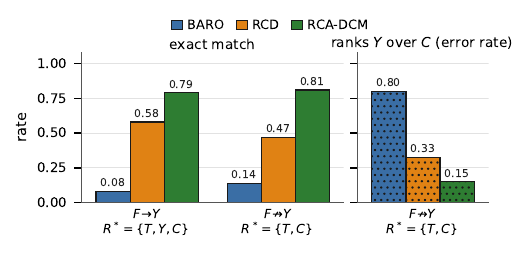}\\[1mm]
{\footnotesize (c) Left: exact-match accuracy. Right: rate at which $Y$ is
  ranked above the weakly shifted root cause $C$, which counts as an error
  when $F\not\to Y$.\par}
\end{minipage}
\caption{Baselines fail when latent confounders are present and
  non-root-cause variables experience larger distributional shifts than the
  root causes; Given $F\to T$, deciding
  whether $Y$ is a root cause ($F \to Y$) is equivalent to deciding whether
  $F$ is a valid instrument for the effect of $T$ on $Y$.}
\label{fig:profit-motiv}
\vspace{-3mm}
\end{figure}

When observations or metrics of the system are available both during normal operation (the normal dataset) and after a failure
occurs (the anomalous dataset),
causal discovery provides a framework for RCA by modeling anomalies, i.e.,
shifts in the observed distribution, as the result of a change in the
mechanism of the root cause, also known as a soft
intervention~\citep{jaber2020causal,okati2024root}. These approaches add a
binary indicator variable $F$ to the causal graph, with outgoing edges to the
intervention targets, to represent their normal mechanisms ($F=0$) and their
post-intervention mechanisms ($F=1$). Recent works such
as~\citet{ikram2022root,ikram2025root} treat the change from $F=0$ to $F=1$ as
the occurrence of an anomaly and perform causal discovery to find the
children of $F$, which they report as the anomalous nodes.
However, these methods rely solely on distributional invariances of the form $p_n(x \mid s)=p_a(x \mid s)$ for subsets $s$ and can fail to correctly rank the root causes in the presence of unobserved confounders, as in the graph in Figure~\ref{fig:profit-motiv}(a). Here, the true root cause is $R^*=\{X\}$, indicated by the edge $F\rightarrow X$. Since $F \not\indep X$, predicting $X$ as an RC is correct. However, in Fig~\ref{fig:profit-motiv}(a.ii), $F \not\indep Y \mid X$ due to the backdoor path $F\rightarrow X \leftrightarrow Y$, so using this dependency to declare $Y$ a root cause would be an incorrect prediction. Meanwhile, many non-causal methods based on statistical
testing~\citep{pham2024baro,li2022circa} assume homogeneous anomalies, where
the true root causes exhibit the largest distributional shift among all
variables, and therefore primarily measure marginal shifts. In the real
world, however, a shift originating at a root cause $R_1$ propagates to its
descendants, and a downstream non-root-cause descendant can end up with a
larger observed marginal shift than another root cause $R_2$.

Figure~\ref{fig:profit-motiv}(b) shows a scenario, averaged over 100 SCMs, in which both families of approaches fail. We consider a large marginal shift (anomaly) on $T$ and a mild anomaly on $C$, with $Y$ either a root cause (case 1, $F\rightarrow Y$) or not (case 2, $F\not\rightarrow Y$). In both cases, the non-root-cause $D$ inherits a larger marginal shift from its parent $T$ than the mildly anomalous $C$ exhibits, so statistical testing approaches such as BARO, which rank variables by marginal shift, rarely recover $R^*$ (Figure~\ref{fig:profit-motiv}(c), left). Causal approaches that assume unconfoundedness, such as RCD, fail for a different reason: $F\not\indep Y \mid T$ in both cases, due to the direct edge $F\rightarrow Y$ in case 1 and the path $F\rightarrow T \leftrightarrow Y$, opened by conditioning on $T$, in case 2. RCD thus cannot tell the two cases apart and often ranks the non-root-cause $Y$ above the mild root cause $C$ (Figure~\ref{fig:profit-motiv}(c), right). Moreover, conditional independence tests such as $D\indep F \mid T,Y$ become unreliable with growing conditioning sets. Hence, neither marginal shifts nor conditional independencies suffice for RCA under unobserved confounding; RCA in this setting requires exploiting other distributional constraints, which no existing work offers.
In this paper, we make a key observation: once $T$ has been identified as a
root cause ($F \rightarrow T$), asking whether $Y$ is also a root cause
($T \leftarrow F \rightarrow Y$) in the presence of a confounder is \emph{equivalent
to asking whether $F$ is an invalid instrument for the effect of $T$ on $Y$}.
Consequently, if the observed distribution is incompatible with $F$ being a
valid instrument, then both $T$ and $Y$ must be root causes. Unlike
conditional independence tests, which can be misleading in this setting, the
instrument constraint determines when $Y$ should be declared a root cause and
ranked above other variables.

It is natural to ask which other distributional
constraints~\cite{verma1990equivalence, ansanelli2025observational} can help distinguish root causes when these fail. However,
verifying such constraints, including the instrument constraint, is
nontrivial for arbitrary graphs with high-dimensional data. To address these
limitations, we propose a new causal framework that models RCA as a form of
unknown intervention target detection and leverages deep causal generative
models to reason about which mechanism shifts are necessary to explain the
observed distributional changes. This DCM-based RCA framework achieves $79\%$ and
$81\%$ exact-match accuracy in the setup of Figure~\ref{fig:profit-motiv}.
Our contributions are:
\begin{itemize}
      \item We propose \myalgo, an algorithm for root cause analysis in the
    presence of an arbitrary number of unobserved confounders, which uses a
    deep causal model to search for SCMs consistent with both the normal and
    anomalous datasets and identifies as root causes the variables whose
    mechanisms must shift to explain the anomaly.
\item We characterize the non-identifiable case, in which the root cause
    cannot be uniquely determined as another variable set can explain
    the distributional shift. We show how the set of root cause solutions
    changes when the given graph is mis-specified.
    \item We also demonstrate that \myalgo outperforms state-of-the-art baselines in almost all cases when evaluated on synthetic setups with unobserved confounders, a real-world physics-based testbed
    (Causal Chamber), and microservice datasets (Sock Shop and Online Boutique).
\end{itemize}

\section{Related Work}\label{sec:related}
Causal inference has become a principled approach to RCA, particularly in microservice systems. Early methods construct a causal graph over service-level metrics, often using constraint-based discovery such as PC, and localize root causes by traversing or ranking nodes with anomaly scores \citep{chen2014causeinfer, wang2018cloudranger, ma2019ms, meng2020localizing}. More recent works model failures as interventions on causal mechanisms: RCD~\citep{ikram2022root} and RCG~\citep{ikram2025root} treat failures as soft interventions and use local causal discovery or partial structural knowledge, CIRCA~\citep{li2022circa} casts RCA as intervention recognition in a causal Bayesian network, and CausalRCA~\citep{xin2023causalrca} learns the graph via gradient-based causal discovery. Other methods relax structural requirements, either with guarantees for restricted graph classes under missing structural knowledge~\citep{orchard2025missingstructural} or by exploiting linear-SCM invariances~\citep{li2024cholesky}, while statistical approaches such as BARO~\citep{pham2024baro} score distributional shifts without using causal structure. However, these methods either ignore causal structure, rely on conditional independence or invariance tests, or depend on linearity or graphs that may be inaccurate under hidden confounding; none explicitly models nonlinear mechanisms with latent confounders. In contrast, \myalgo uses deep causal models to exploit general distributional constraints under an arbitrary number of unobserved confounders. We provide an extended discussion in Appendix~\ref{app:related}.
\section{Background}
\begin{definition}[ SCM~\citep{pearl2009causality}]
An SCM $\mathcal{M}$ is a $4$-tuple 
{$ \mathcal{M}\!=\!(\mathcal{V},  \mathcal{U}, \mathcal{F}, P(.) )$,} where each observed variable $V_i\in\mathcal{V}$ is realized as an evaluation of a function $f_i^*\in\mathcal{F}$ that looks at a subset of the remaining observed variables $Pa_i\!\subset\! \mathcal{V}$,  and an unobserved (latent) variable $U_i\!\in\!\mathcal{U}$. 
This refers to the semi-Markovian causal model.
$P$ is a product joint distribution over all unobserved variables $\mathcal{U}$. 
\end{definition}

\begin{definition}[Graph with Latents, ADMG]
Each SCM induces an acyclic directed graph called the \emph{causal graph} over $V \cup U$.
A directed edge $V_i\to V_j$ implies that $V_i$ directly appears in the structural equation for $V_j$. 
Thus $V_i\rightarrow V_j$ iff $V_i\in Pa_j$. The set $Pa_j$ is called the parent set of $V_j$. We assume this directed graph is acyclic (DAG). 
The causal structure over observed vertex set $\mathcal{V}$ is represented by the latent projection ADMG $G = (V, E)$. 
Under the semi-Markovian assumption, each unobserved confounder can appear in the equation of exactly two observed variables. We represent the existence of an unobserved confounder $[U=U_X= U_Y]\in \mathcal{U}$ between $X,Y$ in the SCM with a bidirected edge $X\leftrightarrow Y$ to the causal graph. These graphs are no longer DAGs although still acyclic. $V_i$ is called an ancestor for $V_j$ if there is a directed path from $V_i$ to $V_j$. Then $V_j$ is said to be a descendant of $V_i$. The set of ancestors of $V_i$ in graph $G$ is shown by $An_G(V_i)$.
We define the partial order induced by $G$ as $\preceq_G$, where
$V_i \preceq_G V_j$ if $V_i$ is an ancestor of $V_j$ in $G$.
\end{definition}

\begin{definition}[Augmented graph]\label{def:aug}
For a set of root causes $R \subseteq \V$, let $G \cup F(R)$ be the 
mDAG
on
nodes $\V \cup \{F\}$ obtained from $G$ by adding a visible root $F$ and a
directed edge $F \to V_i$ for each $V_i \in R$. We use unrestricted
soft-intervention semantics: each $V_i$ is an arbitrary function of its parents
in $G$, of $F$ when $V_i \in R$, and of an independent noise term. 
In particular a targeted mechanism is allowed to ignore $F$ (the no change intervention). 
\end{definition}

\section{Identifying Root Causes under Unobserved Confounding}

\textbf{Problem Statement:}
We consider a reference (normal) environment represented by a structural causal model $\mn = (\V, \mathbf{U}, \mathcal{F}_n, P_n)$ and an anomalous environment represented by another SCM $\ma = (\V, \mathbf{U}, \mathcal{F}_a, P_a)$, in which some mechanisms $f_i^n \in \mathcal{F}_n$ have shifted to corresponding $f_i^a \in \mathcal{F}_a$. We refer to the set of variables whose mechanisms have shifted as the root causes $R^*$. We observe two datasets, $D_n$ and $D_a$, sampled from the normal and anomalous distributions $P_n(\V)$ and $P_a(\V)$, respectively. Following \citet{ikram2022root}, we introduce a binary indicator variable $F$ as an additional node in the causal graph to represent the normal ($F=0$) and anomalous ($F=1$) environments, and add an edge from $F$ to each variable in $R^*$. Our goal is to detect $R^*$.
\subsection{
Root-Cause Solution Sets with a Known Graph
}

We now characterize root causes under unobserved confounding. Without unobserved confounders, or when they appear only in specific graph structures, the minimal root cause set is unique; in many confounded settings, however, multiple root cause sets are consistent with $(P_n, P_a)$.

\begin{wrapfigure}{r}{0.44\textwidth}
\vspace{-7mm}
\centering
\begin{minipage}[t]{0.32\linewidth}
\centering
\begin{tikzpicture}
  \node[cnode] (X) at (0,0)    {$X$};
  \node[cnode] (Y) at (1.1,0)  {$Y$};
  \node[fnode] (F) at (0,-1.0) {$F$};
  \draw[dedge] (X) -- (Y);
  \draw[fedge] (F) -- (X);
  \draw[bidir, draw=none] (X) to[bend left=45] (Y);
\end{tikzpicture}\\[1mm]
{\footnotesize (a) Graph 1}
\end{minipage}\hfill
\begin{minipage}[t]{0.32\linewidth}
\centering
\begin{tikzpicture}
  \node[cnode] (X) at (0,0)    {$X$};
  \node[cnode] (Y) at (1.1,0)  {$Y$};
  \node[fnode] (F) at (0,-1.0) {$F$};
  \draw[dedge] (X) -- (Y);
  \draw[bidir] (X) to[bend left=45] (Y);
  \draw[fedge] (F) -- (X);
\end{tikzpicture}\\[1mm]
{\footnotesize (b) Bow 1}
\end{minipage}\hfill
\begin{minipage}[t]{0.32\linewidth}
\centering
\begin{tikzpicture}
  \node[cnode] (X) at (0,0)    {$X$};
  \node[cnode] (Y) at (1.1,0)  {$Y$};
  \node[fnode] (F) at (0,-1.0) {$F$};
  \draw[dedge] (X) -- (Y);
  \draw[bidir] (X) to[bend left=45] (Y);
  \draw[fedge] (F) -- (X);
  \draw[fedge] (F) -- (Y);
\end{tikzpicture}\\[1mm]
{\footnotesize (c) Bow 2}
\end{minipage}
\vspace{-1mm}
\caption{(a) No confounding. (b)–(c) Bow graphs with an unobserved confounder between $X$ and $Y$ (dashed $\leftrightarrow$).}
\label{fig:three_graph}
\vspace{-5mm}
\end{wrapfigure}

\begin{example}[Intuitive example]
\label{ex:dist-constr}
First consider Graph 1, which has no unobserved confounders. The conditional
distribution $P(Y \mid \mathrm{Pa}(Y))$ represents the causal mechanism of
$Y$. Under the independent causal mechanisms assumption (a shift in one
mechanism does not affect another), we can detect a shift in $f_V$ by checking
whether $P_n(V \mid \mathrm{Pa}(V)) = P_a(V \mid \mathrm{Pa}(V))$ across the
two environments. In this case, we recover the true root causes
$\R^* = \{X\}$.
Now consider Bow 1 (Figure~\ref{fig:three_graph}b), where an unobserved
variable $U$ causes both $X$ and $Y$. The ground-truth root cause set is
$\R^* = \{X\}$. However, even though $Y$ is not a root cause, a mechanism
shift in $f_X$ can change $Y$'s conditional distribution in the anomalous
environment, i.e., $P_n(Y \mid X) \neq P_a(Y \mid X)$.
\textbf{CI test:} Given the normal and anomalous datasets,
\textit{conditional independence tests} cannot determine whether $Y$ is a
root cause, because the change in $Y$'s distribution can also be produced by
a pair of normal and anomalous SCMs in which $f_Y$ shifts as well. Hence
$\prc = \{X\}$ and $\prc = \{X, Y\}$ are equally plausible root cause
solutions, both compatible with the given data and graph:
$\mathrm{SolutionSet(CI\text{-}tests)} = \{\{X\}, \{X,Y\}\}$.

\textbf{Instrumental test:} We can further check whether the input
distribution satisfies additional distributional constraints, such as
instrumentality, and use them to shrink the solution set. A variable $F$ is a
valid instrument for the causal effect of $X$ on $Y$ if (i) $F$ affects $X$,
(ii) $F$ affects $Y$ only through $X$, and (iii) $F$ is independent of the
common causes $U$ of $X$ and $Y$.
We can use Pearl's instrumental inequality conditions~\citet{pearl1995testability} to check if a variables is a valid instrument. For binary $F$, $X$, and $Y$: $ P(Y{=}y, X{=}x \mid F{=}0) + P(Y{=}1{-}y, X{=}x \mid F{=}1) \le 1,
   \forall\, x, y \in \{0,1\}.$  
A violation of any of the four inequalities implies
that $F$ is not a valid instrument for the effect of $X$ on $Y$.
Thus by contraposition, an edge
$F \rightarrow Y$ must be present (since i, iii hold by construction,  violation must come from ii).
Therefore $\{X\}$ alone cannot be a
solution, which yields
$\mathrm{SolutionSet(CI\text{-}tests +\ IV\text{-}fail)} = \{\{X,Y\}\}$.
\end{example}
The next natural question is i) how do we characterize our solution set and ii) how can we enforce constraints to reduce the set.
Two rc solutions $R_1, R_2$ represent two different augmented graphs $G \cup R_1$ and  $G \cup R_2$ which can represent distribution set $\mathcal{M}(G \cup R_1)$ and 
$\mathcal{M}(G \cup R_2)$. Informally, if the input normal-anomalous distribution $(P_n, P_a)$ belongs to $\mathcal{M}(G \cup R_1) \cap \mathcal{M}(G \cup R_2)$, we can say that it can be expressed by both augmented graph. Thus, both $R_1, R_2$ are rc solutions for $(P_n, P_a)$. Below, we make the concept of solution set precise.

\subsubsection{The root-cause solution set}

\begin{definition}[Observational equivalence class $\M(G)$]
The set of distributions over the visible variables realizable by
$G$ under classical unrestricted semantics with latents of arbitrary
cardinality.
\end{definition}

\begin{definition}[{Realizable tuple}]\label{def:realizable}
A pair $(\Pn, \Pa)$ of distributions over $\V$ is \emph{realized by} $(G, R)$
if there is a joint $P \in \M\bigl(G \cup F(R)\bigr)$ over $(\V, F)$ with $P(\V \mid F=0) = \Pn(\V)$
and 
$P(\V \mid F=1) = \Pa(\V).$
Since $F$ is a root, its marginal is a free parameter; we only require it to
give both regimes positive probability so the conditionals are defined.
\end{definition}

\begin{definition}[Structural dominance, $G_1 \subseteq G_2$]
\label{def:str-dom}
$G_2$ structurally dominates $G_1$ if every node, every directed edge and every 
latent facet
of $G_1$ is present in $G_2$.
\end{definition}

\begin{definition}[Observational dominance]
    For two mDAGs $G_1, G_2$ on the same nodes,
$G_2$ \emph{observationally dominates} $G_1$ when $\M(G_1) \subseteq \M(G_2)$
at every cardinality of the visible variables 
\end{definition}

\begin{lemma}[Structural dominance $\Rightarrow$ Observational dominance~\citet{ansanelli2025observational}]
\label{lem:struct-dom-obs-dom}
Let $G_1$ and $G_2$ be two mDAGs with the same sets of nodes.
If $G_2$ structurally dominates $G_1$, then it also
observationally dominates it, i.e., 
$\M(G_1) \subseteq \M(G_2)$
\end{lemma}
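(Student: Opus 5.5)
The plan is to show directly that any distribution realizable by $G_1$ is also realizable by $G_2$. We take a model for $G_1$ and build a model for $G_2$ that reproduces exactly the same joint over the visible nodes, by letting the extra structure of $G_2$ be present but inert. Recall that an mDAG consists of a DAG on the visible nodes together with a simplicial complex of latent facets. Each facet is a set of visible nodes sharing one common latent parent. Under unrestricted semantics, each visible node is an arbitrary measurable function of its visible parents, its latent parents, and (without loss of generality) a private noise term.

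\textbf{Step 1 (fix a $G_1$ model).} Let $P\in\M(G_1)$ at some visible cardinality. By definition there are the following ingredients:
\begin{itemize}
\item independent latents $\{U_\Phi\}$, indexed by the facets $\Phi$ of $G_1$, with arbitrary cardinalities;
\item independent noises $\{\epsilon_i\}$;
\item functions $V_i=f_i\bigl(Pa^{G_1}_i,\{U_\Phi: V_i\in\Phi\},\epsilon_i\bigr)$.
\end{itemize}
These functions, evaluated in a topological order of $G_1$, induce $P$.

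\textbf{Step 2 (embed latents).} By structural dominance, every facet $\Phi$ of $G_1$ is contained in some facet $\Psi(\Phi)$ of $G_2$. Choose one such $\Psi(\Phi)$ for each $\Phi$. For each facet $\Psi$ of $G_2$, define its latent as the tuple $W_\Psi=(U_\Phi)_{\Phi:\Psi(\Phi)=\Psi}$, or a trivial constant if no $\Phi$ maps to $\Psi$. The $W_\Psi$ are still mutually independent, because they are built from disjoint collections of independent variables. Every $V_i\in\Phi$ lies in $\Psi(\Phi)$, so $V_i$ has access to $W_{\Psi(\Phi)}$ and can project out the coordinate $U_\Phi$.

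\textbf{Step 3 (define $G_2$ mechanisms).} Since $Pa^{G_1}_i\subseteq Pa^{G_2}_i$, set
\[
g_i\bigl(Pa^{G_2}_i,\{W_\Psi: V_i\in\Psi\},\epsilon_i\bigr)=f_i\bigl(Pa^{G_1}_i,\{U_\Phi:V_i\in\Phi\},\epsilon_i\bigr).
\]
Here $g_i$ ignores the extra parents and the extra latent coordinates. This is allowed because unrestricted semantics permits constant dependence. The directed part of $G_2$ is acyclic and contains that of $G_1$, so a topological order of $G_2$ is also valid for $G_1$.

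\textbf{Step 4 (equal distributions).} We argue by induction along this topological order. At each node, the $G_2$ model produces exactly the same random variable, as a function of $(\{U_\Phi\},\{\epsilon_i\})$, as the $G_1$ model. Hence the two models have the same joint distribution over the visible nodes, so $P\in\M(G_2)$. The construction never changes the visible variables, so it works at every visible cardinality, which gives observational dominance.

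\textbf{Main obstacle.} The only delicate point is Step 2. We must ensure that the regrouped latents remain independent and that each node can still read exactly the latents it needs, even though the facets of $G_2$ need not match those of $G_1$ one-to-one. Assigning each $G_1$ facet to a single containing $G_2$ facet, rather than copying it into several, is what preserves independence. The unrestricted latent cardinality is what lets us pack tuples into a single latent.
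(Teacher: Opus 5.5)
Your proof is correct. The paper does not prove this lemma itself; it imports it from Ansanelli et al., and the standard justification behind that result is your simulation argument: run the $G_1$ model inside $G_2$ with the extra directed edges and extra latent coordinates left causally idle. The paper uses the same trick in the special case of Lemma~\ref{lem:upclosure}, where the new $F$-edges are simply ignored. Your Step~2, which assigns each $G_1$ facet to a single containing $G_2$ facet so that the bundled latents stay independent, is exactly the detail the citation leaves implicit. It also covers the weaker reading of ``present'' as ``contained in some facet of $G_2$''.
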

Figure~\ref{fig:three_graph} shows two graphs: $G_1: G \cup F(\{X\})$ and $G_2: G \cup F(\{X,Y\})$. Since $G_2$ structurally dominates $G_1$ due to the extra $F\rightarrow Y$ edge, $G_2$ realizes a larger set of distributions than $G_1$ : 
$\mathcal{M}(G \cup F(\{X\})) \subseteq \mathcal{M}(G \cup F(\{X,Y\}))$. This implies that if a specific normal and anomalous distribution pair $(P_n, P_a)$ is realized by $G_1$, it will be realized by $G_2$ as well. Thus, if $\{X\}$ is a root cause solution, both $\{X,Y\}$ is also valid root cause solution.
We formally define the solution set as:
\begin{definition}[{Root cause solution set}]\label{def:possrc}
For the observed tuple $(\Pn, \Pa)$ and a graph $G$, define
\begin{equation}
\begin{split}
\PossRC(G, P_n, P_a) &\coloneqq
\bigl\{\, R \subseteq \V : (\Pn, \Pa) \text{ realized by } (G, R) \,\bigr\},
\\
\rc(G,P_n, P_a) &\coloneqq \min_{R \in \PossRC(G,P_n, P_a)} |R| \;\in\; \{0,\dots,n\} \cup \{\infty\},
\end{split}
\end{equation}
with $\rc(G, P_n, P_a) = \infty$ when $\PossRC(G, P_n, P_a) = \emptyset$. If the context is clear, we remove the distribution tuple from the notation.
\end{definition}
For a graph $G$, $\PossRC(G)$ denotes the family of root-cause sets that explain the data, and the \emph{root-cause number} $\rc(G) = \min_{R \in \PossRC(G)} |R|$ is the size of the smallest such set; the inclusion-minimal members of $\PossRC(G)$ are the candidate reported sets. Since the data are generated by the true graph $\Gtrue$ and root-cause set $\Rtrue$, we have $\Rtrue \in \PossRC(\Gtrue)$ and hence $\rc(\Gtrue) \le |\Rtrue|$.

\begin{restatable}%
{lemma}{lemupclosure}
\label{lem:upclosure}
$\PossRC(G)$ is upward closed: if $R\! \in\! \PossRC(G)$ and $R \!\subseteq\! R'\! \subseteq\! \V$, then $R' \!\in\! \PossRC(G)$.
\end{restatable}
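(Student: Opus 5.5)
The plan is to reduce the statement to Lemma~\ref{lem:struct-dom-obs-dom} (structural dominance implies observational dominance), applied to the two augmented graphs $G \cup F(R)$ and $G \cup F(R')$. Fix $R \in \PossRC(G)$ and $R \subseteq R' \subseteq \V$. By Definition~\ref{def:realizable}, there is a joint $P \in \M\bigl(G \cup F(R)\bigr)$ over $(\V, F)$ with $P(\V \mid F=0) = \Pn$, $P(\V \mid F=1) = \Pa$, and $P(F=f) > 0$ for both values of $f$.

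The first step is to check structural dominance in the sense of Definition~\ref{def:str-dom}. Both augmented graphs have node set $\V \cup \{F\}$. Both contain every directed and bidirected (latent-facet) structure of $G$; Definition~\ref{def:aug} adds only the visible root $F$ and its outgoing edges, and introduces no new latents. The directed edges out of $F$ are $\{F \to V_i : V_i \in R\}$ in the first graph and $\{F \to V_i : V_i \in R'\}$ in the second, and the former set is contained in the latter because $R \subseteq R'$. Hence $G \cup F(R')$ structurally dominates $G \cup F(R)$.

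The second step is to invoke Lemma~\ref{lem:struct-dom-obs-dom}, which gives $\M\bigl(G \cup F(R)\bigr) \subseteq \M\bigl(G \cup F(R')\bigr)$ at the relevant cardinalities, here binary $F$ and the given cardinalities of $\V$. Consequently the same joint $P$ lies in $\M\bigl(G \cup F(R')\bigr)$. Its conditionals given $F$ and its $F$-marginal are unchanged, so $(\Pn, \Pa)$ is realized by $(G, R')$, that is, $R' \in \PossRC(G)$.

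The only delicate point is that the membership $P \in \M(\cdot)$ transfers with exactly the same joint $P$, including the same positive marginal of $F$; this is what observational dominance asserts. As a sanity check that does not rely on the cited lemma, one can also argue directly from the unrestricted soft-intervention semantics of Definition~\ref{def:aug}. Take the SCM realizing $P$ on $G \cup F(R)$ and reuse it on $G \cup F(R')$. For each $V_i \in R' \setminus R$, keep its mechanism as a function that ignores $F$; this is the explicitly allowed no-change intervention. The induced joint over $(\V, F)$ is then identical, which again yields $R' \in \PossRC(G)$.
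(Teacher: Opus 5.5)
Your proof is correct, and your main route differs from the paper's. You check that $G \cup F(R')$ structurally dominates $G \cup F(R)$ and invoke Lemma~\ref{lem:struct-dom-obs-dom} to carry the same joint $P$ across. The paper instead argues directly from the semantics of Definition~\ref{def:aug}: it reuses the realizing model on $G \cup F(R')$ and gives each newly targeted node in $R' \setminus R$ an $F$-augmented mechanism that ignores $F$. Both regime conditionals are then unchanged, which is exactly your closing ``sanity check.''

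The paper's version is self-contained and needs only the explicitly permitted no-change intervention. Yours routes the step through the same dominance principle the paper later uses for monotonicity in the graph (Lemma~\ref{lem:main-graphmono}). That makes upward closure in $R$ and monotonicity in $G$ two instances of one fact. The cost is reliance on an external result whose own proof is essentially the same ``make the extra edges vacuous'' construction. Either argument suffices.
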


\subsubsection{Exploiting Distributional Constraints for Solution-set Reduction}
Having defined the solution set, we determine its members by testing, for each candidate $R$, the distributional constraints implied by the augmented graph $G \cup F(R)$ against the input distributions: as in Example~\ref{ex:dist-constr}, any violated constraint rejects $G \cup F(R)$ and removes $R$ from $\PossRC$.
In general, the observational distributions compatible with a latent-variable causal model are restricted by two families of constraints~\citep{evans2023latentfree, ansanelli2025observational}: equality constraints, such as conditional independences~\citep{pearl2009causality} and nested Markov (Verma) constraints~\citep{verma1990equivalence, richardson2023nested}, and inequality constraints, such as the instrumental inequality~\citep{pearl1995testability}, e-separation inequalities~\citep{evans2012graphical}, and Bell inequalities~\citep{bell1964epr}.
Enumerating and testing these constraints individually is non-trivial, particularly since inequality constraints are difficult to derive in general. We bypass this by directly searching for an SCM that is consistent with $G \cup F(R)$ and reproduces both the normal and anomalous distributions; the existence of such an SCM implies that all constraints are satisfied.
We establish a connection between $\PossRC$ and normal anomalous SCMs as follows:
\begin{definition}[SCM Pairset, $\widehat{M}_{na}(G^*, \PossCn)$]
For any $\PossCn \subseteq \mathbf{V}$, there exists a normal and anomalous SCM pair $\widehat{\mathcal{M}}_n = \big(\{\widehat f^{\,n}_X\}_{X \in \mathbf{V}},\,
\widehat P(U)\big)$ and
$\widehat{\mathcal{M}}_a = \big(\{\widehat f^{\,a}_X\}_{X \in \mathbf{V}},\,
\widehat P(U)\big)$ over $\mathbf{V}$ sharing the exogenous distribution
$\widehat P(U)$ such that $(i)$ $\widehat{\mathcal{M}}_n$ and $\widehat{\mathcal{M}}_a$ have the true
    ADMG graph $G^*$; and $(ii)$ For all variables $V\! \in \!\PossCn$, mechanisms change arbitrarily (including no shift) across environments,
    $\widehat f^{\,n}_V(\mathrm{pa}(V), U_V)
\!     \rightarrow \!\widehat f^{\,a}_V(\mathrm{pa}(V), U_V)$
    while invariant mechanisms  
     $
    \widehat f^{\,n}_V(\mathrm{pa}(V), U_V)
   \!=\! \widehat f^{\,a}_V(\mathrm{pa}(V), U_V) ,
    $
    for all $V \in \mathbf{V} \setminus \PossCn$.
    We define the set of all such %
    SCM pairs as SCM Pairset $\widehat{M}_{na}(G^*, \PossCn)$.
\end{definition}

\begin{definition}[Population root-cause loss, $\ell^*$]
\label{def:score}
For $\PossCn \subseteq \mathbf{V}$, the \emph{population root-cause loss} is the least
distributional mismatch attainable by a candidate pair,
\begin{equation}
\label{eq:inf-loss}
\ell^*(G^*, \PossCn) \;=\;
\inf_{(\widehat{\mathcal{M}}_n, \widehat{\mathcal{M}}_a)\in  \widehat{M}_{na}(G^*, \PossCn)}
\Big[\, d\big(\widehat P_n,\, P^*_n\big)
                  + d\big(\widehat P_a,\, P^*_a\big) \,\Big]
\end{equation}
where $\widehat P_n$ and $\widehat P_a$ are the distributions over $\mathbf{V}$
induced by $\widehat{\mathcal{M}}_n$ and $\widehat{\mathcal{M}}_a$.
$\widehat{M}_{na}(G^*, \PossCn)$ is the SCM pairs, and $d$ is a discrepancy on distributions over
$\mathbf{V}$ with $d(P,Q) = 0 \iff P = Q$.
\end{definition}

\subsection{RCA-DCM: Root-Cause Detection via SCM Search}

\begin{figure*}[t]
\centering
\definecolor{rcnodeblue}{RGB}{47,85,151}
\definecolor{rcnodeorange}{RGB}{197,90,17}
\definecolor{rcnodegray}{RGB}{115,115,115}
\definecolor{rcvalid}{RGB}{225,225,225}
\definecolor{rchit}{RGB}{200,30,30}
\definecolor{rcstepone}{RGB}{122,32,138}
\definecolor{rcsteptwo}{RGB}{0,112,102}
\begin{tikzpicture}[
  x=1cm, y=1cm,
  >={Latex[length=1.6mm]},
  font=\scriptsize,
  rcobs/.style  ={circle, draw=black, thick, fill=rcnodeblue,   text=white, minimum size=5.4mm, inner sep=0pt, font=\scriptsize},
  rcreg/.style  ={circle, draw=black, thick, fill=rcnodeorange, text=white, minimum size=5.4mm, inner sep=0pt, font=\scriptsize},
  rcedge/.style ={->, semithick, draw=rcnodeblue!80!black},
  rcfedge/.style={->, semithick, draw=rcnodeorange!85!black},
  rcbidir/.style={<->, semithick, dashed, draw=rcnodegray},
  rcdec/.style  ={diamond, draw=black, semithick, fill=white, aspect=1.7,
                  minimum width=2.5cm, minimum height=1.6cm, inner sep=0pt, align=center},
  rcbox/.style  ={rounded corners=1.5pt, draw, semithick, fill=white,
                  align=center, text width=2.9cm, inner sep=3pt},
  rcflow/.style ={->, semithick, draw=black!75},
  rcnote/.style ={align=center, text=black, anchor=north}
]

\node[fnode] (F) at (1.05, 1.15) {$F$};
\node[cnode] (X) at (0.45,-0.10) {$X$};
\node[cnode] (Z) at (1.65,-0.10) {$Z$};
\node[cnode] (Y) at (2.85,-0.10) {$Y$};
\draw[fedge] (F) -- (X);
\draw[fedge] (F) -- (Z);
\draw[dedge] (X) -- (Z);
\draw[dedge] (Z) -- (Y);
\draw[bidir] (X) to[bend right=45] (Z);
\draw[bidir] (Z) to[bend right=45] (Y);
\node[rcnote, text width=3.0cm] at (1.65,-0.70)
  {True ADMG $\Gtrue$\\ $\Rtrue=\{X,Z\}$};

\fill[rcvalid] (3.56,-0.28) rectangle (4.42,0.62);
\draw[black!30, line width=0.3pt] (2.70,0.30) -- (5.30,0.30);
\draw[black!30, line width=0.3pt] (4.00,-0.95) -- (4.00,1.55);
\draw[semithick] (2.95,0.30) -- (4.00,1.35) -- (5.05,0.30) -- (4.00,-0.75) -- cycle;
\draw[black!45, line width=0.3pt] (3.56,-0.28) rectangle (4.42,0.62);
\fill[rchit] (4.02,0.95) circle (1.4pt);
\node[rchit, anchor=west, inner sep=1pt] at (4.08,0.95) {$P^*$};
\node[rcnote, text width=3.0cm] at (4.00,-0.82)
  {Input-distribution space\\ $P^*=(P_n,P_a)$ lies outside Valid-IV space};

\node[rcdec] (test) at (6.85, 0.30) {$\ell^*( \mathbf{V}\setminus \{\Rv\})\overset{?}{=}0$};
\node[align=center, anchor=south] at (6.68, 1.34)
  {for each $V\in\mathbf{V}$:\\[-2pt] freeze $f_V$, shift $\mathbf{V}\setminus\{V\}$};
\node[align=center, anchor=north, text=black, font=\scriptsize]
  at (6.85, -0.78) {w/ DCM optimization};
\node[rcbox, draw=rcsteptwo] (top) at (9.90, 1.55)
  {\textcolor{rcsteptwo}{$V=Y$, $\ell^*(\mathbf{V}\setminus\{Y\})=0$}\\[1pt]
   $\mathbf{V}\setminus\{Y\}\in\PossRC(\Gtrue)$\\[1pt]
   $\Rightarrow\;$ inconclusive};
\node[rcbox, draw=rcstepone] (bot) at (9.90,-0.95)
  {\textcolor{rcstepone}{$V=Z$: $\ell^*(\mathbf{V}\setminus \{Z\})>0$}\\[1pt]
   $\mathbf{V}\setminus\{Z\}\notin\PossRC(\Gtrue)$\\[1pt]
   $\Rightarrow\; Z\in\prc$};

\node[rcbox, text width=1.8cm, draw=black, fill=black!4] (out) at (12.30, 0.30)
  {$Z\in\prc$\\[1pt] };

\draw[rcflow] (2.30,0.2) to[out=0,in=238] (3.92,0.82);
\node[anchor=north, inner sep=1pt] at (3.00,0.66) {generates};
\draw[rcflow] (4.40,0.80) to[out=-6,in=180]
  node[pos=0.72, above, inner sep=1.5pt] {input} (test.west);
\draw[rcflow, draw=rcsteptwo] (test.north east) to[out=48,in=180]
  node[pos=0.26, above left, inner sep=0.5pt] {$=0$} (top.west);
\draw[rcflow, draw=rcstepone] (test.south east) to[out=-48,in=180]
  node[pos=0.26, below left, inner sep=0.5pt] {$>0$} (bot.west);
\draw[rcflow, draw=black!35, dashed] (top.east) to[out=0,in=125]
  node[pos=0.52, above right, inner sep=1pt] {no claim} (out.north);
\draw[rcflow] (bot.east) to[out=0,in=235]
  node[pos=0.52, below right, inner sep=1pt] {} (out.south);
\end{tikzpicture}
\caption{%
\textbf{Workflow example.} In the true augmented graph the fault shifts $X$
and $Z$, so $\Rtrue=\{X,Z\}$, and latent confounders act on both
$X\leftrightarrow Z$ and $Z\leftrightarrow Y$. The induced pair
$P^*=(P^*_n,P^*_a)$ falls outside the shaded Valid-IV region of the input
space: it violates the instrumental inequalities for the effect of $X$ on $Z$,
which rules out $F$ being excluded from $Z$ (exclusion condition) and hence forces the edge
$F\rightarrow Z$. Each candidate $V$ is tested by freezing $f_V$ while the
mechanisms on $\mathbf{V}\setminus\{V\}$ shift freely. The violation makes
$\mathbf{V}\setminus\{Z\}$ infeasible, so
$\ell^*(\mathbf{V}\setminus\{Z\})>0$ certifies that $Z$ lies in every
$\PossCn\in\PossRC(\Gtrue)$ and hence in $\prc$. For $V=Y$ the score attains
zero, which is inconclusive: it is consistent with
$\mathbf{V}\setminus\{Y\}\in\PossRC(\Gtrue)$, and we draw no conclusion
from it.}
\label{fig:invalid-iv-instance}
\end{figure*}

\textbf{Brute Force algorithm:}
To obtain the members of $\PossRC$, we can: 1. Iterate over all possible $\PossCn \subseteq \mathbf{V}$ as a candidate solution 2. For each $\PossCn$, search for SCM Pairset $\widehat{M}_{na}(G^*, \PossCn)$ such that $\ell^*(G^*, \PossCn)=0$ 3. If there exists at least one SCM pair $(\widehat{\mathcal{M}}_n, \widehat{\mathcal{M}}_a) \in \widehat{M}_{na}$ found, we add $\PossCn$ to $\PossRC$ as a candidate solution.
Nevertheless, we have two challenges: i) this approach requires iteration over $2^{|\mathbf{V}|}$ number of sets and ii) how to find such an SCM pair - is non-trivial.
To efficiently find such pair with stochastic gradient descent, we learn a proxy of the SCMs with deep causal models (Definition~\ref{def:SCM}) and optimize following our constraints.
\begin{definition}[Deep causal generative models (DCM)~\citep{kocaoglu2018causalgan,xia2021causal,rahman2024modular}]
\label{def:SCM}
	A neural net architecture $\mathbb{G}$ is called a deep causal generative model (DCM) for an ADMG $G=(\mathcal{V},\mathcal{E})$ if it is composed of a collection of neural nets, one  $f_i$ (or interchangeably $f_{V_i}$) for each $V_i\in\mathcal{V}$ such that 
		i) \emph{each $f_i$ accepts a sufficiently high-dimensional noise vector $N_i$,} 
		ii) \emph{the output of $f_j$ is input to $f_i$ iff $V_j\in Pa_G(V_i)$,}
		iii) \emph{$N_i= N_j$ iff $V_i\leftrightarrow V_j$. }
        {A DCM is trained to learn a proxy of the true SCM.}
DCM generators are represented as $\mathbb{G}=\{f_{1},...,f_{n}\}$ parameterized by $\Theta= \{\theta_1, ... , \theta_{|\mathbf{V}|}\}$ where $n=|\mathbf{V}|$.
Similar to the original data distribution, $P(\mathbf{V})$, we define $\widehat{P}(\mathbf{V})$ to be the distribution induced by the $\theta$ parameterized DCM.
Noise vectors $N_i$ replace both the exogenous noises and the unobserved confounders in the true SCM. They are of sufficiently high dimension to induce the observed distribution. We say that a DCM is \emph{representative enough for an ADMG} if the neural networks have sufficiently many parameters to induce any observed distribution induced by any SCM that entails the ADMG. 
Let $v=[v_1, v_2,..., v_{n}]$ st $V_i \in \mathbf{V}$.  $v\sim P(\mathbf{V})$ is real samples and $\hat{v}\sim \hat{P}(\mathbf{V})$ is DCM generated samples:
$\hat{v}_i = f_i(\hat{pa}(V_i), {N_i});  f_i \in \{f_V: V\in \mathbf{V}\}$. A discriminator compares  $v$ and $\hat{v}$ to train $\forall f_i$.
\end{definition}

Our aim is to now construct the $\PossRC$ from 
a pool of exponentially many candidate subsets by searching for SCMs that are consistent with the causal graph $G$ and satisfy the distributional constraints. We circumvent this problem by iterating over variables in $\mathbf{V}$ and searching for SCMs
assuming $\Yv$ is not a root cause, i.e., the absence of the edge $F \rightarrow R$. We test whether there exist two SCMs consistent with the causal graph $G^*$, allowing changes in all mechanisms except $\Yv$, can shift from normal $P_n$ to anomalous $P_a$ distribution. If no such SCM exists, the input distribution must violate at least one equality or inequality constraint implied by the absence of $F \rightarrow R$.  We therefore conclude that $R$ must be a root cause, without iterating many candidates and identifying the violated constraint explicitly. We claim the following lemma.

\begin{restatable}{lemma}{lemltosolrc}
\label{lemm-ltoSolrc}
Suppose $\mathcal D_n, \mathcal D_a$ are generated by $(\mathcal M_n^*, \mathcal M_a^*)$ with common ADMG $G^*$, and let latent invariance hold.  For any $\Yv \in \mathbf{V}$, 
if $\ell^*(G^*, \mathbf{V}\setminus \{\Yv\} ) > 0 $ then $\Rightarrow   \mathbf{Z} \setminus \{ \Yv \} \notin \PossRC(G^*, P_n^*, P_a^*) \text{ for any } \mathbf{Z}\subseteq \mathbf V .$ 
\end{restatable}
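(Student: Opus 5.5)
We argue by contraposition. Fix $\Yv \in \mathbf{V}$ and suppose some $\mathbf{Z}\subseteq\mathbf{V}$ has $R := \mathbf{Z}\setminus\{\Yv\} \in \PossRC(G^*, P_n^*, P_a^*)$. We show that then $\ell^*(G^*, \mathbf{V}\setminus\{\Yv\}) = 0$.

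The first step uses Lemma~\ref{lem:upclosure}. Since $R \subseteq \mathbf{V}\setminus\{\Yv\}$ and $\PossRC(G^*)$ is upward closed, we get $\mathbf{V}\setminus\{\Yv\} \in \PossRC(G^*)$. It therefore suffices to prove one bridging claim: if $R' \in \PossRC(G^*)$, then $\ell^*(G^*, R') = 0$.

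To prove this claim, unpack Definition~\ref{def:realizable}. There is a joint $P \in \M(G^* \cup F(R'))$ with $P(\V\mid F=0)=P_n^*$ and $P(\V\mid F=1)=P_a^*$. By definition of $\M$, this $P$ is induced by an SCM on $G^*\cup F(R')$ with latents of arbitrary cardinality, under the unrestricted soft-intervention semantics of Definition~\ref{def:aug}. Let $g_V$ denote its mechanisms. For $V\in R'$, $g_V$ may depend on $F$; for $V\notin R'$, it does not. Conditioning on $F=0$ and $F=1$, we define
\[
\widehat f^{\,n}_V(\cdot) = g_V(\cdot, F{=}0), \qquad \widehat f^{\,a}_V(\cdot) = g_V(\cdot, F{=}1).
\]
Both regimes use the same latent distribution $\widehat P(U)$. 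This is legitimate because $F$ is a root and is independent of the latents; this is exactly the latent-invariance hypothesis in the lemma. The latent structure of $G^*\cup F(R')$ restricted to $\V$ is that of $G^*$, so both SCMs have ADMG $G^*$. Moreover, mechanisms outside $R'$ coincide across the two regimes. Hence $(\widehat{\mathcal M}_n,\widehat{\mathcal M}_a)\in \widehat M_{na}(G^*,R')$. This pair induces exactly $P_n^*$ and $P_a^*$, so both discrepancies in \eqref{eq:inf-loss} vanish and the infimum is $0$.

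Applying the claim with $R'=\mathbf{V}\setminus\{\Yv\}$ gives $\ell^*(G^*,\mathbf{V}\setminus\{\Yv\})=0$, which contradicts the hypothesis. So no $\mathbf{Z}\setminus\{\Yv\}$ lies in $\PossRC$.

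The main obstacle is matching the semantics in the bridging step. We must check that the latent facets of the mDAG $G^*\cup F(R')$ and the "latents of arbitrary cardinality" of $\M$ are captured by the semi-Markovian SCM pairs in $\widehat M_{na}$, which use shared exogenous variables on bidirected edges. The intended justification is that any latent facet over several visible nodes can be simulated by pairwise-shared noises only if the ADMG model is taken as the mDAG model. To make the argument rigorous, I would state this identification explicitly: $\M(G^*\cup F(R'))$ should be read as the model whose latent structure is given by the bidirected edges of $G^*$. I would also cite latent invariance precisely at the point where $\widehat P(U)$ is shared across $F$.
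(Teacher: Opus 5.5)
Your proof is correct and follows the paper's argument. Membership in $\PossRC(G^*)$ gives $\ell^* = 0$ by splitting the $F$-augmented mechanisms into $\widehat f^{\,n}_V = g_V(\cdot,0,\cdot)$ and $\widehat f^{\,a}_V = g_V(\cdot,1,\cdot)$, and upward closure (Lemma~\ref{lem:upclosure}) moves from $\mathbf{Z}\setminus\{\Yv\}$ to $\mathbf{V}\setminus\{\Yv\}$; you just apply upward closure first rather than last. One small correction: the independence of $F$ from the latents comes from $F$ being a root of the augmented graph, not from latent invariance, which the paper only uses later (Theorem~\ref{th-main:rc_sound}) to place $\Rtrue$ in $\PossRC(\Gtrue)$.
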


\begin{restatable}{theorem}{thmrcsound}
\label{th-main:rc_sound}
Under Assumption~\ref{asm:dcm} (Expressive DCM),
For any $\Yv \in \mathbf{V}$, 
if $\ell(G^*,  \mathbf{V}\setminus \{\Yv\} ) > 0 $ then $\Yv \in R^*$.
\end{restatable}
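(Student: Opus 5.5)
The argument is by contraposition: assume $\Yv \notin R^*$ and show that the loss must vanish, $\ell(G^*, \mathbf{V}\setminus\{\Yv\}) = 0$. The idea is that the true pair $(\mathcal M_n^*, \mathcal M_a^*)$ already witnesses feasibility of the candidate set $\mathbf{V}\setminus\{\Yv\}$. It then remains to transfer this witness from the space of SCM pairs $\widehat{M}_{na}(G^*, \cdot)$ to the parameter space of the DCM. Assumption~\ref{asm:dcm} (Expressive DCM) is what allows this transfer.

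\textbf{Step 1: the truth lies in the pairset.} Assume $\Yv \notin R^*$, so $R^* \subseteq \mathbf{V}\setminus\{\Yv\}$. By the problem statement, $\mathcal M_n^*$ and $\mathcal M_a^*$ share the exogenous distribution and the ADMG $G^*$. They differ only in the mechanisms indexed by $R^*$; in particular $f^{n}_{\Yv} = f^{a}_{\Yv}$. Every $V \in \mathbf{V}\setminus\{\Yv\}$ is therefore allowed to shift, including the trivial shift, and $\Yv$ stays invariant. Hence $(\mathcal M_n^*, \mathcal M_a^*) \in \widehat{M}_{na}(G^*, \mathbf{V}\setminus\{\Yv\})$. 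Plugging this pair into \eqref{eq:inf-loss} gives $d(P_n^*,P_n^*)+d(P_a^*,P_a^*)=0$, so $\ell^*(G^*,\mathbf{V}\setminus\{\Yv\})=0$. Equivalently, $R^*\in\PossRC(G^*)$, and by Lemma~\ref{lem:upclosure} so is $\mathbf{V}\setminus\{\Yv\}$. This matches the contrapositive of Lemma~\ref{lemm-ltoSolrc} with $\mathbf{Z}=\mathbf{V}$.

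\textbf{Step 2: pass from $\ell^*$ to the DCM loss $\ell$.} Here $\ell$ is the same infimum as $\ell^*$, taken over pairs of DCMs on $G^*$. The two DCMs share noise vectors $N_i$, with $N_i = N_j$ iff $V_i\leftrightarrow V_j$. They share parameters $\theta_{\Yv}$ for the frozen mechanism, and have free parameters $\theta^n_V,\theta^a_V$ for $V\neq \Yv$. Assumption~\ref{asm:dcm} says the DCM is representative enough to induce any distribution of any SCM entailing the ADMG. I would use the stronger, paired form of this: for every pair in $\widehat{M}_{na}(G^*, \mathbf{V}\setminus\{\Yv\})$, there is a DCM pair with shared noise and shared $\theta_{\Yv}$ that induces the same $(\widehat P_n,\widehat P_a)$, or approximates it arbitrarily well in $d$. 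With this, the DCM infimum is at most $\ell^*$, so $\ell(G^*,\mathbf{V}\setminus\{\Yv\})=0$, which contradicts $\ell>0$. Hence $\Yv\in R^*$.

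\textbf{Main obstacle.} Step 2 is the delicate part. The generic expressiveness statement is about a single SCM, but here two networks must realize the normal and anomalous distributions jointly. They have to share the same latent noise (latent invariance) and an identical network for $f_{\Yv}$. My first attempt would be a canonical-response-function construction. Each true $f^{n}_V(\mathrm{pa},U_V)$ and $f^a_V(\mathrm{pa},U_V)$ is reparameterized through a common sufficiently rich noise $N_V$, with the confounders absorbed into the shared components of $N$. Then universal approximation is applied mechanism by mechanism. Since the invariant mechanism $f_{\Yv}$ is literally the same function in both environments, a single approximant serves both. If exact realization is unavailable, I would rely on continuity of $d$ under the approximation to get an infimum of zero, and read Assumption~\ref{asm:dcm} as asserting exactly this paired expressiveness.
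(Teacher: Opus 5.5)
Your proof is correct. Its core idea is the paper's, but you reach it more directly.

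The paper argues through the solution set. By latent invariance, the true pair realizes $(P_n^*,P_a^*)$ on $\Gtrue\cup F(\Rtrue)$, so $\Rtrue\in\PossRC(\Gtrue)$. Lemma~\ref{lemm-ltoSolrc} with $\mathbf Z=\Rtrue$ then gives $\Rtrue\setminus\{\Yv\}\notin\PossRC(\Gtrue)$, which forces $\Yv\in\Rtrue$. That lemma in turn converts a realization on $\Gtrue\cup F(\V\setminus\{\Yv\})$ into an SCM pair by evaluating each $g_i$ at $F=0$ and $F=1$, and it uses upward closure (Lemma~\ref{lem:upclosure}).

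You bypass the augmented graph and place the true pair straight into $\widehat{M}_{na}(\Gtrue,\V\setminus\{\Yv\})$, which is all the theorem needs. The paper's detour buys a stronger by-product: $\ell^*>0$ puts $\Yv$ in \emph{every} member of $\PossRC(\Gtrue)$, i.e.\ in $\bigcap\PossRC(\Gtrue)$. That is what ties the per-variable test to the set $\widehat R=\bigcap\PossRC(\Gtrue)$ in Proposition~\ref{prop:sound-free}.

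Your Step~2 covers something the paper's proof does not. The paper never actually uses Assumption~\ref{asm:dcm}: it applies a lemma stated for $\ell^*$ directly to the hypothesis on $\ell$, tacitly identifying the two. Your diagnosis is accurate: this identification requires a \emph{paired} expressiveness, with the same noise law in both DCMs and a single network serving as $f_{\Yv}$ in both. That is strictly stronger than the single-SCM wording of the assumption. Turning mechanism-wise approximation into a small $d$ also needs some regularity, or an in-probability approximation argument along the topological order. So the cleanest way to close the step is, as you propose, to take the paired statement as the content of the assumption.
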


Intuitively, if there exists no SCM pairs that achieves $\ell^*(G^*,Y)=0$, the no solution set without $Y$ exists. $Y$ must belong to all solution set $\PossCn \in \PossRC$ and thus a true root cause. This reduces $2^{|\mathbf{V}|}$ bruteforce  iterations to $|\mathbf{V}|$ iterations.

\begin{restatable}[{Soundness}]{proposition}{propsoundfree}
\label{prop:sound-free}
Under Assumption~\ref{asm:dcm},
  $\widehat{R} \;=\; \bigcap \PossRC(\Gtrue) \;\subseteq\; \Rtrue$
\end{restatable}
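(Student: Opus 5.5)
The plan is to prove the inclusion $\bigcap \PossRC(\Gtrue) \subseteq \Rtrue$ directly from membership of the true root-cause set in the solution set. No SCM search argument is needed for this part. The equality $\widehat{R} = \bigcap \PossRC(\Gtrue)$ is treated as the identification of what \myalgo outputs under Assumption~\ref{asm:dcm}; that identification is addressed second.

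\textbf{Step 1: $\Rtrue$ is a solution.} I will show $\Rtrue \in \PossRC(\Gtrue, \Pn, \Pa)$. The data pair is generated by $(\mn, \ma)$ with common ADMG $\Gtrue$, and exactly the mechanisms of $\Rtrue$ differ. So I build a joint model on $\V \cup \{F\}$ with $F \sim \mathrm{Bernoulli}(1/2)$ independent of $\mathbf{U}$. For $V_i \in \Rtrue$ set $f_i(\mathrm{pa}_i, F, U_i) = F\, f_i^a(\mathrm{pa}_i,U_i) + (1-F)\, f_i^n(\mathrm{pa}_i,U_i)$; for all other $V_i$ keep $f_i^n = f_i^a$. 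This model lies in $\M(\Gtrue \cup F(\Rtrue))$ under the unrestricted soft-intervention semantics of Definition~\ref{def:aug}. Its conditionals given $F=0$ and $F=1$ are $\Pn$ and $\Pa$ respectively, so by Definition~\ref{def:realizable} the pair is realized by $(\Gtrue, \Rtrue)$. The one point to check is that the exogenous distribution is shared across regimes, which is the latent-invariance assumption, and that $F$ being independent of $\mathbf{U}$ keeps the latent structure unchanged.

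\textbf{Step 2: the intersection lies in $\Rtrue$.} Since $\Rtrue$ is one member of the family $\PossRC(\Gtrue)$, the intersection over the whole family is contained in it: $\bigcap \PossRC(\Gtrue) \subseteq \Rtrue$. The family is nonempty, so the intersection is well defined and not vacuously all of $\V$.

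\textbf{Step 3: identify $\widehat{R}$ with the intersection.} \myalgo puts $Y$ in $\widehat{R}$ exactly when $\ell(\Gtrue, \V\setminus\{Y\}) > 0$. The two directions are as follows.
\begin{itemize}
\item[(a)] Suppose $\ell^* > 0$. By Lemma~\ref{lemm-ltoSolrc}, no set $\mathbf{Z}\setminus\{Y\}$ lies in $\PossRC$. Hence every member of $\PossRC$ contains $Y$, so $Y$ lies in the intersection.
\item[(b)] Suppose $Y$ lies in every member. Then $\V\setminus\{Y\} \notin \PossRC$, so no SCM pair realizes $(\Pn,\Pa)$ with $f_Y$ frozen. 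The infimum in \eqref{eq:inf-loss} could still be $0$ without being attained, so I will need Assumption~\ref{asm:dcm}: the expressive DCM, plus attainment or closedness of the realizable class, gives $\ell > 0$. Alternatively, I can adopt the convention that $\ell^* = 0$ means exact realizability.
\end{itemize}
Upward closure (Lemma~\ref{lem:upclosure}) gives that $Y \notin \bigcap\PossRC$ if and only if some solution omits $Y$, if and only if $\V\setminus\{Y\} \in \PossRC$. This is what reduces the check to the $|\V|$ sets $\V\setminus\{Y\}$.

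The main obstacle is direction (b): the gap between an infimum of zero and exact realizability. I would handle it by invoking the expressiveness assumption as guaranteeing that the DCM loss is zero exactly when a realizing pair exists. The soundness inclusion itself, Steps 1 and 2, does not depend on this; it needs only Step 1, and Theorem~\ref{th-main:rc_sound} already gives $\widehat{R}\subseteq\Rtrue$ elementwise.
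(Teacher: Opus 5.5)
Your Steps 1 and 2 are exactly the paper's proof. The true pair, with the mechanisms of $\Rtrue$ switched on $F$, all other mechanisms $F$-invariant, and a shared exogenous distribution, shows $\Rtrue \in \PossRC(\Gtrue)$, and an intersection of a family is contained in each of its members. Your Step 3 goes beyond the paper, which simply takes $\widehat{R}$ to denote $\bigcap \PossRC(\Gtrue)$. There, direction (a) is sound via Lemma~\ref{lemm-ltoSolrc}, and you are right that direction (b) would need an attainment assumption on the infimum in \eqref{eq:inf-loss} that the paper never supplies, though the inclusion itself does not depend on it.
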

\begin{wrapfigure}{r}{0.50\linewidth}
\vspace{-8mm}
\begin{minipage}{1\linewidth}
\begin{algorithm}[H]
\small
\caption{(Input: $G$, $D_n \!\sim \!P_n(\mathbf{V})$, $D_a\! \sim \!P_a(\mathbf{V})$)}
\label{alg:dcm-rca}
\begin{algorithmic}[1]
\FOR{each $Y \in \mathbf{V}$ in parallel}
    \STATE Initialize $\mn$ and $\ma$.
    \label{init}
    \FOR{Run for N epochs}
    \STATE Use $\mn$ to generate samples from $\widehat{P}_n(\mathbf{V})$.
    \label{train1}
    \STATE Copy $f_Y$ from $\mn$ to $\ma$ and freeze it.
    \label{copy}
    \STATE Use $\ma$ to generate samples from $\widehat{P}_a(\mathbf{V})$.
    \label{train2}
    \STATE Backprop:
    $L =
    L_1(\hat{P}_n, P_n)
    +
    L_2(\hat{P}_a, P_a)$
    \STATE $score[Y] = L$
    \label{score}
    \ENDFOR
\ENDFOR
\STATE Sort $\textit{score}$ and return as $\textit{Rank}$.
\label{rank}
\end{algorithmic}
\end{algorithm}
\end{minipage}
\vspace{-5mm}
\end{wrapfigure}
Now, the optimization in Equation~\ref{eq:inf-loss}: SCM search is over a set with infinitely many members where each member is a SCM pair consistent with the causal graph $G^*$, allowing changes in all mechanisms except $Y$, can shift from normal $P_n$ to anomalous $P_a$ distribution. 
We provide our algorithm workflow for an example graph in Figure~\ref{fig:invalid-iv-instance} and our pseudo-code in Algorithm~\ref{alg:dcm-rca}.
For each variable, we can run the algorithm independently in parallel.
For any variable $Y\! \in\! \mathbf{V}$, we first initialize two deep causal models $(\widehat{\mathcal{M}}_n, \widehat{\mathcal{M}}_a)$ to represent the normal and anomaly SCM (Line~\ref{init}).
We use causal normalizing flows~\citep{javaloy2023causal} as the DCM
backbone.
In Line~\ref{train1},  $\widehat{\mathcal{M}}_n$ generates samples from the normal joint distribution $\hat{P}_n(\mathbf{V})$. Next, we copy 
the model $f_Y(.)$ 
from $\hat{\mathcal{M}}_n$ to $\hat{\mathcal{M}}_a$ (Line~\ref{copy}).
In Line~\ref{train2}, we employ $\hat{\mathcal{M}}_a$ to 
generate samples from anomaly joint distribution $\hat{P}_a(\mathbf{V})$.
We calculate $L$ by comparing 
the generated samples with normal and anomaly dataset
and backpropogate on $L$.
The losses $L_1$ and $L_2$ are estimated empirically using real samples from $D_n,D_a$ and generated samples from $\widehat{P}_n,\widehat{P}_a$.
We store $L$ in $\textit{score}$ and rank them in Line~\ref{rank} to obtain the nodes with highest mechanism shift at the top.
{We learn proxy to the structural function in the design deep causal models and arrange them according to the causal graph}. We execute Algorithm~\ref{alg:run-nf} to generate samples from $\mathcal{M}$. Finally, we use 
distance metric to compare the dissimilarity between learned and true distribution.

\textbf{Error analysis of the \myalgo{} ranking:}
Theorem~\ref{th-main:rc_sound} states that if $\ell(G^*, \mathbf{V}\setminus \{Y\}) > 0$, then $Y$ is a root cause. In practice, however, even with perfect model training, the finite sample size leads us to observe $\ell(\cdot) > 0$ for every variable.
In Appendix~\ref{appex:erro-anal}, we show that under a mild condition, every root cause is scored strictly above every non-root cause, so the $k = |R^*|$ root causes occupy the top $k$ positions of the ranking returned by \myalgo{}. 

\subsection{Theoretical Guarantees under Graph Mis-specification}
\label{sec:graph-mis}

In the previous section, we provided \myalgo to find root causes in a system containing unobserved confounders given the true causal graph (ADMG) as input. However, it assumes access to the true ADMG. In most real-world scenarios, it is difficult to find such a graph without domain knowledge. Thus, in this section, we analyze what \myalgo outputs when this assumption is violated and we have a sparser or denser graph compared to the true ADMG.
The proofs are provided in Appendix~\ref{sec:proof-mis}.

First, we answer an important question: can we obtain the true root cause set from an rca algorithm for any arbitrary mis-specified graph? We formally prove that it is only possible when the normal and anomalous distributions generated from the SCM with true grahp $G^*$ are realized by the mis-specified graph $G'$ as well.
Note that exploring arbitrary different graph $G'$ is outside the scope of this paper and we enlist this as a limitation of the current work.
Lemma~\ref{lem-main:finite} formalizes this.

\begin{lemma}\label{lem-main:finite}
Let $G^*,G'$ be the true and mis-specified graphs,  either by adding or by removing edges. Given $(\Pn, \Pa)$ realized by  $(G^*, R^*)$, we have
$\PossRC(G') \neq \emptyset $ iff  $\Pn, \Pa \in \M(G')$. 
\end{lemma}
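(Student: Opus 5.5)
The plan is to read the statement as ``$\PossRC(G') \neq \emptyset$ iff both $\Pn \in \M(G')$ and $\Pa \in \M(G')$.'' I will prove the two directions separately. Both work directly with the structural (functional) semantics behind $\M(\cdot)$ and Definition~\ref{def:aug}. The hypothesis that $(\Pn,\Pa)$ is realized by $(G^*,R^*)$ only guarantees that $\Pn$ and $\Pa$ are well-defined distributions over $\V$ with both regimes having positive probability. The mis-specification of $G'$, whether by adding or by removing edges, plays no role beyond this.

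\emph{($\Rightarrow$)} Suppose some $R \in \PossRC(G')$. By Definition~\ref{def:realizable} there is an SCM over $G' \cup F(R)$ whose joint $P$ satisfies $P(\V\mid F=0)=\Pn$ and $P(\V \mid F=1)=\Pa$. Since $F$ is a visible root, it is independent of every latent. Hence conditioning on $F=f$ is the same as substituting the constant $f$ into every mechanism $f_i(\mathrm{pa}_{G'}(V_i), F, U_i)$ for $V_i \in R$. The resulting mechanisms $g_i^{f}(\mathrm{pa}_{G'}(V_i),U_i)$ depend only on the $G'$-parents and on the same latent facets as before. This gives an SCM compatible with $G'$ that induces $P(\V\mid F=f)$. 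Therefore $\Pn, \Pa \in \M(G')$. The one point to state carefully is that independence of $F$ from $\mathcal U$ leaves the latent distribution unchanged after conditioning.

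\emph{($\Leftarrow$)} Suppose $\Pn,\Pa\in\M(G')$, witnessed by SCMs $(\{f^n_i\},P(U^n))$ and $(\{f^a_i\},P(U^a))$ on $G'$. I will show that $R=\V$ lies in $\PossRC(G')$. By Lemma~\ref{lem:upclosure} it would in fact suffice to exhibit any single member, and $\V$ is the natural one. The construction goes as follows.
\begin{itemize}
\item For each latent facet $U_k$ of $G'$, define a new latent $\tilde U_k=(U^n_k,U^a_k)$ with the product law $P(U^n_k)P(U^a_k)$. It feeds exactly the same observed nodes, so the latent structure of $G'$ is preserved and the latents remain mutually independent.
\item Define $\tilde f_i(\mathrm{pa}(V_i),F,\tilde U_i)=f^n_i(\mathrm{pa}(V_i),U^n_i)$ when $F=0$, and $f^a_i(\mathrm{pa}(V_i),U^a_i)$ when $F=1$. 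Each $V_i$ has the edge $F\to V_i$ in $G'\cup F(\V)$, so this is a legitimate mechanism.
\item Let $F$ be independent of all latents with $P(F=1)\in(0,1)$.
\end{itemize}
Conditioning on $F=0$ then makes every mechanism ignore the $U^a$ coordinates, so the induced law of $\V$ is exactly $\Pn$. Symmetrically, conditioning on $F=1$ gives $\Pa$. Hence $(\Pn,\Pa)$ is realized by $(G',\V)$.

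The main obstacle I expect is this product-of-latents step. I must check that pairing the two latent families facet by facet really stays within $\M(G')$ as defined, in particular that the pairing respects the mDAG facet structure, including confounders shared by more than two nodes. This is where the ``latents of arbitrary cardinality'' clause in the definition of $\M(G)$ is used. Acyclicity is inherited directly from $G'$ because $F$ is a root, so it needs no separate argument.
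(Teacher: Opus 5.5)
Your proof is correct and follows essentially the same route as the paper. For the forward direction, the paper fixes $F=0$ and $F=1$ to read off $\Pn$ and $\Pa$ as $G'$-models. For the converse, it builds an element of $\M(G' \cup F(\V))$ from the product latent $(L^n,L^a)\sim\nu_n\times\nu_a$ with every node targeted. The differences are minor. The paper first lifts an arbitrary $R\in\PossRC(G')$ to $\V$ via Lemma~\ref{lem:upclosure} before substituting $F$, while you substitute into any $R$ directly, so upward closure is not needed in either direction. You also make explicit the facet-by-facet pairing and the independence $F\indep\mathcal U$ that the paper leaves implicit.
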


\begin{corollary}
    $\rc(G) < \infty$ if and only if $\Pn \in \M(G)$ and $\Pa \in \M(G)$.
\end{corollary}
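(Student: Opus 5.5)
The corollary follows from Lemma~\ref{lem-main:finite} once we unwind Definition~\ref{def:possrc}. By definition, $\rc(G) < \infty$ holds exactly when $\PossRC(G, \Pn, \Pa) \neq \emptyset$. So we only need to show that $\PossRC(G) \neq \emptyset$ if and only if both $\Pn$ and $\Pa$ lie in $\M(G)$. Instantiating Lemma~\ref{lem-main:finite} with $G' = G$ gives this at once. One point needs care: that lemma is stated for $(\Pn,\Pa)$ realized by some $(\Gtrue,\Rtrue)$. This hypothesis is the standing data-generating assumption of the section, so it is available. Even so, I would also give a direct argument that does not depend on how $G$ relates to $\Gtrue$.

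\emph{($\Rightarrow$)} Suppose some $R \in \PossRC(G)$. Then there is a joint $P \in \M(G \cup F(R))$ with $P(\V \mid F=0) = \Pn$ and $P(\V \mid F=1) = \Pa$. Take the latent-variable model witnessing $P$ and fix $F = f$, where $f \in \{0,1\}$. Each $V_i \in R$ then has the mechanism $f_i(\cdot, f, \cdot)$, which is a function of its $G$-parents and noise only. The exogenous distribution is untouched, because $F$ is a visible root independent of the latents. The result is a model with graph $G$ whose induced distribution is $P(\V \mid F = f)$. Hence $\Pn, \Pa \in \M(G)$.

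\emph{($\Leftarrow$)} Suppose $\Pn, \Pa \in \M(G)$. Take $R = \V$. By Lemma~\ref{lem:upclosure}, or simply because $\V$ is the largest candidate, it suffices to realize the pair with $R = \V$. This step is the main obstacle. The two models witnessing $\Pn$ and $\Pa$ may use different latent distributions, while $G \cup F(\V)$ requires a single shared latent law for each facet.

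To handle this, I would merge the latents. For each latent facet, let the new latent be the product $(U^n, U^a)$, and do the same for each independent noise term. Set $f_i(\mathrm{pa}_i, F, \cdot)$ to apply the normal-model mechanism to the $U^n$-coordinates when $F=0$ and the anomalous mechanism to the $U^a$-coordinates when $F=1$. The merged latents remain mutually independent across facets and independent of $F$. The facet structure is unchanged, and arbitrary latent cardinality is permitted. Each $V_i$ now depends on $F$, which is allowed since $F \to V_i$ for all $i$.

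The resulting joint conditions to $\Pn$ at $F=0$ and to $\Pa$ at $F=1$. Give $F$ any marginal with full support. This shows $\V \in \PossRC(G)$, hence $\rc(G) \le |\V| < \infty$.
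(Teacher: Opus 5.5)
Your proposal is correct and follows the paper's route. The corollary is Lemma~\ref{lem-main:finite} read through the definition of $\rc$ (the minimum is finite exactly when $\PossRC(G)\neq\emptyset$), and your direct argument matches the paper's proof of that lemma: you substitute $F=f$ into the augmented model for the forward direction, and you take $R=\V$ with product latents $(U^n,U^a)$ switched by $F$ for the converse. The only difference is cosmetic: in the forward direction the paper first lifts to $R=\V$ via upward closure, and your direct substitution shows that step is unnecessary.
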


\begin{figure}[t]
\centering
\begin{minipage}[c]{0.48\textwidth}
\centering
\begin{subfigure}[b]{0.31\linewidth}
\centering
\begin{tikzpicture}
  \node[fnode] (F) at (-0.90, 0.00) {$F$};
  \node[cnode] (W) at ( 0.00, 0.00) {$W$};
  \node[cnode] (X) at (-0.52,-0.92) {$X$};
  \node[cnode] (Z) at ( 0.52,-0.92) {$Z$};
  \draw[fedge] (F) -- (W);
  \draw[fedge] (F) to[bend right=30] (X);
  \draw[dedge] (W) -- (X);
  \draw[dedge] (W) -- (Z);
  \draw[dedge] (X) -- (Z);
  \draw[bidir] (X) to[bend right=45] (Z);
\end{tikzpicture}
\caption{\scriptsize $G^*$ (true)}
\label{fig:mis-true-graph}
\end{subfigure}
\hfill
\begin{subfigure}[b]{0.31\linewidth}
\centering
\begin{tikzpicture}
  \node[fnode] (F) at (-0.90, 0.00) {$F$};
  \node[cnode] (W) at ( 0.00, 0.00) {$W$};
  \node[cnode] (X) at (-0.52,-0.92) {$X$};
  \node[cnode] (Z) at ( 0.52,-0.92) {$Z$};
  \draw[fedge] (F) -- (W);
  \draw[fedge] (F) to[bend right=30] (X);
  \draw[fedge] (F) to[out=68,in=38,looseness=1.35] (Z);
  \draw[dedge] (W) -- (X);
  \draw[dedge] (W) -- (Z);
  \draw[dedge] (X) -- (Z);
\end{tikzpicture}
\caption{\scriptsize $G_S$ (sparse)}
\label{fig:mis-sparse-graph}
\end{subfigure}
\hfill
\begin{subfigure}[b]{0.31\linewidth}
\centering
\begin{tikzpicture}
  \node[fnode] (F) at (-0.90, 0.00) {$F$};
  \node[cnode] (W) at ( 0.00, 0.00) {$W$};
  \node[cnode] (X) at (-0.52,-0.92) {$X$};
  \node[cnode] (Z) at ( 0.52,-0.92) {$Z$};
  \draw[fedge] (F) -- (W);
  \draw[dedge] (W) -- (X);
  \draw[dedge] (W) -- (Z);
  \draw[dedge] (X) -- (Z);
  \draw[bidir] (X) to[bend right=45] (Z);
  \draw[bidir] (W) to[bend left=38] (X);
\end{tikzpicture}
\caption{\scriptsize $G_D$ (dense)}
\label{fig:mis-dense-graph}
\end{subfigure}
\\[3pt]
\resizebox{0.8\linewidth}{!}{$\PossRC(G_S)\subseteq\PossRC(\Gtrue)\subseteq\PossRC(G_D)$}
\end{minipage}\hfill
\begin{minipage}[c]{0.48\textwidth}
\caption{\footnotesize
Augmented graphs of two misspecifications of the true graph, with
$G_S\subset G^*\subset G_D$. All $F$-edges are in \textcolor{Fclr}{red}.
While $R(G^*)=\{W,X\}$ is ground truth, we obtain the superset $\hat R(G_S)=\{W,X,Z\}$ for $G_S$, and the subset $\hat R(G_D)=\{W\}$ for $G_D$.}
\label{fig:graphs}
\end{minipage}
\end{figure}

Consider the sparse and dense misspecifications $G_S$ and $G_D$ of the 3-node true graph $G^*$ in Figure~\ref{fig:graphs}, and suppose both can realize the input pair $(P_n, P_a)$, so that each yields a non-empty $\PossRC$. Since the true root cause set is $R^*=\{W,X\}$, we observe shifts from $P_n$ to $P_a$ in $P(W)$, $P(X \!\mid \!W)$, and $P(Z \!\mid \!X,W)$, and $\PossRC(G^*)\!=\!\{\{W,X\},\{W,X,Z\}\}$.
Given the sparse graph $G_S$ (Figure~\ref{fig:mis-sparse-graph}), reproducing all three shifts requires shifting every mechanism $f_W, f_X, f_Z$, i.e., $F\!\rightarrow\!\{W,X,Z\}$, so $\PossRC(G_S)=\{\{W,X,Z\}\}$. Any proper subset, such as $\{W,X\}$, would imply $Z \indep F \mid \{W,X\}$ in $G_S$ and hence $P_n(Z \!\mid \!W,X)=P_a(Z \!\mid \!W,X)$, contradicting the observed shift.
Given the dense graph $G_D$ (Figure~\ref{fig:mis-dense-graph}), Lemma~\ref{lem:main-graphmono} gives $\PossRC(G^*)\subseteq\PossRC(G_D)$. Moreover, for some distributions, shifting $f_W$ alone reproduces all three shifts, so $\PossRC(G_D)\!=\!\{\{W\},\{W,X\},\{W,X,Z\}\}$.
Hence, $\PossRC(G_S)\!\subset\!\PossRC(G^*)\!\subset\!\PossRC(G_D)$: a denser graph can admit smaller root cause sets that are infeasible under the true graph, whereas a sparser graph can exclude the true root cause set and require a larger one.

Given a sparse graph $G_1$, we can add directed (causal relations) or bi-directed edges (latent confounders) to obtain a denser graph $G_2$ such that $G_1$ is
structurally dominated by $G_2$, written $G_1 \!\subseteq\! G_2$. By
Lemma~\ref{lem:struct-dom-obs-dom}, structural dominance implies that $G_2$
can realize any pair $(P_n, P_a)$ generated by $G_1$. Hence, rather than comparing the root causes obtained under a misspecified graph against those obtained under the true graph, we compare how they change between a sparser vs. a denser graph.

\begin{restatable}{lemma}{graphmono}\label{lem:main-graphmono}
If $G_1 \subseteq G_2$, then
$\PossRC(G_1) \subseteq \PossRC(G_2)$.
\end{restatable}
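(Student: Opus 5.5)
The plan is to reduce the claim to Lemma~\ref{lem:struct-dom-obs-dom} by showing that structural dominance is preserved when the same $F$-edges are added to both graphs. Fix any $R \in \PossRC(G_1)$. By Definition~\ref{def:realizable}, there is a joint $P \in \M\bigl(G_1 \cup F(R)\bigr)$ over $(\V, F)$ with $P(\V \mid F=0) = \Pn$, $P(\V \mid F=1) = \Pa$, and both regimes having positive probability. The aim is to show that this same $P$ also lies in $\M\bigl(G_2 \cup F(R)\bigr)$. Then $(\Pn,\Pa)$ is realized by $(G_2,R)$, which means $R \in \PossRC(G_2)$.

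\textbf{Step 1 (dominance of augmented graphs).} I will verify that $G_1 \subseteq G_2$ implies $G_1 \cup F(R) \subseteq G_2 \cup F(R)$ in the sense of Definition~\ref{def:str-dom}. By Definition~\ref{def:aug}, both augmented graphs have node set $\V \cup \{F\}$. The directed edges of $G_1 \cup F(R)$ are those of $G_1$, which lie in $G_2$ by assumption, together with the edges $\{F \to V_i : V_i \in R\}$, which are added identically to $G_2$. The node $F$ is a visible root carrying no latent facet. Hence every latent facet of $G_1 \cup F(R)$ is a facet of $G_1$, and so it appears in $G_2$ and therefore in $G_2 \cup F(R)$.

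\textbf{Step 2 (invoke Lemma~\ref{lem:struct-dom-obs-dom}).} Since the two augmented mDAGs share the same node set, Lemma~\ref{lem:struct-dom-obs-dom} gives $\M\bigl(G_1 \cup F(R)\bigr) \subseteq \M\bigl(G_2 \cup F(R)\bigr)$. Therefore $P \in \M\bigl(G_2 \cup F(R)\bigr)$. Its conditionals on $F$ and its $F$-marginal are unchanged, because it is literally the same joint distribution. This completes the inclusion.

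\textbf{Main obstacle.} The point needing care is the latent-facet part of Step 1. Structural dominance for mDAGs requires every facet of the smaller graph to be \emph{contained in} some facet of the larger one. I must check that adding the visible root $F$ neither creates nor merges latent facets, so that the facet structure is just that of $G_i$ restricted to $\V$. Alternatively, I can argue directly with the SCM: take an SCM realizing $P$ on $G_1 \cup F(R)$ and reinterpret it on $G_2 \cup F(R)$. Each mechanism simply ignores the extra parents, and each latent of $G_1$ is fed to the variables of some containing facet of $G_2$, with the superfluous latents set constant. This is exactly the construction behind Lemma~\ref{lem:struct-dom-obs-dom}, and it shows the $F$-edges pass through unchanged.
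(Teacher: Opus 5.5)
Your proposal is correct and matches the paper's proof: the paper also notes that $G_1 \cup F(R)$ and $G_2 \cup F(R)$ carry identical $F$-edges, so $G_1 \subseteq G_2$ lifts to structural dominance of the augmented graphs, and then invokes Lemma~\ref{lem:struct-dom-obs-dom} to place the same joint $P(\V, F)$ in $\M\bigl(G_2 \cup F(R)\bigr)$. The only difference is that the paper routes the first and last steps through Lemma~\ref{lem:finite}, whereas you argue directly from Definition~\ref{def:realizable}; your route is cleaner, since Lemma~\ref{lem:finite} on its own only yields $\PossRC(G_2) \neq \emptyset$ rather than $R \in \PossRC(G_2)$.
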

\begin{restatable}{lemma}{master}\label{lem:master}
If $G_1 \subseteq G_2$, then $\rc(G_2, P_n, P_a) \le \rc(G_1, P_n, P_a)$.
\end{restatable}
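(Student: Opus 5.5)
The plan is to derive Lemma~\ref{lem:master} directly from Lemma~\ref{lem:main-graphmono} together with the definition of $\rc$ as a minimum over $\PossRC$ (Definition~\ref{def:possrc}). The claim is a monotonicity statement: minimizing $|R|$ over a larger family of feasible sets can only lower the minimum. I would therefore reduce everything to the set inclusion $\PossRC(G_1, P_n, P_a) \subseteq \PossRC(G_2, P_n, P_a)$ and then compare the minima.

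First I would handle the degenerate case. If $\PossRC(G_1, P_n, P_a) = \emptyset$, then $\rc(G_1, P_n, P_a) = \infty$ by convention. Since every value of $\rc(G_2, P_n, P_a)$ lies in $\{0,\dots,n\}\cup\{\infty\}$, the inequality $\rc(G_2) \le \infty$ holds trivially.

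Otherwise, pick $R_1 \in \PossRC(G_1)$ attaining the minimum, so $|R_1| = \rc(G_1)$. Such an $R_1$ exists because the family is a finite nonempty collection of subsets of $\V$. By Lemma~\ref{lem:main-graphmono}, $R_1 \in \PossRC(G_2)$, so $\PossRC(G_2)$ is nonempty. Taking the minimum then gives $\rc(G_2) = \min_{R\in\PossRC(G_2)}|R| \le |R_1| = \rc(G_1)$.

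The only substantive ingredient is Lemma~\ref{lem:main-graphmono}, which I take as given since it is stated earlier. If one wanted to re-justify it, the step is to observe that $G_1 \subseteq G_2$ implies $G_1 \cup F(R) \subseteq G_2 \cup F(R)$ structurally, because adding the same $F$-edges to both graphs preserves dominance. Lemma~\ref{lem:struct-dom-obs-dom} then gives $\M(G_1\cup F(R)) \subseteq \M(G_2\cup F(R))$, so any joint $P$ witnessing realizability of $(P_n,P_a)$ by $(G_1,R)$ also witnesses realizability by $(G_2,R)$. The main point of care is this augmentation step: the dominance must be checked on the augmented mDAGs over $\V\cup\{F\}$, not only on $G_1$ and $G_2$. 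With that in hand, the rest of the argument is immediate.
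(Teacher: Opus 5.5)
Your proof is correct and follows the paper's argument. Lemma~\ref{lem:main-graphmono} gives $\PossRC(G_1)\subseteq\PossRC(G_2)$, and minimizing $|R|$ over the larger family, with the convention $\min\emptyset=\infty$, can only lower the value. Your separate handling of the empty case and your sketch of the dominance step on the augmented graphs $G_1\cup F(R)\subseteq G_2\cup F(R)$ are both sound; the latter is a more direct justification of Lemma~\ref{lem:main-graphmono} than the paper's route through Lemma~\ref{lem:finite}.
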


\section{Experimental Results}
We evaluate \myalgo{} on synthetic, semi-synthetic, and real-world datasets. We compare it against five representative non-causal and causal RCA approaches based on statistical hypothesis testing (BARO~\citep{pham2024baro}), $z$-score-based statistical analysis (NSigma~\citep{li2022causal}), regression-based hypothesis testing (CIRCA~\citep{li2022circa}), causal discovery (RCD~\citep{ikram2022rcd}), and intervention-aware causal inference (RCG~\citep{ikram2025rcg}). We adopt their implementation details and hyperparameters from the RCAEval repository~\citep{pham2025rcaeval}. Code: \url{https://github.com/Musfiqshohan/RCA-DCM}.
\textbf{Metrics:} To evaluate the output ranking, we use 
$T@k \;=\; \frac{1}{|\mathbf{E}|}\sum_{e \in \mathbf{E}}
\frac{\left|\hat{R}_e[1,..,k] \cap R^{*}_{e}\right|}
{\min\!\left(k,\, |R^{*}_{e}|\right)}$, i.e., how many of the true root causes lie in the top $k$, and 
$\mathrm{PRR} \;=\; \frac{1}{|\mathbf{E}|}\sum_{e \in \mathbf{E}}
\mathbf{1}\!\left[\,T@|R^{*}_{e}| = 1\,\right]$, i.e., in how many cases all true root causes are at the top (i.e., the perfect recovery rate).
We provide more experimental details, and discuss the availability of the causal graph (Appendix~\ref{app:train}), sample-size requirements (Appendix~\ref{sec:sample-sensitivity}), and runtime (Appendix~\ref{app:complexity}).

\subsection{Simulated Datasets}
\label{exp:sim-dat}

We follow \citet{yang2024learning} to generate synthetic normal, anomalous data from randomly generated graphs, varying the total
variable count, latent proportion, and edge density; full details %
in Appendix~\ref{app:sim-dat}.

\textbf{Exp 1: Nonlinear model with unobserved confounders:}
We use graphs with $p = 6$ observed and $m = 4$ latent variables (total $n=10$) and vary the confounding strength $\lambda \in \{3, 10\}$, which uniformly scales all latent-to-observed coefficients. We take $\lfloor p/2 \rfloor = 3$ observed variables as ground-truth root causes. 
\textbf{Observation:}
Figure~\ref{fig:latent_str}(a) reports PRR for $\lambda \in \{3, 10\}$. \myalgo{} attains $99\%$ and $84\%$, ahead of every baseline in both settings (per-baseline values are listed in Appendix~\ref{app:sim-dat}). All methods degrade as confounding strengthens: \myalgo{}, starting at $99\%$, and RCD, starting at $74\%$, both drop by $15$ percentage points, while BARO, NSigma, CIRCA, and RCG drop by $26$, $24$, $24$, and $46$ points, respectively. This suggests that \myalgo{} is more robust to strong confounding and better preserves root-cause recovery than the baselines. 

\begin{figure}[t]
     \vspace{-9mm}
    \hspace{-10mm}
        \centering
       \includegraphics[width=0.8\linewidth]{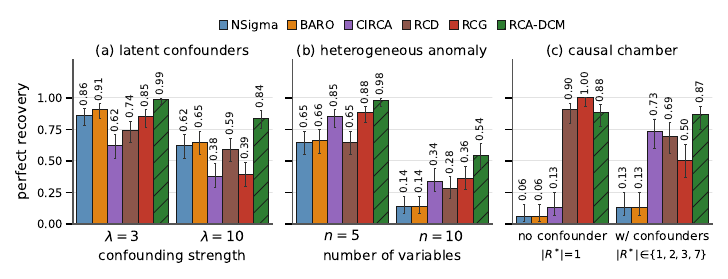}
    \caption{Perfect recovery rate (PRR) with $95\%$ confidence intervals.
(a) increasing latent confounding, (b) heterogeneous anomalies where a non-cause can
out-shift a true cause, (c) real causal-chamber data with the root cause observed
vs.\ hidden. \myalgo{} remains %
strong across all three settings; comparisons
use a paired test, so overlapping intervals do not imply the absence of a difference.
    }
    \label{fig:latent_str}
    \vspace{-3mm}
\end{figure}

\textbf{Exp 2: Nonlinear model with heterogeneous anomalies:}
As discussed in Section~1, baselines that rank variables by marginal shift fail when a non-root-cause descendant out-shifts a root cause (heterogeneous anomalies).
We construct such heterogeneous anomalies by scaling the injected shift magnitude linearly with topological depth, so that shifts accumulate along directed paths and a downstream descendant out-shifts a genuine upstream root cause. In the generated data, this trap occurs (i.e., some non-root cause exhibits a larger marginal shift than some true root cause) in roughly $60\%$ of trials at $p=10$. We vary the number of observed variables $n \in \{5, 10\}$ ($n=p$ as $m=0$). 
\textbf{Observation:}
Figure~\ref{fig:latent_str}(b) shows PRR as the system grows. \myalgo{} achieves the highest rate at both sizes, $98\%$ at $n=5$ and $54\%$ at $n=10$, outperforming all five baselines in both settings. The marginal-shift methods (BARO and NSigma) degrade the most, falling to $14\%$ at $n=10$; this is the failure mode the construction targets, since it makes ranking by shift magnitude wrong in most trials. The graph-based methods (RCG $36\%$, CIRCA $34\%$, RCD $28\%$) also degrade as the system grows. These results suggest that \myalgo{} is more robust to increasing system complexity, where the effects of multiple root causes can compound.

\textbf{Exp 3: Robustness to graph misspecification.}
Perturbing \emph{only} the graph supplied to \myalgo{} on the $100$ datasets with $\lambda = 10$ from \textbf{Exp 1} (true-graph control: PRR of $88\%$ and $84\%$ in two independent runs), adding $50\%$ spurious directed and bidirected edges (PRR $= 84\%$, $p = 0.39$) or deleting $50\%$ of the bidirected edges (PRR $= 87\%$, $p = 1.00$) does not change accuracy significantly under a paired McNemar test; deleting $50\%$ of both directed and bidirected edges degrades it ($76\%$, $p = 0.019$), which is still above every baseline at $\lambda = 10$ (best: BARO, $65\%$).
These results are consistent with Section~\ref{sec:graph-mis} (see Appendix~\ref{app:sim-dat}).

\subsection{Causal Chamber (Real-World Physical Testbed)}
\label{exp:chamber}

\textbf{Exp 4:} We evaluate \myalgo{} on the Causal Chambers benchmark~\citep{gamella2025causal}, a real-world physical testbed. We use the light-tunnel chamber in its standard configuration, which comes with a ground-truth DAG over $38$ variables and $57$ edges. This yields $52$ cases, which we evaluate in two conditions: with the true root cause \emph{observed}, and with it \emph{hidden}, so that it acts as a latent confounder among its former children. In the observed condition, $|R^*| = 1$; in the hidden condition, the hidden cause has between $1$ and $7$ children, all of which must be recovered as root causes. Details are in Appendix~\ref{app:chamber}.
\textbf{Observation:}
Figure~\ref{fig:latent_str}(c) reports PRR in both conditions. When the root cause is observed, there is no confounding and every case has a single root cause; here, \myalgo{} is competitive but does not lead ($88\%$, against $100\%$ for RCG and $90\%$ for RCD). Under this harder setting, the ordering reverses: \myalgo{} attains $87\%$, against $73\%$ for the best baseline, and it is the only method whose performance barely changes between conditions ($-2$ points, against $-50$ for RCG and $-21$ for RCD). The gap is largest on the cases with multiple root causes: on these cases alone, \myalgo{} attains a PRR of $63\%$, while CIRCA reaches $25\%$ and RCG $13\%$.

\subsection{Microservice Datasets}
\label{exp:micro}
\begin{table}[t]
\centering
\setlength{\tabcolsep}{4pt}
\caption{Average Top-$k$ accuracy over the five fault types (best in bold).}
\label{tab:avg-topk}
\footnotesize
\begin{tabular}{@{}l ccc ccc@{}}
\toprule
& \multicolumn{3}{c}{Sock Shop} & \multicolumn{3}{c}{Online Boutique} \\
\cmidrule(lr){2-4}\cmidrule(l){5-7}
Method & T1 & T3 & T5 & T1 & T3 & T5 \\
\midrule
\textbf{DCM} & \textbf{0.88} & \textbf{0.99} & \textbf{0.99} & \textbf{0.78} & 0.90 & \textbf{0.98} \\
NSigma & 0.75 & 0.97 & \textbf{0.99} & 0.70 & \textbf{0.91} & 0.96 \\
BARO   & 0.74 & 0.98 & \textbf{0.99} & 0.62 & 0.90 & 0.94 \\
CIRCA  & 0.65 & 0.96 & \textbf{0.99} & 0.52 & 0.89 & \textbf{0.98} \\
RCD    & 0.38 & 0.54 & 0.58 & 0.49 & 0.59 & 0.62 \\
RCG    & 0.54 & 0.67 & 0.70 & 0.71 & 0.83 & 0.85 \\
\bottomrule
\end{tabular}
\end{table}

\textbf{Exp 5:}
We evaluate \myalgo{} on SockShop, a cloud computing dataset recorded from a microservice-based replica of a web application specifically designed for evaluating root cause analysis methods~\citep{pham2024root}. It contains $125$ anomaly datasets covering five fault types injected into five services. We assume that \emph{no} causal structure is available to \myalgo{}: instead of a call graph, we supply a \emph{confounded sink}, in which every recorded service points to the service under test and a single shared latent confounds all of them (the justification for this choice is given in Appendix~\ref{app:micro}). This places \myalgo{} at a disadvantage relative to two of the baselines: CIRCA and RCG are given the true edge-inverted call graph, while NSigma, BARO, and RCD do not use a graph at all.

\textbf{Exp 6:}
We repeat the evaluation on Online Boutique~\citep{pham2024root}, a second microservice benchmark with the same five fault types, each injected five times, again yielding $125$ anomaly datasets, but over $11$ services and with substantially longer traces (roughly $2100$ normal and $2100$ anomalous observations per case). The graph treatment is identical to Exp 5: \myalgo{} receives only the confounded sink, while CIRCA and RCG receive the true call graph.

\textbf{Results:}
Table~\ref{tab:avg-topk} reports the average top-$k$ accuracy (per-fault results are in Table~\ref{tab:per-fault-topk}). On SockShop, \myalgo{} attains the highest average top-$1$ accuracy, $0.88$, ahead of NSigma ($0.75$), BARO ($0.74$), CIRCA ($0.65$), RCG ($0.54$), and RCD ($0.38$); every gap is significant under a paired test ($p<0.001$). It is best or tied for best on all five fault types, and its errors are near-misses rather than failures: its average top-$3$ accuracy reaches $0.99$. On Online Boutique, \myalgo{} again ranks first, at $0.78$, ahead of RCG ($0.71$), NSigma ($0.70$), BARO ($0.62$), CIRCA ($0.52$), and RCD ($0.49$), with an average top-$3$ accuracy of $0.90$; the margins over BARO, CIRCA, and RCD are significant, while those over NSigma and RCG are not at this sample size. Further per-benchmark observations are discussed in Appendix~\ref{app:micro}.
Notably, \myalgo{} attains these results without any causal structure, while the two graph-based baselines are given the true call graph. Together with the %
results in Section~\ref{exp:sim-dat}, this supports the view that the advantage of \myalgo{} does not depend on access to an accurate structure.

\section{Conclusion}
In this paper, we address the problem of root cause analysis in presence of any number of unobserved confounders.
For that purpose, we propose a sound approach that can
detect the root causes in parallel by searching for feasible structural causal models. Finally, we demonstrate our performance on synthetic dataset and real-world physical testbed. In our future work, we aim to relax the assumption on partial order and make it suitable for high-dimensional variables.

\subsection*{Acknowledgments}
This research has been supported in part by NSF CAREER 2239375, IIS 2348717, Amazon Research Award, Adobe Research and Intuit.

\bibliography{ref}

@article{kiciman2024causal,
  title   = {Causal Reasoning and Large Language Models: Opening a New Frontier for Causality},
  author  = {K{\i}c{\i}man, Emre and Ness, Robert and Sharma, Amit and Tan, Chenhao},
  journal = {Transactions on Machine Learning Research},
  year    = {2024}
}

@inproceedings{budhathoki2022causal,
  title={Causal structure-based root cause analysis of outliers},
  author={Budhathoki, Kailash and Minorics, Lenon and Bl{\"o}baum, Patrick and Janzing, Dominik},
  booktitle={International conference on machine learning},
  pages={2357--2369},
  year={2022},
  organization={PMLR}
}

@inproceedings{jiralerspong2024efficient,
  title     = {Efficient Causal Graph Discovery Using Large Language Models},
  author    = {Jiralerspong, Thomas and Chen, Xiaoyin and More, Yash and Shah, Vedant and Bengio, Yoshua},
  booktitle = {ICLR 2024 Workshop on How Far Are We From AGI},
  year      = {2024},
  note      = {arXiv:2402.01207}
}

@article{long2023causal,
  title   = {Causal Discovery with Language Models as Imperfect Experts},
  author  = {Long, Stephanie and Pich{\'e}, Alexandre and Zantedeschi, Valentina and Schuster, Tibor and Drouin, Alexandre},
  journal = {arXiv preprint arXiv:2307.02390},
  year    = {2023}
}

@article{evans2023latentfree,
  title   = {Latent-free equivalent {mDAGs}},
  author  = {Evans, Robin J.},
  journal = {Algebraic Statistics},
  volume  = {14},
  number  = {1},
  pages   = {3--16},
  year    = {2023}
}

@inproceedings{verma1990equivalence,
  title     = {Equivalence and synthesis of causal models},
  author    = {Verma, Thomas S. and Pearl, Judea},
  booktitle = {Proceedings of the Sixth Conference on Uncertainty in Artificial Intelligence (UAI)},
  pages     = {255--270},
  year      = {1990}
}

@article{richardson2023nested,
  title   = {Nested {M}arkov properties for acyclic directed mixed graphs},
  author  = {Richardson, Thomas S. and Evans, Robin J. and Robins, James M. and Shpitser, Ilya},
  journal = {The Annals of Statistics},
  volume  = {51},
  number  = {1},
  pages   = {334--361},
  year    = {2023}
}

@inproceedings{pearl1995testability,
  title     = {On the testability of causal models with latent and instrumental variables},
  author    = {Pearl, Judea},
  booktitle = {Proceedings of the Eleventh Conference on Uncertainty in Artificial Intelligence (UAI)},
  pages     = {435--443},
  year      = {1995}
}

@article{bell1964epr,
  title   = {On the {E}instein {P}odolsky {R}osen paradox},
  author  = {Bell, John S.},
  journal = {Physics Physique Fizika},
  volume  = {1},
  number  = {3},
  pages   = {195--200},
  year    = {1964}
}

@article{zhang2025agent,
  title={Which agent causes task failures and when? on automated failure attribution of llm multi-agent systems},
  author={Zhang, Shaokun and Yin, Ming and Zhang, Jieyu and Liu, Jiale and Han, Zhiguang and Zhang, Jingyang and Li, Beibin and Wang, Chi and Wang, Huazheng and Chen, Yiran and others},
  journal={arXiv preprint arXiv:2505.00212},
  year={2025}
}

@inproceedings{evans2012graphical,
  title={Graphical methods for inequality constraints in marginalized DAGs},
  author={Evans, Robin J},
  booktitle={2012 IEEE International Workshop on Machine Learning for Signal Processing},
  pages={1--6},
  year={2012},
  organization={IEEE}
}

@inproceedings{chen2014causeinfer,
  title={Causeinfer: Automatic and distributed performance diagnosis with hierarchical causality graph in large distributed systems},
  author={Chen, Pengfei and Qi, Yong and Zheng, Pengfei and Hou, Di},
  booktitle={IEEE INFOCOM 2014-IEEE Conference on Computer Communications},
  pages={1887--1895},
  year={2014},
  organization={IEEE}
}

@inproceedings{ma2019ms,
  title={Ms-rank: Multi-metric and self-adaptive root cause diagnosis for microservice applications},
  author={Ma, Meng and Lin, Weilan and Pan, Disheng and Wang, Ping},
  booktitle={2019 IEEE International Conference on Web Services (ICWS)},
  pages={60--67},
  year={2019},
  organization={IEEE}
}

@inproceedings{meng2020localizing,
  title={Localizing failure root causes in a microservice through causality inference},
  author={Meng, Yuan and Zhang, Shenglin and Sun, Yongqian and Zhang, Ruru and Hu, Zhilong and Zhang, Yiyin and Jia, Chenyang and Wang, Zhaogang and Pei, Dan},
  booktitle={2020 IEEE/ACM 28th International Symposium on Quality of Service (IWQoS)},
  pages={1--10},
  year={2020},
  organization={IEEE}
}

@inproceedings{wang2018cloudranger,
  title={Cloudranger: Root cause identification for cloud native systems},
  author={Wang, Ping and Xu, Jingmin and Ma, Meng and Lin, Weilan and Pan, Disheng and Wang, Yuan and Chen, Pengfei},
  booktitle={2018 18th IEEE/ACM International Symposium on Cluster, Cloud and Grid Computing (CCGRID)},
  pages={492--502},
  year={2018},
  organization={IEEE}
}

@inproceedings{li2022causal,
  title={Causal inference-based root cause analysis for online service systems with intervention recognition},
  author={Li, Mingjie and Li, Zeyan and Yin, Kanglin and Nie, Xiaohui and Zhang, Wenchi and Sui, Kaixin and Pei, Dan},
  booktitle={Proceedings of the 28th ACM SIGKDD Conference on Knowledge Discovery and Data Mining},
  pages={3230--3240},
  year={2022}
}

@article{ikram2022root,
  title={Root cause analysis of failures in microservices through causal discovery},
  author={Ikram, Azam and Chakraborty, Sarthak and Mitra, Subrata and Saini, Shiv and Bagchi, Saurabh and Kocaoglu, Murat},
  journal={Advances in Neural Information Processing Systems},
  volume={35},
  pages={31158--31170},
  year={2022}
}

@inproceedings{pham2024root,
  title={Root cause analysis for microservice system based on causal inference: How far are we?},
  author={Pham, Luan and Ha, Huong and Zhang, Hongyu},
  booktitle={Proceedings of the 39th IEEE/ACM International Conference on Automated Software Engineering},
  pages={706--715},
  year={2024}
}

@article{pham2024baro,
  title={Baro: Robust root cause analysis for microservices via multivariate bayesian online change point detection},
  author={Pham, Luan and Ha, Huong and Zhang, Hongyu},
  journal={Proceedings of the ACM on Software Engineering},
  volume={1},
  number={FSE},
  pages={2214--2237},
  year={2024},
  publisher={ACM New York, NY, USA}
}

@article{xin2023causalrca,
  title={Causalrca: Causal inference based precise fine-grained root cause localization for microservice applications},
  author={Xin, Ruyue and Chen, Peng and Zhao, Zhiming},
  journal={Journal of Systems and Software},
  volume={203},
  pages={111724},
  year={2023},
  publisher={Elsevier}
}

@inproceedings{kocaoglu2018causalgan,
  title={CausalGAN: Learning Causal Implicit Generative Models with Adversarial Training},
  author={Kocaoglu, Murat and Snyder, Christopher and Dimakis, Alexandros G and Vishwanath, Sriram},
  booktitle={International Conference on Learning Representations},
  year={2018}
}

@article{xia2021causal,
  title={The causal-neural connection: Expressiveness, learnability, and inference},
  author={Xia, Kevin and Lee, Kai-Zhan and Bengio, Yoshua and Bareinboim, Elias},
  journal={Advances in Neural Information Processing Systems},
  volume={34},
  pages={10823--10836},
  year={2021}
}

@inproceedings{
rahman2024modular,
title={Modular Learning of Deep Causal Generative Models for High-dimensional Causal Inference},
author={Md Musfiqur Rahman and Murat Kocaoglu},
booktitle={Forty-first International Conference on Machine Learning},
year={2024},
url={https://openreview.net/forum?id=bOhzU7NpTB}
}

@inproceedings{zheng2024mulan,
  title={MULAN: multi-modal causal structure learning and root cause analysis for microservice systems},
  author={Zheng, Lecheng and Chen, Zhengzhang and He, Jingrui and Chen, Haifeng},
  booktitle={Proceedings of the ACM Web Conference 2024},
  pages={4107--4116},
  year={2024}
}

@inproceedings{yao2024chain,
  title={Chain-of-event: Interpretable root cause analysis for microservices through automatically learning weighted event causal graph},
  author={Yao, Zhenhe and Pei, Changhua and Chen, Wenxiao and Wang, Hanzhang and Su, Liangfei and Jiang, Huai and Xie, Zhe and Nie, Xiaohui and Pei, Dan},
  booktitle={Companion Proceedings of the 32nd ACM International Conference on the Foundations of Software Engineering},
  pages={50--61},
  year={2024}
}

@article{okati2024root,
  title={Root cause analysis of outliers with missing structural knowledge},
  author={Okati, Nastaran and Hernan Garrido Mejia, Sergio and Orchard, William Roy and Bl{\"o}baum, Patrick and Janzing, Dominik},
  journal={arXiv e-prints},
  pages={arXiv--2406},
  year={2024}
}

@article{bai2024causal,
  title={The causal effects of global supply chain disruptions on macroeconomic outcomes: evidence and theory},
  author={Bai, Xiwen and Fern{\'a}ndez-Villaverde, Jes{\'u}s and Li, Yiliang and Zanetti, Francesco},
  year={2024},
  publisher={National Bureau of Economic Research}
}

@article{inoue2021new,
  title={A new approach to measuring economic policy shocks, with an application to conventional and unconventional monetary policy},
  author={Inoue, Atsushi and Rossi, Barbara},
  journal={Quantitative Economics},
  volume={12},
  number={4},
  pages={1085--1138},
  year={2021},
  publisher={Wiley Online Library}
}

@article{strobl2023identifying,
  title   = {Identifying Patient-Specific Root Causes with the Heteroscedastic Noise Model},
  author  = {Strobl, Eric V. and Lasko, Thomas A.},
  journal = {Journal of Computational Science},
  volume  = {72},
  pages   = {102099},
  year    = {2023}
}

@article{strobl2024counterfactual,
  title   = {Counterfactual Formulation of Patient-Specific Root Causes of Disease},
  author  = {Strobl, Eric V.},
  journal = {Journal of Biomedical Informatics},
  pages   = {104585},
  year    = {2024}
}

@article{vanhoeyveld2020vat,
  title   = {Value-Added Tax Fraud Detection with Scalable Anomaly Detection Techniques},
  author  = {Vanhoeyveld, Jellis and Martens, David and Peeters, Bruno},
  journal = {Applied Soft Computing},
  volume  = {86},
  pages   = {105895},
  year    = {2020}
}

@article{das2023algorithmic,
  title   = {Algorithmic Fairness},
  author  = {Das, Sanjiv and Stanton, Richard and Wallace, Nancy},
  journal = {Annual Review of Financial Economics},
  volume  = {15},
  pages   = {565--593},
  year    = {2023}
}

@article{wang2023hierarchical,
  title   = {Hierarchical Graph Neural Networks for Causal Discovery and Root Cause Localization},
  author  = {Wang, Dongjie and Chen, Zhengzhang and Ni, Jingchao and Tong, Liang and Wang, Zheng and Fu, Yanjie and Chen, Haifeng},
  journal = {arXiv preprint arXiv:2302.01987},
  year    = {2023}
}

@article{lin2024root,
  title   = {Root Cause Analysis in Microservice Using Neural Granger Causal Discovery},
  author  = {Lin, Cheng-Ming and Chang, Ching and Wang, Wei-Yao and Wang, Kuang-Da and Peng, Wen-Chih},
  journal = {arXiv preprint arXiv:2402.01140},
  year    = {2024}
}

@inproceedings{liu2021microhecl,
  title     = {Microhecl: High-Efficient Root Cause Localization in Large-Scale Microservice Systems},
  author    = {Liu, Dewei and He, Chuan and Peng, Xin and Lin, Fan and Zhang, Chenxi and Gong, Shengfang and Li, Ziang and Ou, Jiayu and Wu, Zheshun},
  booktitle = {Proceedings of the 43rd International Conference on Software Engineering: Software Engineering in Practice},
  series    = {ICSE-SEIP '21},
  pages     = {338--347},
  publisher = {IEEE Press},
  year      = {2021}
}

@inproceedings{shan2019epsilon,
  title     = {$\epsilon$-Diagnosis: Unsupervised and Real-Time Diagnosis of Small-Window Long-Tail Latency in Large-Scale Microservice Platforms},
  author    = {Shan, Huasong and Chen, Yuan and Liu, Haifeng and Zhang, Yunpeng and Xiao, Xiao and He, Xiaofeng and Li, Min and Ding, Wei},
  booktitle = {The World Wide Web Conference},
  pages     = {3215--3222},
  year      = {2019}
}

@inproceedings{ma2020automap,
  title     = {Automap: Diagnose Your Microservice-Based Web Applications Automatically},
  author    = {Ma, Meng and Xu, Jingmin and Wang, Yuan and Chen, Pengfei and Zhang, Zonghua and Wang, Ping},
  booktitle = {Proceedings of The Web Conference 2020},
  pages     = {246--258},
  year      = {2020}
}

@inproceedings{knorr1999finding,
  title     = {Finding Intensional Knowledge of Distance-Based Outliers},
  author    = {Knorr, Edwin M. and Ng, Raymond T.},
  booktitle = {VLDB},
  volume    = {99},
  pages     = {211--222},
  year      = {1999}
}

@inproceedings{micenkova2013explaining,
  title     = {Explaining Outliers by Subspace Separability},
  author    = {Micenkov{\'a}, Barbora and Ng, Raymond T. and Dang, Xuan-Hong and Assent, Ira},
  booktitle = {2013 IEEE 13th International Conference on Data Mining},
  pages     = {518--527},
  publisher = {IEEE},
  year      = {2013}
}

@article{liu2017contextual,
  title   = {Contextual Outlier Interpretation},
  author  = {Liu, Ninghao and Shin, Donghwa and Hu, Xia},
  journal = {arXiv preprint arXiv:1711.10589},
  year    = {2017}
}

@article{macha2018explaining,
  title   = {Explaining Anomalies in Groups with Characterizing Subspace Rules},
  author  = {Macha, Meghanath and Akoglu, Leman},
  journal = {Data Mining and Knowledge Discovery},
  volume  = {32},
  pages   = {1444--1480},
  year    = {2018}
}

@inproceedings{gupta2019beyond,
  title     = {Beyond Outlier Detection: LookOut for Pictorial Explanation},
  author    = {Gupta, Nikhil and Eswaran, Dhivya and Shah, Neil and Akoglu, Leman and Faloutsos, Christos},
  booktitle = {Machine Learning and Knowledge Discovery in Databases: European Conference, ECML PKDD 2018, Dublin, Ireland, September 10--14, 2018, Proceedings, Part I},
  pages     = {122--138},
  publisher = {Springer},
  year      = {2019}
}

@article{jaber2020causal,
  title={Causal discovery from soft interventions with unknown targets: Characterization and learning},
  author={Jaber, Amin and Kocaoglu, Murat and Shanmugam, Karthikeyan and Bareinboim, Elias},
  journal={Advances in neural information processing systems},
  volume={33},
  pages={9551--9561},
  year={2020}
}

@inproceedings{ikram2025root,
  title={Root cause analysis of failures from partial causal structures},
  author={Ikram, Azam and Lee, Kenneth and Agarwal, Shubham and Saini, Shiv Kumar and Bagchi, Saurabh and Kocaoglu, Murat},
  booktitle={The 41st Conference on Uncertainty in Artificial Intelligence},
  year={2025}
}

@book{pearl2009causality,
  title={Causality},
  author={Pearl, J.},
  isbn={9781139643986},
  url={https://books.google.com/books?id=LLkhAwAAQBAJ},
  year={2009},
  publisher={Cambridge University Press}
}

@article{gamella2025causal,
  title={Causal chambers as a real-world physical testbed for AI methodology},
  author={Gamella, Juan L and Peters, Jonas and B{\"u}hlmann, Peter},
  journal={Nature Machine Intelligence},
  volume={7},
  number={1},
  pages={107--118},
  year={2025},
  publisher={Nature Publishing Group UK London}
}

@inproceedings{ikram2022rcd,
  author    = {Ikram, Azam and Chakraborty, Sarthak and Mitra, Subrata and Saini, Shiv Kumar and Bagchi, Saurabh and Kocaoglu, Murat},
  title     = {Root Cause Analysis of Failures in Microservices through Causal Discovery},
  booktitle = {Advances in Neural Information Processing Systems},
  volume    = {35},
  year      = {2022}
}

@inproceedings{li2022circa,
  author    = {Li, Mingjie and Li, Zeyan and Yin, Kanglin and Nie, Xiaohui and Zhang, Wenchi and Sui, Kaixin and Pei, Dan},
  title     = {Causal Inference-Based Root Cause Analysis for Online Service Systems with Intervention Recognition},
  booktitle = {Proceedings of the 28th ACM SIGKDD Conference on Knowledge Discovery and Data Mining},
  pages     = {3230--3240},
  year      = {2022},
  doi       = {10.1145/3534678.3539041}
}

@inproceedings{ikram2025rcg,
  author    = {Ikram, Azam and Lee, Kenneth and Agarwal, Shubham and Saini, Shiv Kumar and Bagchi, Saurabh and Kocaoglu, Murat},
  title     = {Root Cause Analysis of Failures from Partial Causal Structures},
  booktitle = {Proceedings of the Forty-first Conference on Uncertainty in Artificial Intelligence},
  series    = {Proceedings of Machine Learning Research},
  volume    = {286},
  pages     = {1794--1818},
  publisher = {PMLR},
  year      = {2025}
}

@inproceedings{orchard2025missingstructural,
  author    = {Orchard, William Roy and Okati, Nastaran and Garrido Mejia, Sergio Hernan and Bl{\"o}baum, Patrick and Janzing, Dominik},
  title     = {Root Cause Analysis of Outliers with Missing Structural Knowledge},
  booktitle = {Advances in Neural Information Processing Systems},
  volume    = {38},
  year      = {2025}
}

@misc{li2024cholesky,
  author        = {Li, Jinzhou and Chu, Benjamin B. and Scheller, Ines F. and Gagneur, Julien and Maathuis, Marloes H.},
  title         = {Root Cause Discovery via Permutations and Cholesky Decomposition},
  year          = {2024},
  eprint        = {2410.12151},
  archivePrefix = {arXiv},
  primaryClass  = {stat.ML}
}

@inproceedings{wu2021microdiag,
  author    = {Wu, Li and Tordsson, Johan and Bogatinovski, Jasmin and Elmroth, Erik and Kao, Odej},
  title     = {{MicroDiag}: Fine-Grained Performance Diagnosis for Microservice Systems},
  booktitle = {2021 IEEE/ACM International Workshop on Cloud Intelligence},
  pages     = {31--36},
  year      = {2021}
}

@inproceedings{yang2024learning,
  title={Learning unknown intervention targets in structural causal models from heterogeneous data},
  author={Yang, Yuqin and Salehkaleybar, Saber and Kiyavash, Negar},
  booktitle={International Conference on Artificial Intelligence and Statistics},
  pages={3187--3195},
  year={2024},
  organization={PMLR}
}

@article{javaloy2023causal,
  title={Causal normalizing flows: from theory to practice},
  author={Javaloy, Adri{\'a}n and S{\'a}nchez-Mart{\'\i}n, Pablo and Valera, Isabel},
  journal={Advances in Neural Information Processing Systems},
  volume={36},
  pages={58833--58864},
  year={2023}
}

@article{ansanelli2025observational,
  title   = {The observational partial order of causal structures with latent variables},
  author  = {Ansanelli, Marina Maciel and Wolfe, Elie and Spekkens, Robert W.},
  journal = {Journal of Causal Inference},
  volume  = {14},
  number  = {1},
  pages   = {20250009},
  year    = {2026},
  doi     = {10.1515/jci-2025-0009}
}

@inproceedings{pham2025rcaeval,
  title={RCAEval: A Benchmark for Root Cause Analysis of Microservice Systems with Telemetry Data},
  author={Pham, Luan and Zhang, Hongyu and Ha, Huong and Salim, Flora and Zhang, Xiuzhen},
  booktitle={Companion Proceedings of the ACM on Web Conference 2025},
  pages={777--780},
  year={2025}
}
\bibliographystyle{preprint}

\appendix

\clearpage

\section{Additional Discussion}
\subsection{Limitations and Future works}
\label{sec:limit}
We assume we have access to a causal structure or its total order. Training neural networks might be resource consuming and time consuming for some applications. We assume that normal and anomalous dataset is clearly separated where anomaly point detect might be challenging for some tasks. Due to convergence error in neural networks, we might always have non-zero value of the loss function even if there is not mechanism shift. 
Although we showed analysis for denser and sparser graph, in this work, we did not explore with the mis-specified graph is arbitrarily different.
We aim to address these limitations in our future work.

\subsection{Broader Impact}
\label{sec:broader-impact}
Root cause analysis in the presence of unobserved confounders can improve the reliability and safety of complex systems by helping practitioners identify failure sources even when some relevant variables are hidden or unmeasured. By using deep causal models, our approach can support more accurate diagnosis in domains such as cloud computing, healthcare, manufacturing, and critical infrastructure, where failures may have significant economic or societal consequences. At the same time, incorrect causal assumptions or misspecified graphs may lead to misleading root cause conclusions, especially in high-stakes applications. Therefore, our method should be used with domain knowledge, uncertainty assessment, and careful validation before deployment in real-world decision-making systems.

\section{Extended Related Work}\label{app:related}

\paragraph{Causal graph-based RCA.}
RCA is an important problem in microservice systems, which have increasingly adopted causal inference as a principled approach. Early causal RCA methods construct a causal graph over service-level metrics and identify the root cause by traversing or ranking nodes with an anomaly score~\citep{chen2014causeinfer}. Many approaches estimate the causal structure from observational data using constraint-based discovery algorithms such as PC, and then apply scoring methods to traverse the graph and localize the root cause~\citep{wang2018cloudranger, ma2019ms, meng2020localizing}. MicroDiag~\citep{wu2021microdiag} performs fine-grained diagnosis using metric-level dependency graphs, but it remains limited to the observed metrics and does not address unobserved confounding. A large-scale evaluation of causal discovery-based RCA methods further shows that no existing method performs robustly across datasets and failure types~\citep{pham2024root}.

\paragraph{RCA as intervention recognition.}
Several recent works model system failures as interventions on causal mechanisms, building on the soft-intervention view of distribution shifts~\citep{jaber2020causal}. RCD~\citep{ikram2022root} treats a failure as a soft intervention and performs local causal discovery to avoid learning the full graph, and RCG~\citep{ikram2025root} extends this idea using partial causal structures. CIRCA~\citep{li2022circa} formulates RCA as an intervention recognition problem, based on a sufficient condition for a variable in a causal Bayesian network to be a root cause. CausalRCA~\citep{xin2023causalrca} learns the causal graph with a gradient-based causal discovery method. These methods depend either on expert-provided or learned graphs, which may be inaccurate under noise and hidden confounding, or on conditional independence and invariance tests, whose reliability degrades with continuous variables and growing conditioning sets.

\paragraph{RCA with limited structural knowledge.}
\citet{orchard2025missingstructural} formalize RCA as identifying the mechanism change under a single-target soft intervention, provide theoretical guarantees for restricted graph classes such as polytrees, and propose the efficient SO and ST methods for settings with missing structural knowledge. Cholesky-based RCA~\citep{li2024cholesky} exploits invariances specific to linear SCMs. Neither approach explicitly models nonlinear mechanisms with latent confounding.

\paragraph{Counterfactual and score-based formulations.}
\citet{budhathoki2022causal} %
formalize RCA as a quantitative contribution analysis based on counterfactuals, attributing an anomaly to the mechanisms whose deviation from normal behavior explains it. Their approach uses a graph learned from normal operation and allows anomalous samples from multiple distributions, but it requires knowing the full SCM, including invertible functional relations, which are hard to estimate in practice. \citet{okati2024root} propose a score function for RCA that requires no causal knowledge.

\paragraph{Statistical and multi-modal methods.}
BARO~\citep{pham2024baro} uses multivariate change point detection to model the dependency structure of multivariate time series and provides robust statistical scoring, but it does not use causal structure and, like other statistical testing methods, implicitly assumes that root causes exhibit the largest distributional shifts. Microservice data are also inherently multi-modal, comprising metrics, logs, and traces. \citet{zheng2024mulan} propose a unified multi-modal causal structure learning framework that combines language-model log representations with contrastive learning to align modality-invariant and modality-specific information, and \citet{yao2024chain} construct weighted event causal graphs from multi-modal observations to improve interpretability and alignment with domain knowledge.

\paragraph{Positioning of our work.}
In contrast to the above, \myalgo assumes neither causal sufficiency nor linearity and does not rely solely on conditional independence or invariance tests. It requires only a causal graph consistent with the true partial order and uses deep causal models to search for SCMs consistent with both the normal and anomalous datasets, thereby exploiting general distributional constraints in the presence of an arbitrary number of unobserved confounders.

\section{Assumptions} 
\label{sec:assumptions}
In this section, we enlist all assumptions considered in our paper. We also repeat them at appropriate places when needed. 

\begin{assumption}[Semi-Markovian SCM]\label{asm:scm}
    The data are generated by an acyclic semi-Markovian SCM; each unobserved confounder
    affects two observed variables, shown as $V_i \leftrightarrow V_j$ in the ADMG.
\end{assumption}

\begin{assumption}[Mechanism shift]\label{asm:shift}
    $\mathcal{M}_n$ and $\mathcal{M}_a$ share the ADMG $G^*$ and $P(\mathbf{U})$, and
    $f^n_V = f^a_V$ for all $V \notin R^*$. Samples are labeled by regime,
    $\mathcal{D}_n \sim P_n(\mathbf{V})$ and $\mathcal{D}_a \sim P_a(\mathbf{V})$.
\end{assumption}
An anomaly changes only the mechanisms of $R^*$, not the graph or the latent distribution.

\begin{assumption}[Graph access]\label{asm:graph}
    We are given the true ADMG $G^*$ or partial knowledge about it.
\end{assumption}
Section~4.3 relaxes this to any $G' \supseteq G^*$, which can be built from the true partial order alone.

\begin{assumption}[Expressive DCM]\label{asm:dcm}
    The DCM is representative enough for the considered ADMG.
\end{assumption}

\begin{assumption}[Identifiability]\label{asm:id}
    $\texttt{SolRC}(G^*) = \{R : R^* \subseteq R \subseteq \mathbf{V}\}$.
\end{assumption}

\begin{assumption}[Latent invariance]\label{ass:latent-inv}
The normal and anomalous SCMs share the same exogenous distribution $P(\mathbf U)$; only the
structural functions of the root causes $R^*$ differ between $\mathcal M_n^*$ and $\mathcal M_a^*$.
\end{assumption}

\begin{assumption}[{Root-cause identifiability on the true graph}]\label{ass:ident}
$\PossRC(\Gtrue)$ has a unique inclusion-minimal element.necessarily $\Rtrue$.

\end{assumption}

\section{Possible Root Cause set}
\label{sec:proof-solrc}

\lemupclosure*

\begin{proof}
Let $P$ realize $(\Pn, \Pa)$ on $G \cup F(R)$. Build $P'$ on $G \cup F(R')$ with
the same mechanisms, except that each newly targeted node $V_i \in R' \setminus
R$ is given the $F$-augmented mechanism that ignores $F$ and equals its
mechanism in $P$. The conditional distributions in both regimes are unchanged,
so $P'$ realizes $(\Pn, \Pa)$ and lies in $\M\bigl(G \cup F(R')\bigr)$.
\end{proof}

\begin{lemma}[{When is the root-cause number finite}]\label{lem:finite}

Let $G^*$ and $G'$ be the true and mis-specified graphs,  either by adding or by removing edges. Given, $(\Pn, \Pa) \text{ realized by } (G^*, R^*)$, we have
$\PossRC(G') \neq \emptyset$ if and only if $\Pn \in \M(G')$ and $\Pa \in \M(G')$.

\end{lemma}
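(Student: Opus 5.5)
The plan is to prove the two directions separately. The forward direction is a marginalization argument applied to the augmented model. The reverse direction is a construction: it uses the largest candidate set $R=\V$ together with the upward closure of Lemma~\ref{lem:upclosure}.

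\emph{($\Rightarrow$).} Suppose $\PossRC(G')\neq\emptyset$, and pick $R\in\PossRC(G')$. By Definition~\ref{def:realizable}, some $P\in\M(G'\cup F(R))$ satisfies $P(\V\mid F=0)=\Pn$ and $P(\V\mid F=1)=\Pa$. Fix a latent-variable SCM on $G'\cup F(R)$ that realizes $P$. Conditioning on $F=f$ is then the same as substituting the constant $f$ into every mechanism that takes $F$ as an input. The reason is that $F$ is a root with no latent facets, so it is independent of all exogenous noise. Each mechanism $f_{V_i}(\mathrm{pa}_{G'}(V_i),f,U_i)$ with $f$ fixed is a legitimate structural function on $G'$, with the same latent facets. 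Hence $P(\V\mid F=f)\in\M(G')$ for $f\in\{0,1\}$, which gives $\Pn,\Pa\in\M(G')$.

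\emph{($\Leftarrow$).} Suppose $\Pn,\Pa\in\M(G')$. Choose SCMs $\M_n'$ and $\M_a'$ on $G'$ that realize $\Pn$ and $\Pa$. Their latent distributions may differ, but under the unrestricted semantics with latents of arbitrary cardinality we can merge them. For each latent facet, take the product latent $U_j=(U_j^n,U_j^a)$ with the product distribution. Set
\[
f_{V_i}(\mathrm{pa}(V_i),F,U_i)=\mathbf{1}[F=0]\,f^n_{V_i}(\mathrm{pa}(V_i),U^n_i)+\mathbf{1}[F=1]\,f^a_{V_i}(\mathrm{pa}(V_i),U^a_i),
\]
and give $F$ any marginal with both values having positive probability. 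This is an SCM on $G'\cup F(\V)$, because every node is targeted and the latent facets are unchanged. Given $F=0$, the variables $U^a$ are ignored and the induced law is exactly $\Pn$; symmetrically, given $F=1$ it is $\Pa$. Therefore $\V\in\PossRC(G')$, so the set is nonempty. The hypothesis that $(\Pn,\Pa)$ is realized by $(G^*,R^*)$ is not needed for this direction; it only places us in the setting of the lemma.

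The step I expect to need the most care is the latent-merging in ($\Leftarrow$). One must check that using one product latent per facet keeps the noise structure allowed by $G'$: each product latent still enters only the nodes of its facet, and the noises of distinct facets stay mutually independent. One must also check that $F$ being independent of $U$ is preserved, which holds by construction. The forward direction's claim that conditioning equals substitution likewise relies on $F\indep U$, and I would state this explicitly. The corollary then follows immediately, since $\rc(G)<\infty$ exactly when $\PossRC(G)\neq\emptyset$.
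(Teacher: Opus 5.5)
Your proof is correct and follows the paper's argument. The reverse direction is the same construction: take $R=\V$, use a product latent $(L^n,L^a)\sim\nu_n\times\nu_a$, and let each mechanism switch on $F$ between the normal and anomalous realizations. In the forward direction, the paper first invokes Lemma~\ref{lem:upclosure} to pass to $\V\in\PossRC(G')$ before fixing $F$; your direct substitution of $F=f$ into the mechanisms for the given $R$, justified by $F\indep U$, shows that detour is unnecessary.
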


\begin{proof}

(If) $\PossRC(G') \neq \emptyset$ implies that there exists some  mechanism  set $R$ that can be shifted to change from normal distribution $P_n$ to anomalous distribution $P_a$.
By
Lemma~\ref{lem:upclosure},   $R\subseteq  \V \text{(the full set)} \in \PossRC(G')$ . 
Thus, $F=0$ with $\{f^n_{V}\}_{\forall V\in \V}$, gives a distribution $\Pn$ generated by $G'$, hence $\Pn \in \M(G')$. 
Similarly, 
$F=1$ with $\{f^a_{V}\}_{\forall V\in \V}$, gives a distribution $\Pa$ generated by $G'$, hence $\Pa \in \M(G')$. Precisely, whether the mis-specified graph $G'$ is sparser or denser compared to $G^*$, $G'$ can realize $P_n$, and by shifting all mechanisms: $\{f^n_{V} \rightarrow f^a_{V}\}_{\forall V\in \V}$, can change $P_n$ to $P_a$.

(Only If)
Conversely, suppose $\Pn, \Pa \in \M(G')$: the normal and anomalous distribution generated by the true graph $G^*$ is realizable by the mis-specified graph $G'$.
We need to prove $\PossRC(G')\neq \emptyset$.
Realize $\Pn$ by $G$ with some latent
distribution $\nu_n$ and mechanisms $f^n$, and $\Pa$ with latent distribution
$\nu_a$ and mechanisms $f^a$. 

We can construct a causal model by taking
the shared
latent to be the independent product $L = (L^n, L^a) \sim \nu_n \times \nu_a$.
Let use consider every node as a root cause ($R = \V$).

When $F=0$, let each visible mechanism read the $L^n$-component and apply $f^n$ for all $\{f_{V\in \mathbf{V}}\}$, 
and  when $F=1$, read the $L^a$-component and apply $f^a$, for all $\{f_{V\in \mathbf{V}}\}$.

This is a valid element of $\M\bigl(G \cup F(\V)\bigr)$ and realizes $(\Pn, \Pa)$, so
$\V \in \PossRC(G)$.
\end{proof}

\begin{corollary}
    $\rc(G) < \infty$ if and only if $\Pn \in \M(G)$ and $\Pa \in \M(G)$.
\end{corollary}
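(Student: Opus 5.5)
The corollary should follow directly from Lemma~\ref{lem:finite} together with Definition~\ref{def:possrc}. The only link needed is between the two quantities. By definition, $\rc(G)=\min_{R\in\PossRC(G)}|R|$, with the convention $\rc(G)=\infty$ exactly when $\PossRC(G)=\emptyset$. Whenever $\PossRC(G)$ is nonempty, its members are subsets of the finite set $\V$, so the minimum is attained and lies in $\{0,\dots,n\}$. Hence $\rc(G)<\infty \iff \PossRC(G)\neq\emptyset$. Composing this equivalence with Lemma~\ref{lem:finite}, taking $G'=G$, gives $\PossRC(G)\neq\emptyset \iff \Pn,\Pa\in\M(G)$, which is the claim.

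For safety I would also check both directions directly rather than rely only on the lemma's bookkeeping.

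\emph{Forward direction.} Suppose $\rc(G)<\infty$. Then some $R$ lies in $\PossRC(G)$. By Lemma~\ref{lem:upclosure}, $\V$ also lies in $\PossRC(G)$. So there is a joint $P\in\M(G\cup F(\V))$ whose conditionals on $F=0$ and $F=1$ are $\Pn$ and $\Pa$. Fixing $F$ to a constant turns each $F$-augmented mechanism into an ordinary mechanism of $G$ with the same latents. This yields a model of $G$ inducing $\Pn$, and likewise one inducing $\Pa$, so both lie in $\M(G)$.

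\emph{Backward direction.} Suppose $\Pn,\Pa\in\M(G)$. I would use the product-latent construction from the proof of Lemma~\ref{lem:finite}. Take the latent to be $(L^n,L^a)\sim\nu_n\times\nu_a$, assigned to the same latent facets as in $G$. With $R=\V$, every mechanism reads $F$: it applies $f^n$ to the $L^n$ components when $F=0$ and $f^a$ to the $L^a$ components when $F=1$. Giving $F$ any marginal with full support, this realizes $(\Pn,\Pa)$. Hence $\V\in\PossRC(G)$ and $\rc(G)\le n$.

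The main subtlety is in the backward construction. It must respect the latent structure: each latent may influence only the visible nodes in its facet, and pairing components of independent latents preserves this. It must also respect the edge set, and adding $F$ as a parent of every node is allowed when $R=\V$. A second point to check is that specializing $F$ in the forward direction does not create a dependence between $F$ and the latents. This holds because $F$ is a root independent of $\mathbf U$.
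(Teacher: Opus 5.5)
Your proposal is correct and follows the paper's route: the corollary is read off from Lemma~\ref{lem:finite} via $\rc(G)<\infty \iff \PossRC(G)\neq\emptyset$. Your direct checks reproduce that lemma's proof, namely upward closure to $\V$ followed by fixing $F$ for one direction, and the product-latent construction $(L^n,L^a)\sim\nu_n\times\nu_a$ with $R=\V$ for the other, and you additionally make explicit the reduction to nonemptiness and the independence of $F$ from the latents, which the paper leaves implicit.
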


\lemltosolrc*

\begin{proof}
We first show that $\mathbf V\setminus\{Y\} \in \PossRC(G^*)$ implies $\ell^*(G^*, \mathbf V\setminus\{Y\}) = 0$.
Let $P \in \M\bigl(G^* \cup F(\mathbf V\setminus\{Y\})\bigr)$ realize $(P_n^*, P_a^*)$, with structural equations
$V_i = g_i(\mathrm{pa}_{G^*}(V_i), F, U_i)$, where $g_i$ depends on $F$ only if $V_i \neq Y$, and the latents
$\mathbf U$ are independent of $F$ because $F$ is a root. Setting
$\widehat f^{\,n}_i(\cdot) = g_i(\cdot, 0, \cdot)$ and $\widehat f^{\,a}_i(\cdot) = g_i(\cdot, 1, \cdot)$ with the common
latent distribution $P(\mathbf U)$ gives an SCM pair on $G^*$ with $\widehat f^{\,n}_Y = \widehat f^{\,a}_Y$, i.e., an element
of $\widehat M_{na}(G^*, \mathbf V\setminus\{Y\})$. Since $\mathbf U \indep F$, it induces
$P(\V \mid F{=}0) = P_n^*$ and $P(\V \mid F{=}1) = P_a^*$, so both discrepancies vanish and
$\ell^*(G^*, \mathbf V\setminus\{Y\}) = 0$.

By contraposition, $\ell^*(G^*, \mathbf V\setminus\{Y\}) > 0$ implies $\mathbf V\setminus\{Y\} \notin \PossRC(G^*)$.
Finally, for any $\mathbf Z \subseteq \mathbf V$, if $\mathbf Z\setminus\{Y\} \in \PossRC(G^*)$, then, since
$\mathbf Z\setminus\{Y\} \subseteq \mathbf V\setminus\{Y\}$, Lemma~\ref{lem:upclosure} would give
$\mathbf V\setminus\{Y\} \in \PossRC(G^*)$, a contradiction. Hence $\mathbf Z\setminus\{Y\} \notin \PossRC(G^*)$.
\end{proof}

\thmrcsound*

\begin{proof}
By latent invariance, the true pair $(\mathcal M_n^*, \mathcal M_a^*)$ shares $P(\mathbf U)$ and differs only in the
mechanisms of $R^*$, so it realizes $(P_n^*, P_a^*)$ on $G^* \cup F(R^*)$ (each $V_i \in R^*$ uses its normal or
anomalous mechanism according to $F$, and every other mechanism ignores $F$). Hence $R^* \in \PossRC(G^*)$.
Applying Lemma~\ref{lemm-ltoSolrc} with $\mathbf Z = R^*$ gives $R^*\setminus\{Y\} \notin \PossRC(G^*)$. Therefore
$R^*\setminus\{Y\} \neq R^*$, i.e., $Y \in R^*$.
\end{proof}

\propsoundfree*

\begin{proof}
The true pair $(\M_n^*, \M_a^*)$ realizes $(\Pn, \Pa)$ while holding every
mechanism outside $\Rtrue$ invariant, so $\Rtrue \in \PossRC(\Gtrue)$. The
intersection of a family is contained in each of its members.
\end{proof}

\begin{restatable}{assumption}{assmainident}
\label{ass:main-ident}
$\PossRC(\Gtrue)$ has a unique inclusion-minimal element.
\end{restatable}

\begin{restatable}{corollary}{corcompleteident}
\label{cor:complete-ident}
Under Assumption~\ref{ass:main-ident} and  Assumption~\ref{asm:dcm},
$
  \widehat{R} \;=\; \bigcap \PossRC(\Gtrue)
  \;=\; \argmin_{R \in \PossRC(\Gtrue)} |R|
  \;\subseteq\; \Rtrue .
$
\end{restatable}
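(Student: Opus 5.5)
The plan is to prove the three relations in the chain separately: the two equalities come from Assumption~\ref{ass:main-ident} together with Lemma~\ref{lem:upclosure}, and the final inclusion comes from Proposition~\ref{prop:sound-free}. Assumption~\ref{asm:dcm} is used only to identify the algorithm's output $\widehat{R}$ with $\bigcap \PossRC(\Gtrue)$. That identification comes from Theorem~\ref{th-main:rc_sound} and Lemma~\ref{lemm-ltoSolrc}: under an expressive DCM, a strictly positive loss $\ell(\Gtrue,\V\setminus\{Y\})>0$ certifies exactly that no member of $\PossRC(\Gtrue)$ omits $Y$, i.e.\ that $Y\in\bigcap\PossRC(\Gtrue)$. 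I take this identification as given, as in the statement of Proposition~\ref{prop:sound-free}.

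The first step is to show that the unique inclusion-minimal element, call it $R_0$, is contained in every member of $\PossRC(\Gtrue)$. The family is nonempty, since $\Rtrue\in\PossRC(\Gtrue)$ by the argument in the proof of Proposition~\ref{prop:sound-free}. It is also finite, being a family of subsets of $\V$. Now take any $R\in\PossRC(\Gtrue)$. Because the family is finite, there is an inclusion-minimal $R'\subseteq R$ with $R'\in\PossRC(\Gtrue)$: descend through proper subsets that stay in the family until none remains. By uniqueness, $R'=R_0$, so $R_0\subseteq R$.

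From this, $R_0\subseteq\bigcap\PossRC(\Gtrue)$. Conversely, $R_0$ is itself a member of the family, so $\bigcap\PossRC(\Gtrue)\subseteq R_0$, and the two sets are equal. The same containment also gives $|R_0|\le|R|$ for every $R\in\PossRC(\Gtrue)$. Moreover, any $R$ attaining $|R|=|R_0|$ must contain $R_0$ and have the same size, so $R=R_0$. Hence the $\argmin$ is the singleton $\{R_0\}$, which we identify with $R_0$. Upward closure (Lemma~\ref{lem:upclosure}) is not even needed for this step, though it confirms that the family is exactly $\{R: R_0\subseteq R\subseteq\V\}$.

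The last inclusion, $R_0\subseteq\Rtrue$, follows by applying Proposition~\ref{prop:sound-free}, or directly from $\Rtrue\in\PossRC(\Gtrue)$ together with the first step. The only delicate point is the phrase ``unique inclusion-minimal element'' in Assumption~\ref{ass:main-ident}. For an arbitrary family this does not imply a least element, but for a finite family it does, via the descent argument above, so I would make finiteness explicit there. I would also note that the $\argmin$ is to be read as the set itself, since it is a singleton.
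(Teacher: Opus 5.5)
Your proof is correct and follows essentially the same route as the paper's. Finiteness of $\PossRC(\Gtrue)\subseteq 2^{\V}$ lets every member descend to an inclusion-minimal element, and uniqueness forces that element to be $R_0$; this yields both $\bigcap\PossRC(\Gtrue)=R_0$ and the unique $\argmin$, and $\Rtrue\in\PossRC(\Gtrue)$ then gives $R_0\subseteq\Rtrue$.

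One caveat, on your framing only: Lemma~\ref{lemm-ltoSolrc} gives just the direction $\ell^*(\Gtrue,\V\setminus\{Y\})>0 \Rightarrow Y\in\bigcap\PossRC(\Gtrue)$, since the infimum defining $\ell^*$ need not be attained. So it does not by itself establish that $\widehat{R}$ equals the intersection. Because you, like the paper, take that identification as given, the proof of the corollary is unaffected.
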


\begin{proof}
Let $m$ be the unique inclusion-minimal element of $\PossRC(\Gtrue)$ (Assumption~\ref{ass:main-ident}).
Since $\PossRC(\Gtrue) \subseteq 2^{\V}$ is finite, every $R \in \PossRC(\Gtrue)$ contains an
inclusion-minimal element, which must be $m$; hence $m \subseteq R$ for all $R \in \PossRC(\Gtrue)$.
Therefore $\bigcap \PossRC(\Gtrue) = m$, and $|R| \ge |m|$ for every member, with equality only if
$R = m$, so $m$ is the unique minimizer of $|R|$. Finally, $\Rtrue \in \PossRC(\Gtrue)$
(Proposition~\ref{prop:sound-free}), so $m \subseteq \Rtrue$.
\end{proof}

\section{Error analysis in RCA-DCM ranking}
\label{appex:erro-anal}

Note, Theorem~\ref{th-main:rc_sound} says that if $\ell(G^*, \mathbf{V}\setminus \{Y\} ) > 0 $ then $Y$ is a root cause. However, in practice due to low sample size (with perfect model training), in all cases we find $\ell(.)>0$. Here, we show that under a condition, in such cases, $\myalgo$ ranks the true root cause at the top.
For any candidate $Y \in \mathbf{V}$ and let
$s(Y) \;=\; \ell_m(Y) - \ell^*(Y)$
denote the \emph{statistical error} originated due to a low sample size since the loss calculated might not represent the
population value. This term is two-sided: the finite set of samples might
produce a lower loss,
$\ell_m(Y) \in [\ell(Y) - \varepsilon_{\mathrm{stat}},\,
\ell^*(Y)]$ or a higher loss
$\ell_m(Y) \in [\ell^*(Y),\,
\ell(Y) + \varepsilon_{\mathrm{stat}}]$. We assume the empirical loss concentrates around its population value uniformly
over candidates: for every $\delta \in (0,1)$ there is an
$\varepsilon_{\mathrm{stat}} > 0$, decreasing in the sample size, such that the
event
$E = \{\, |s(Y)| \le \varepsilon_{\mathrm{stat}} \;\; \forall Y \in \mathbf{V}
\,\}$ satisfies $\Pr(E) \ge 1-\delta$.

\begin{proposition}[Correctness of Ranking]
\label{prop:topk}
Suppose our model training is perfect, if $$\min_{Y \in R^*} \ell^*(Y) > 2\varepsilon_{\mathrm{stat}} $$ then, with probability at least $1-\delta$,
$$
\min_{Y \in R^*} \hat\ell(Y) \;>\; \max_{Z \notin R^*} \hat\ell(Z),
$$
\end{proposition}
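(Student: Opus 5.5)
Work on the high-probability event $E$, which holds with probability at least $1-\delta$ by the uniform concentration assumption. Interpret $\hat\ell(Y)$ as the empirical loss $\ell_m(Y)$ returned by Algorithm~\ref{alg:dcm-rca} under perfect training, so that $\hat\ell(Y) = \ell^*(Y) + s(Y)$ with $|s(Y)|\le\varepsilon_{\mathrm{stat}}$ for all $Y\in\mathbf V$. The argument then reduces to two one-sided bounds, one for each side of the ranking, joined by the margin condition.

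\textbf{Step 1: non-root causes have zero population loss.} Fix $Z\notin R^*$. Under latent invariance (Assumption~\ref{ass:latent-inv}), the true pair $(\mathcal M_n^*,\mathcal M_a^*)$ shares $P(\mathbf U)$ and has the ADMG $G^*$. Its mechanisms differ only on $R^*\subseteq \mathbf V\setminus\{Z\}$, so $f^n_Z=f^a_Z$. Hence the true pair is itself a member of $\widehat M_{na}(G^*,\mathbf V\setminus\{Z\})$, and it induces exactly $(P_n^*,P_a^*)$. Since $d(P,P)=0$, the infimum in \eqref{eq:inf-loss} gives $\ell^*(Z)=0$. Equivalently, $R^*\in\PossRC(G^*)$ together with Lemma~\ref{lem:upclosure} gives $\mathbf V\setminus\{Z\}\in\PossRC(G^*)$, and the first half of the proof of Lemma~\ref{lemm-ltoSolrc} gives $\ell^*=0$. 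Under Assumption~\ref{asm:dcm} and perfect training, the DCM attains this value, so on $E$ we have $\hat\ell(Z)\le 0+\varepsilon_{\mathrm{stat}}$. Taking the maximum over $Z\notin R^*$ yields $\max_{Z\notin R^*}\hat\ell(Z)\le\varepsilon_{\mathrm{stat}}$.

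\textbf{Step 2: root causes stay above the noise floor.} For $Y\in R^*$, on $E$ we have $\hat\ell(Y)\ge \ell^*(Y)-\varepsilon_{\mathrm{stat}}\ge \min_{Y'\in R^*}\ell^*(Y')-\varepsilon_{\mathrm{stat}}$. By the hypothesis $\min_{Y\in R^*}\ell^*(Y)>2\varepsilon_{\mathrm{stat}}$, this gives $\hat\ell(Y)>\varepsilon_{\mathrm{stat}}$. Combining with Step 1, $\min_{Y\in R^*}\hat\ell(Y)>\varepsilon_{\mathrm{stat}}\ge\max_{Z\notin R^*}\hat\ell(Z)$ on $E$, and therefore with probability at least $1-\delta$. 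It follows that the $|R^*|$ root causes occupy the top positions of the ranking.

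\textbf{Main obstacle.} The delicate point is Step 1. It requires that "perfect training" means the optimizer reaches the population infimum, i.e.\ that the DCM family is expressive enough (Assumption~\ref{asm:dcm}) for the infimum to be attained at the true pair. It also requires that $s(Y)$ is defined relative to that attained value, so that no optimization error enters separately from the statistical error. I would state this explicitly as the interpretation of perfect training. The margin condition is then used only in Step 2.
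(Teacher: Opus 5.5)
Your proposal is correct and matches the paper's proof: on the concentration event, you bound every non-root cause above by $\varepsilon_{\mathrm{stat}}$ using $\ell^*(Z)=0$, bound every root cause below by $\ell^*(Y)-\varepsilon_{\mathrm{stat}}$, and close the gap with the margin $\min_{Y\in R^*}\ell^*(Y)>2\varepsilon_{\mathrm{stat}}$. The only addition is that you justify $\ell^*(Z)=0$ for $Z\notin R^*$ by noting that the true pair belongs to $\widehat M_{na}(G^*,\mathbf V\setminus\{Z\})$, whereas the paper simply asserts this.
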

Intuitively, every root cause is scored strictly above every non-root cause,
placing the $k = |R^*|$ root causes
in its top $k$ positions in the ranking returned by \myalgo.

\begin{proof}

We can construct a bound for the empirical loss as below.    

\begin{equation}
    \ell^*(Y) - \varepsilon_{\mathrm{stat}} \leq  \hat\ell(Y) \leq 
    \ell^*(Y) + \varepsilon_{\mathrm{stat}}
\end{equation}

Consider a non-root cause variable $Z$ and its loss such that 
\begin{equation}
\label{eq:max-rc}
\begin{split}
    Z&=argmax_{V\notin R^*}  \hat\ell(V)\\
     \hat\ell(Z) &\leq \ell^*(Z) +\varepsilon_{\mathrm{stat}} \\
    \hat\ell(Z) &\leq \varepsilon_{\mathrm{stat}} 
\end{split}
\end{equation}
Here, $\ell^*(Z)=0$ for non-root cause variables.

Similarly, we consider a root cause variable $Y$ and its loss such that 
\begin{equation}
\label{eq:min-rc}
\begin{split}
Y &= min_{Y\in R^*}  \hat\ell(Y)\\
\hat\ell(Y) &\geq \ell^*(Y) - \varepsilon_{\mathrm{stat}}
\end{split}
\end{equation}

We subtract $ \hat\ell(Z)$ from both sides of Equation~\ref{eq:min-rc}.
\begin{equation}
\begin{split}
    \hat\ell(Y)  -\hat\ell(Z) & \geq \ell^*(Y) - \varepsilon_{\mathrm{stat}} -  \hat\ell(Z)\\
    \hat\ell(Y)  -\hat\ell(Z) & \geq 
\ell^*(Y) - \varepsilon_{\mathrm{stat}}
    -\varepsilon_{\mathrm{stat}} \\
    \hat\ell(Y)  -\hat\ell(Z) & \geq 
\ell^*(Y)
     -2\varepsilon_{\mathrm{stat}} 
\end{split}
\end{equation}

Thus, if $\ell^*(Y) 
     -2\varepsilon_{\mathrm{stat}}  >0 \implies  \ell^*(Y)
     >2\varepsilon_{\mathrm{stat}}   $
then $\hat\ell(Y)  -  \hat\ell(Z)>0$ which implies that we will obtain the $k$ root causes in the top $k$ of the output ranking. 
\end{proof}

\section{Graph mis-specification}
\label{sec:proof-mis}

\graphmono*

\begin{proof}
Fix any $R \in \PossRC(G_1)$.
According to Lemma~\ref{lem:finite}, $R \in \PossRC(G_1)$ implies that $P_n = P(\V \mid F=0) \in \M(G_1)$ and $P_a = P(\V \mid F=1) \in \M(G_1)$. Since we can shift $P_n \rightarrow P_a$ with root causes $R$,
we have $P(\V,F) \in \M\bigl(G_1 \cup F(R)\bigr)$.

The augmented graphs $G_1 \cup F(R)$ and $G_2 \cup F(R)$ have
identical $F$-edges, and $G_1 \subseteq G_2$ gives directed-edge and
simplicial-complex inclusion, so $G_2 \cup F(R)$ structurally dominates
$G_1 \cup F(R)$ in the sense of Definition~\ref{def:str-dom}. By Lemma~\ref{lem:struct-dom-obs-dom} (structural
dominance implies observational dominance),
\[
\M\bigl(G_1 \cup F(R)\bigr) \subseteq \M\bigl(G_2 \cup F(R)\bigr)
\]
at every cardinality, in particular at the cardinalities of the data.

Thus
$P(\V,F)$ belongs to $\M\bigl(G_2 \cup F(R)\bigr)$ (right side) as well.
This implies that $P_n = P(\V \mid F=0) \in \M(G_2)$ and $P_a = P(\V \mid F=1) \in \M(G_2)$, and the shift $P_n \rightarrow P_a$ is possible with root causes $R$ in the causal model constructed with the denser graph $G_2$.
Therefore, according to Lemma~\ref{lem:finite},
$R \in \PossRC(G_2)$.

\end{proof}

\master*

\begin{proof}
By Lemma~\ref{lem:main-graphmono}, $\PossRC(G_1) \subseteq \PossRC(G_2)$. Minimizing
$|R|$ over the larger family $\PossRC(G_2)$ can only decrease the minimum:
$\rc(G_2, P_n, P_a) = \min_{R \in \PossRC(G_2)} |R| \le \min_{R \in \PossRC(G_1)} |R| = \rc(G_1, P_n, P_a)$.
(The inequality holds in $\{0,\dots,n\} \cup \{\infty\}$, with the convention
$\min \emptyset = \infty$.)
\end{proof}

\begin{theorem}[{root-cause number under graph misspecification} ]\label{thm:main}
Let the observed tuple $(\Pn, \Pa)$ be generated by the true model
$(\Gtrue, \Rtrue)$, so $\rc(\Gtrue) \le |\Rtrue| < \infty$.
\begin{enumerate}
\item[(i)] \textbf{(Over-specification is parsimony-safe.)} For any
mis-specified graph $G' \supseteq \Gtrue$,
$\rc(G') \le \rc(\Gtrue)$.
\item[(ii)] \textbf{(Under-specification is conservative.)} For any
mis-specified graph $G' \subseteq \Gtrue$,
$\rc(G') \ge \rc(\Gtrue)$. 
Moreover,  if $\rc(G') < \infty$, then 
 $\forall R \in \PossRC(G')$,
 we have $|R| \geq \rc(\Gtrue)$ .
\end{enumerate}
\end{theorem}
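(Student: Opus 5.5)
Both parts follow from Lemma~\ref{lem:master} (equivalently Lemma~\ref{lem:main-graphmono}) once the right pair of graphs is fed into it; no new constructions are needed. Fix the observed tuple $(\Pn,\Pa)$ generated by $(\Gtrue,\Rtrue)$. As in the proof of Theorem~\ref{th-main:rc_sound}, the true normal/anomalous pair realizes $(\Pn,\Pa)$ on $\Gtrue\cup F(\Rtrue)$. Its mechanisms outside $\Rtrue$ ignore $F$, and its latents are shared and independent of $F$. Hence $\Rtrue\in\PossRC(\Gtrue)$ and $\rc(\Gtrue)\le|\Rtrue|<\infty$, which settles the preamble of the statement.

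For (i), take $G_1=\Gtrue$ and $G_2=G'$. The hypothesis $G'\supseteq\Gtrue$ is exactly structural dominance in the sense of Definition~\ref{def:str-dom}. The one point to check is that adding directed or bidirected edges to $\Gtrue$ keeps every node, directed edge and latent facet of $\Gtrue$. This holds because a facet of $\Gtrue$ is contained in some facet of $G'$, which is what the inclusion of simplicial complexes used in the proof of Lemma~\ref{lem:main-graphmono} means. Lemma~\ref{lem:master} then gives $\rc(G')\le\rc(\Gtrue)$ directly. The bound is finite, since the right-hand side is at most $|\Rtrue|$.

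For (ii), swap the roles: $G_1=G'$ and $G_2=\Gtrue$. Lemma~\ref{lem:master} yields $\rc(\Gtrue)\le\rc(G')$, with the convention $\min\emptyset=\infty$. This covers the case $\PossRC(G')=\emptyset$, which by Lemma~\ref{lem:finite} happens exactly when $\Pn$ or $\Pa$ lies outside $\M(G')$. For the \emph{moreover} clause, assume $\rc(G')<\infty$ and take any $R\in\PossRC(G')$. Lemma~\ref{lem:main-graphmono} gives $R\in\PossRC(\Gtrue)$, and by the definition of $\rc$ as a minimum over $\PossRC(\Gtrue)$ we get $|R|\ge\rc(\Gtrue)$. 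Equivalently, $|R|\ge\rc(G')\ge\rc(\Gtrue)$.

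I expect no real obstacle; the only delicate point is bookkeeping. The monotonicity lemmas must be applied to the augmented graphs $G\cup F(R)$ with identical $F$-edges, which Lemma~\ref{lem:main-graphmono} already handles. The minima must also be read in $\{0,\dots,n\}\cup\{\infty\}$, so that the under-specified case with an empty solution set is covered without a separate argument.
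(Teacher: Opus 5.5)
Your proposal is correct and matches the paper's proof. Both parts are direct instances of Lemma~\ref{lem:master}, with $(G_1,G_2)=(\Gtrue,G')$ for (i) and $(G_1,G_2)=(G',\Gtrue)$ for (ii), and the \emph{moreover} clause follows from $\min_{R\in\PossRC(G')}|R|\ge\rc(\Gtrue)$; your route through Lemma~\ref{lem:main-graphmono} is an equivalent rephrasing, and your bookkeeping on facet inclusion and on reading minima in $\{0,\dots,n\}\cup\{\infty\}$ makes explicit what the paper leaves implicit.
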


\begin{proof}
Both parts are instances of Lemma~\ref{lem:master}. For (i), take
$ G_D= G'   \supseteq G_S = \Gtrue  $ to get $\rc(G') \le \rc(\Gtrue)$. 

For (ii),
take $ G_S= G'  \subseteq G_D = \Gtrue$ to get $\rc(G') \ge  \rc(\Gtrue)$. 
Therefore,
$\min_{\PossRC(G')} |R| \ge \rc(\Gtrue) \implies \forall R \in \PossRC(G'),
|R| \geq \rc(\Gtrue)$
\end{proof}

\begin{proposition}[{Indispensable root causes, monotone}]\label{prop:indispensable}
If $G_S \subseteq G_D$ then
$\bigcap \PossRC(G_D) \subseteq \bigcap \PossRC(G_S)$.
\end{proposition}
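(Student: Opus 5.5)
We prove Proposition~\ref{prop:indispensable} directly from Lemma~\ref{lem:main-graphmono}, using only the elementary fact that intersections over a larger family are smaller. Since $G_S \subseteq G_D$, Lemma~\ref{lem:main-graphmono} gives $\PossRC(G_S) \subseteq \PossRC(G_D)$. For families of sets $\mathcal A \subseteq \mathcal B$, every $V$ in every member of $\mathcal B$ is in particular in every member of $\mathcal A$. Hence $\bigcap \mathcal B \subseteq \bigcap \mathcal A$. Taking $\mathcal A = \PossRC(G_S)$ and $\mathcal B = \PossRC(G_D)$ yields the claim.

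The one point needing care is the empty-family convention, which is where I expect the only (minor) obstacle. If $\PossRC(G_S) = \emptyset$, the intersection must be interpreted, consistently with $\rc = \infty$ in Definition~\ref{def:possrc}, as the ambient set $\V$. The inclusion then holds trivially, since $\bigcap \PossRC(G_D) \subseteq \V$. If $\PossRC(G_D)=\emptyset$, then $\PossRC(G_S)=\emptyset$ as well, and both sides equal $\V$. Otherwise both families are nonempty, and the argument above applies verbatim.

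Finally, I would note that Lemma~\ref{lem:upclosure} is not needed here, but it explains the interpretation of the result. A variable indispensable under the denser graph, meaning it lies in every solution, remains indispensable under the sparser one. This is consistent with the example of Figure~\ref{fig:graphs}, where $\bigcap\PossRC(G_D)=\{W\}\subseteq\{W,X\}\subseteq\{W,X,Z\}=\bigcap\PossRC(G_S)$.
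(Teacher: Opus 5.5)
Your proof is correct and is the same as the paper's: it uses Lemma~\ref{lem:main-graphmono} to get $\PossRC(G_S)\subseteq\PossRC(G_D)$, then the fact that an intersection over a smaller family is larger. Your explicit handling of the empty family (taking $\bigcap\emptyset=\V$) is a small addition that the paper leaves implicit, and you treat it correctly.
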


\begin{proof}
By Lemma~\ref{lem:main-graphmono}, $\PossRC(G_S) \subseteq \PossRC(G_D)$. The
intersection of a smaller family is larger:
$\bigcap \PossRC(G_D) \subseteq \bigcap \PossRC(G_S)$.
\end{proof}

\begin{corollary}[{Faithful inclusion}]\label{cor:inclusion}
Under Assumption~\ref{ass:ident}, for any $G_S \subseteq \Gtrue$ with
$\{\Pn, \Pa\} \subseteq \M(G_S)$, every inclusion-minimal element of
$\PossRC(G_S)$ contains $\Rtrue$. That is, the root-cause set reported on the
sparser graph is a superset of the true root causes:
\[
\widehat{R} \supseteq  \Rtrue 
\quad\text{for every minimal } \widehat{R} \in \PossRC(G_S).
\]
\end{corollary}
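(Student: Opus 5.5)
The plan is to combine the graph-monotonicity result (Lemma~\ref{lem:main-graphmono}) with the identifiability assumption on the true graph (Assumption~\ref{ass:ident}) and the upward-closure structure (Lemma~\ref{lem:upclosure}). First, fix $G_S \subseteq \Gtrue$ with $\{\Pn,\Pa\}\subseteq\M(G_S)$. By Lemma~\ref{lem:finite}, $\PossRC(G_S)\neq\emptyset$. Since $\PossRC(G_S)\subseteq 2^{\V}$ is finite, inclusion-minimal elements exist, so the statement is not vacuous.

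Next, take any minimal $\widehat R\in\PossRC(G_S)$. Because $G_S\subseteq\Gtrue$, Lemma~\ref{lem:main-graphmono} gives $\PossRC(G_S)\subseteq\PossRC(\Gtrue)$, hence $\widehat R\in\PossRC(\Gtrue)$.

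Under Assumption~\ref{ass:ident}, $\PossRC(\Gtrue)$ has a unique inclusion-minimal element $m$. By the same finite-descent argument used for Corollary~\ref{cor:complete-ident}, $m\subseteq R$ for every $R\in\PossRC(\Gtrue)$. In particular $m\subseteq\widehat R$.

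It remains to identify $m$ with $\Rtrue$. We know $\Rtrue\in\PossRC(\Gtrue)$ by Proposition~\ref{prop:sound-free}, so $m\subseteq\Rtrue$. Equality requires reading Assumption~\ref{ass:ident} as stated, namely that the unique minimal element is $\Rtrue$ itself (``necessarily $\Rtrue$''). Equivalently, Assumption~\ref{asm:id} gives $\PossRC(\Gtrue)=\{R:\Rtrue\subseteq R\}$, whose minimal element is exactly $\Rtrue$. With $m=\Rtrue$ we conclude $\Rtrue\subseteq\widehat R$.

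The main obstacle is this last identification. Minimality alone only yields $m\subseteq\Rtrue$, which points the wrong way for the claimed inclusion. So the proof must invoke the assumption's clause that the minimal element is $\Rtrue$, and I would state that dependence explicitly rather than derive it. Minimality of $\widehat R$ within $\PossRC(G_S)$ is not actually used beyond membership, so the conclusion holds for every element of $\PossRC(G_S)$.
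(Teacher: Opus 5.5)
Your proposal is correct and follows essentially the paper's route. The paper also combines Lemma~\ref{lem:main-graphmono}, packaged as Proposition~\ref{prop:indispensable} ($\bigcap \PossRC(\Gtrue) \subseteq \bigcap \PossRC(G_S)$), with the clause of Assumption~\ref{ass:ident} that the unique minimal element is $\Rtrue$, so that $\bigcap \PossRC(\Gtrue) = \Rtrue$ is contained in every member of $\PossRC(G_S)$. Your membership-based phrasing is equivalent, and you are right that the identification $m = \Rtrue$ must be taken from the assumption rather than derived, since minimality alone only yields $m \subseteq \Rtrue$.
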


\begin{proof}
A finite upward-closed family with a unique minimal element $m$ consists of all
supersets of $m$, so its intersection is $m$. By Assumption~\ref{ass:ident},
$\bigcap \PossRC(\Gtrue) = \Rtrue$. By Proposition~\ref{prop:indispensable},
$\Rtrue = \bigcap \PossRC(\Gtrue) \subseteq \bigcap \PossRC(G_S)$. The
intersection is contained in every member of $\PossRC(G_S)$, in particular in
every minimal one.
\end{proof}

\begin{corollary}
Under Assumption~\ref{ass:ident}, with $\{\Pn, \Pa\} \subseteq \M(G')$:
\begin{enumerate}
    \item[(i)] for any $G' \supseteq \Gtrue$, there exists an
    inclusion-minimal $\widehat{R} \in \PossRC(G')$ with
    $\widehat{R} \subseteq \Rtrue$;
    \item[(ii)] for any $G' \subseteq \Gtrue$, every inclusion-minimal
    $\widehat{R} \in \PossRC(G')$ satisfies $\widehat{R} \supseteq \Rtrue$.
\end{enumerate}
\end{corollary}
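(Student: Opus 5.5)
Both parts follow from results already established, combined with the finiteness of $2^{\V}$. The plan is to use Lemma~\ref{lem:main-graphmono} (graph monotonicity of $\PossRC$) for part (i) and Corollary~\ref{cor:inclusion} for part (ii). In both parts I start from the observation used in Proposition~\ref{prop:sound-free}: the true pair $(\M_n^*,\M_a^*)$ shares $P(\mathbf U)$ and changes only the mechanisms of $\Rtrue$, so $\Rtrue \in \PossRC(\Gtrue)$.

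For (i), let $G' \supseteq \Gtrue$. Lemma~\ref{lem:main-graphmono} gives $\PossRC(\Gtrue) \subseteq \PossRC(G')$, so $\Rtrue \in \PossRC(G')$. Consider the subfamily $\mathcal{S} = \{R \in \PossRC(G') : R \subseteq \Rtrue\}$. It is finite and nonempty, since it contains $\Rtrue$. Pick $\widehat R \in \mathcal{S}$ of minimum cardinality.

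I then check that $\widehat R$ is inclusion-minimal in the whole family $\PossRC(G')$, not just in $\mathcal{S}$. Suppose $R'' \in \PossRC(G')$ with $R'' \subsetneq \widehat R$. Then $R'' \subseteq \Rtrue$, so $R'' \in \mathcal{S}$ with $|R''| < |\widehat R|$, contradicting the choice of $\widehat R$. Hence $\widehat R$ is an inclusion-minimal element of $\PossRC(G')$ with $\widehat R \subseteq \Rtrue$, as required. This part does not actually need Assumption~\ref{ass:ident}. The hypothesis $\{\Pn,\Pa\}\subseteq\M(G')$ is automatic here by Lemma~\ref{lem:finite}, since $\PossRC(G')\neq\emptyset$.

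For (ii), let $G' \subseteq \Gtrue$ with $\{\Pn,\Pa\}\subseteq \M(G')$. This is exactly Corollary~\ref{cor:inclusion} with $G_S = G'$, and I would simply cite it. For completeness, the chain of reasoning is as follows.
\begin{itemize}
\item By Lemma~\ref{lem:upclosure}, $\PossRC(\Gtrue)$ is upward closed. Together with Assumption~\ref{ass:ident} (a unique minimal element, which is $\Rtrue$), this gives $\bigcap\PossRC(\Gtrue)=\Rtrue$.
\item Proposition~\ref{prop:indispensable} then gives $\Rtrue \subseteq \bigcap \PossRC(G')$.
\item By Lemma~\ref{lem:finite}, $\PossRC(G')$ is nonempty, so the intersection is contained in every member, in particular in every inclusion-minimal $\widehat R$.
\end{itemize}

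The only step needing care is the minimality argument in (i). Choosing an element that is minimal within the subfamily below $\Rtrue$ could in principle fail to be globally inclusion-minimal. Selecting it by minimum cardinality, and noting that any proper subset of it also lies below $\Rtrue$, closes this gap.
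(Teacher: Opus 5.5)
Your proof is correct and is essentially the paper's. For (i), the paper likewise uses Lemma~\ref{lem:main-graphmono} to place $\Rtrue$ in $\PossRC(G')$ and then takes a minimal element of $\{S\in\PossRC(G') : S\subseteq\Rtrue\}$; in the paper's proof block this argument appears under the label \emph{ii}, because its labels are crossed. Statement (ii) is exactly Corollary~\ref{cor:inclusion}, which is what you cite.

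Your closing worry is unfounded, though harmless. Any $\subseteq$-minimal element of that subfamily is already inclusion-minimal in all of $\PossRC(G')$, by the same observation you make: a member of $\PossRC(G')$ strictly inside it also lies below $\Rtrue$. So the minimum-cardinality choice is not needed; the paper simply takes a $\subseteq$-minimal element.
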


\begin{proof}
     i) By Assumption~\ref{ass:ident},
$\bigcap \PossRC(\Gtrue) = \Rtrue$.
Since $G_D \supseteq \Gtrue$, by Proposition~\ref{prop:indispensable},
$ \bigcap \PossRC(G_D)   \subseteq \bigcap \PossRC(\Gtrue) = \Rtrue  $. 

ii) By Lemma~\ref{lem:main-graphmono}, $\Rtrue \in \PossRC(\Gtrue) \subseteq \PossRC(G_D)$, so
$\Rtrue \in \PossRC(G_D)$. The family
$\mathcal F_{\Rtrue} := \{S \in \PossRC(G_D) : S \subseteq \Rtrue\}$
is nonempty and finite, hence has a $\subseteq$-minimal element
$\widehat R$. If some $S \in \PossRC(G_D)$ satisfied $S \subsetneq
\widehat R$, then $S \subseteq \Rtrue$ as well, contradicting minimality
of $\widehat R$ in $\mathcal F_{\Rtrue}$. So $\widehat R$ is
inclusion-minimal in all of $\PossRC(G_D)$.
\end{proof}

By Lemma~\ref{lem:main-graphmono} and ~\ref{lem:master}, if the mis-specified graph $G'$ is denser compared to the true graph $G^*$, i.e. $G_1=G^*, G_2= G'$ then the predicted root cause set will be a subset of the original root cause set. Whereas, if $G'$ is sparse compared to the true graph $G^*$, the predicted root cause set will be a superset of the original root cause set.  We formalize this below.

\section{{Experiment Details}}
\label{appex:exp}

\subsection{Training Details}
\label{app:train}

For all continuous experiments we use a flow-based DCM
(\texttt{dcm\_flow}), in which each node is modeled as a conditional
normalizing flow given its parents (and its latents, when present) and
trained by maximum likelihood. For each candidate we fit two DCMs, one on
the normal and one on the anomalous data. Under the shared-update mode used
in all our runs we optimize
\[
L = L_{\text{anomalous}} + L_{\text{normal}},
\]
with equal (unweighted) terms, matching Algorithm~\ref{alg:dcm-rca}. The
candidate mechanism may be shared across the two models
(\texttt{trn.upd=shared}), so that only the non-shared components absorb
regime-specific change. All models are trained with Adam at a learning rate
of $5\times10^{-4}$, batch size $256$, for $50$ epochs, using a hidden
dimension of $32$ and a latent noise dimension of $10$. Normal and
anomalous features are jointly normalized before training, and candidate
node tests are executed in parallel with up to $10$ workers
(\texttt{thr\_num}). After training, we rank candidates by a discrepancy
between model samples and observed data; primarily the Wasserstein
distance for continuous experiments, with MMD and median shift additionally
logged as diagnostics;and take the top-$k$ set for PRR. Full defaults are
given in \texttt{dcm/conf.yaml} and are overridable via Hydra; our code
repository (\url{https://github.com/Musfiqshohan/RCA-DCM}) contains the training code (\texttt{dcm/train.py})
along with configs, training scripts, and experiment runners. 

We use the following metrics for evaluating output rankings.
\begin{equation}
T@k \;=\; \frac{1}{|\mathbf{E}|}\sum_{e \in \mathbf{E}}
\frac{\left|\hat{R}_e[1,..,k] \cap R^{*}_{e}\right|}
{\min\!\left(k,\, |R^{*}_{e}|\right)},
\qquad
\mathrm{PRR} \;=\; \frac{1}{|\mathbf{E}|}\sum_{e \in \mathbf{E}}
\mathbf{1}\!\left[\,T@|R^{*}_{e}| = 1\,\right]
\label{eq:metrics}
\end{equation}

\textbf{Causal graph:} The causal structural knowledge can be obtained from domain experts or, as recent work shows, through \textit{inexpensive} LLM-based causal discovery~\cite{kiciman2024causal,jiralerspong2024efficient,long2023causal}. In real-world setups such as microservice and agentic frameworks, the call graph or system architecture can also be extracted directly. Thus, we do not consider this requirement a major limitation. Nonetheless, \myalgo{} outperforms the baselines in synthetic experiments with both the true graph and a dense graph, and in microservice experiments where no graph is known and we use a sink graph (all variables point to the target node).

\textbf{Sample size:} We evaluate \myalgo across a range of sample sizes, and it performs consistently well: in synthetic experiments ($N=5$k), in the Causal Chamber ($N=1$k), and in microservice setups ($N \sim 700$). More extensive experiments are in Appendix~\ref{sec:sample-sensitivity}.

\textbf{Runtime:} Our proposed method is computationally more expensive than some baselines (e.g., a single root-cause test takes 0.4\,s for RCD vs.\ 41\,s for ours). However, it outperforms them in almost all cases. Thus, our approach targets applications that prioritize reliability under latent confounding over faster runtime.

\subsection{Simulated Datasets}
\label{app:sim-dat}

We follow~\citep{yang2024learning} to generate synthetic normal
and anomalous data from randomly generated graphs, varying the total
variable count, latent proportion, and edge density. We sample a random DAG
over $p$ observed variables plus $m$ latent confounders,  observed-to-observed edges follow a random ancestral density, and
each latent connects to a small set of observed children.
Only observed variables are written to the dataset and latents are never revealed to the
algorithms. The causal mechanism of each variable combines linear structural mixing with a per-variable invertible non-linearity. Given exogenous noises, observations are produced by the
nonlinear SEM whose graph and mixing structure are shared across both
regimes.

In these experiments, we average results over 100 trials and for each trial  we draw $5000$ \emph{normal} samples with
baseline noise scales drawn uniformly from $[a_{\min}, a_{\max}]$ and no
interventions, and $5000$ \emph{anomalous} samples in which a subset of
\emph{observed} variables is selected as root causes; for those targets
only, the noise scale is redrawn from a stronger range
$[b_{\min}, b_{\max}]$, while non-targets retain their baseline scales.
Sample sizes vary across experiments and are stated per experiment. Root
causes are therefore noise-scale mechanism shifts, and the intervened
observed nodes are the ground-truth root causes. Intervention strengths can
optionally be made heterogeneous via a per-variable weight schedule.
Methods see
only the observed tables and, where applicable, the graph.

Independent exogenous noise $\varepsilon\in\mathbb{R}^{n_L+n_X}$
concatenates latent exogenous shocks $\ell\in\mathbb{R}^{n_L}$ and observed
exogenous disturbances $\varepsilon^{\mathsf X}\in\mathbb{R}^{n_X}$. A
sparse mask $G$ fixes which latent/observed parents may influence each
observed sensor; we draw a random column-$\ell_2$-normalized coefficient
matrix, apply $G$, and split the observed rows as
$\bar A=[B_{\mathrm{raw}}\mid C]$ with
$B_{\mathrm{raw}}\in\mathbb{R}^{n_X\times n_L}$ and
$C\in\mathbb{R}^{n_X\times n_X}$. We scale all latent-to-observed edges by
a nonnegative confounding strength $\lambda$, $B=\lambda B_{\mathrm{raw}}$,
leaving the observed-to-observed support unchanged. Observations are
\begin{equation}
x \;=\; (I_{n_X}-C)^{-1}\big(B\ell+\varepsilon^{\mathsf X}\big),
\end{equation}
i.e.\ latents and local observed shocks propagate through the directed
subgraph on sensors; larger $\lambda$ increases hidden common-cause leakage
into measurements. The default nonlinear generator applies two mixing
layers with leaky-ReLU or tanh nonlinearities and randomly drawn mixing
weights of controlled condition number.

\paragraph{Regimes and root causes.}
Normal data uses baseline noise scales drawn uniformly from
$[a_{\min}, a_{\max}]$, e.g.\ $[0, 8]$. For anomalous data we select
$\lfloor p/2 \rfloor$ of the $p$ observed nodes as root causes and redraw
their noise scales from a stronger range $[b_{\min}, b_{\max}]$, e.g.\
$[2, 12]$, leaving non-targets at their baseline scales. With $p = 6$ this
gives $3$ root causes. We draw $N_{\text{normal}} = N_{\text{anomalous}} =
5000$ i.i.d.\ samples from the same SCM family; other experiments use
different sample sizes, reported in their respective sections. Exact
hyperparameters are given in the scripts in our code repository.

\textbf{Exp 1: Nonlinear model with unobserved confounders:}
We use graphs with $p = 6$ observed and $m = 4$ latent variables (total $n=10$) and vary the confounding strength $\lambda \in \{3, 10\}$, which uniformly scales all latent-to-observed coefficients; a larger $\lambda$ makes the hidden confounders contribute more strongly to the observed variables. We take $\lfloor p/2 \rfloor = 3$ observed variables as ground-truth root causes. 

\textbf{Observation:}
Figure~\ref{fig:latent_str}(a) reports PRR for $\lambda \in \{3, 10\}$. \myalgo{} attains $99\%$ and $84\%$, ahead of every baseline in both settings: BARO ($91\%$, $65\%$), NSigma ($86\%$, $62\%$), RCG ($85\%$, $39\%$), RCD ($74\%$, $59\%$), and CIRCA ($62\%$, $38\%$). All methods degrade as confounding strengthens: \myalgo{} starting at $99\%$ and RCD at $74\%$, both drop by $15$ percentage points, while BARO, NSigma, CIRCA, and RCG drop by $26$, $24$, $24$, and $46$ points, respectively. This suggests that \myalgo{} is more robust to strong confounding and better preserves root-cause recovery than the baselines. 
The baselines fail because a non-root-cause descendant that shares a latent confounder with a root cause inherits a shift that persists even after conditioning on its observed parents, making it indistinguishable from a true root cause unless the latents are explicitly accounted for.

\textbf{Exp 2: Nonlinear model with heterogeneous anomalies:}
Baselines that perform RCA mainly by measuring marginal shifts implicitly assume a homogeneous anomaly, in which the true root causes exhibit the largest distributional shifts among all variables. This assumption often fails with multiple root causes: a shift originating at a root cause $R_1$ propagates to its descendants, so a downstream non-root-cause descendant can end up with a larger observed shift than another root cause $R_2$, which creates a failure case for these baselines.

We construct such heterogeneous anomalies by scaling the injected shift magnitude linearly with topological depth, so that shifts accumulate along directed paths and a downstream descendant out-shifts a genuine upstream root cause. In the generated data, this trap occurs (i.e., some non-root cause exhibits a larger marginal shift than some true root cause) in roughly $60\%$ of trials at $p=10$. We follow the same random graph generation process as in the previous experiment, but with no latent confounders and with $\lfloor p/2 \rfloor$ of the observed variables as root causes. We fix $5000$ normal and $5000$ anomalous samples and vary the number of observed variables $n \in \{5, 10\}$ ($n=p$ as $m=0$).

\textbf{Observation:}
Figure~\ref{fig:latent_str}(b) shows PRR as the system grows. \myalgo{} achieves the highest rate at both sizes, $98\%$ at $n=5$ and $54\%$ at $n=10$, outperforming all five baselines in both settings. The marginal-shift methods (BARO and NSigma) degrade the most, falling to $14\%$ at $n=10$; this is the failure mode the construction targets, since it makes ranking by shift magnitude wrong in most trials. The graph-based methods perform better (RCG $36\%$, CIRCA $34\%$, RCD $28\%$) because they propagate evidence along edges rather than scoring each node in isolation, so a descendant's large shift can be partly explained by its parents. However, they also degrade as the system grows, since their conditional tests involve larger conditioning sets and become less reliable at a fixed sample size. These results suggest that \myalgo{} is more robust to increasing system complexity, where the effects of multiple root causes can compound.

\textbf{Exp 3: Robustness to graph misspecification.}
We next examine the sensitivity of \myalgo{} to errors in the supplied graph. On the same $100$ datasets used for $\lambda = 10$ in \textbf{Exp 1}, we perturb \emph{only} the graph provided to \myalgo{}, so that any change in performance is attributable solely to the graph error. We compare three perturbations against the true-graph control (PRR of $88\%$ and $84\%$ in two independent runs): adding $50\%$ spurious directed and bidirected edges (PRR $= 84\%$), deleting $50\%$ of the bidirected edges (PRR $= 87\%$), and deleting $50\%$ of both directed and bidirected edges (PRR $= 76\%$). Under a paired McNemar test, neither edge addition ($p = 0.39$) nor bidirected-edge deletion ($p = 1.00$) changes accuracy significantly; only the perturbation that also deletes directed edges degrades it ($p = 0.019$), and even then \myalgo{} retains a PRR of $76\%$, above every baseline at $\lambda = 10$ (best: BARO, $65\%$).
These results are consistent with our analysis in Section~\ref{sec:graph-mis}, which covers misspecified graphs that can still represent the observed normal and anomalous distributions, i.e., $\{P_n, P_a\} \subseteq \mathcal{M}(G')$. Adding edges always preserves this condition, and in our instances deleting bidirected edges largely preserves it as well; in both cases, \myalgo{} performs on par with the control. Deleting directed edges, in contrast, typically takes $G'$ outside this class: each conditional $P(V \mid \mathrm{pa}(V))$ must be fit against an incomplete parent set, so the shift propagated along a missing edge cannot be explained away, and non-root-cause variables can appear as root causes.

\subsection{Sensitivity analysis of unequal sample sizes}
\label{sec:sample-sensitivity}
Algorithm~\ref{alg:dcm-rca} trains two DCMs, one on the normal data and one
on the anomalous data, each fit solely to its own sample. An implicit
unequal weighting between the two loss terms is therefore not the main risk
that sample-size imbalance introduces; what matters is whether each DCM
individually has enough data to learn its own SCM.

Suppose $r$ samples are sufficient to learn an SCM with neural networks.
Let $N_s$ and $N_a$ denote the normal and anomalous sample sizes, and
suppose we are testing whether variable $Y$ is a root cause. With equal
weighting, the DCM objective searches for a mechanism $f_Y$ consistent with
both the $N_s$ normal and the $N_a$ anomalous samples; if the loss does not
reach zero, no such mechanism exists, so $Y$ must be a root cause. However,
if $N_a \leq r$, the anomalous samples are insufficient to learn the SCM
reliably, which violates the universal-approximation assumption underlying
our test, and the algorithm may give incorrect results. We study this
empirically below.

\paragraph{Exp 6. Setup.}
We generate synthetic causal systems with $8$ observed variables and $4$
unobserved confounders. Graphs are moderately dense ($\sim\!80\%$ of
possible ancestral edges), with interventions on $\sim\!40\%$ of variables
($3$ root causes) at uniformly drawn strengths. Latent-to-observed edges
are moderately strong (scale $2$), and the maximum normal-regime noise
scale is $8$. Models are trained for $50$ epochs with equal weights on the
normal and anomalous losses, as in Algorithm~\ref{alg:dcm-rca}, ranking
candidates by Wasserstein discrepancy. We report the \emph{perfect recovery
rate} (PRR): an instance counts as a success only when the top-$k$
predictions exactly recover the $k$ true root causes (the same top-$k$ rule
is applied to BARO and RCD).

\begin{table}[h]
\centering
\caption{Perfect recovery rate (PRR) under varying normal ($N_s$) and
anomalous ($N_a$) sample sizes.}
\label{tab:sample-imbalance}
\begin{tabular}{rrcccr}
\toprule
$N_s$ & $N_a$ & DCM & BARO & RCD & \#Instances \\
\midrule
1000 & 1000 & \textbf{0.63} & 0.39 & 0.32 & 78 \\
1000 & 500  & \textbf{0.58} & 0.48 & 0.48 & 95 \\
500  & 500  & 0.47 & 0.47 & 0.31 & 51 \\
1000 & 100  & 0.27 & 0.35 & 0.24 & 131 \\
100  & 100  & 0.23 & 0.24 & 0.02 & 62 \\
\bottomrule
\end{tabular}
\end{table}

\paragraph{Findings.}
(i) Under moderate imbalance ($N_s = 1000$, $N_a = 500$), equal weighting
remains effective: DCM PRR stays close to the balanced setting ($0.58$ vs.\
$0.63$) and still matches or exceeds the baselines. (ii) When the anomalous
sample is much smaller ($N_a = 100$, $N_s = 1000$, a $10\!:\!1$ ratio), DCM
PRR drops sharply ($0.63 \rightarrow 0.27$) and BARO becomes competitive or
better. (iii) Balanced small samples ($100/100$) are hard for all methods;
scaling $N_s$ and $N_a$ together helps more than inflating $N_s$ alone.

\subsection{Causal Chamber (Real-World Physical Testbed)}
\label{app:chamber}

\textbf{Exp 4:} We evaluate \myalgo{} on the Causal Chambers benchmark~\citep{gamella2025causal}, a real-world physical testbed. We use the light-tunnel chamber in its standard configuration, which comes with a ground-truth DAG over $38$ variables and $57$ edges. Each experiment applies a single-target intervention of varying strength (strong, mid, or weak) to one of the tunnel's manipulable variables, shifting the target's distribution relative to the \texttt{uniform\_reference} experiment. We use the reference experiment as the normal dataset ($10$k samples) and each intervention experiment as an anomalous dataset ($1$k samples), and we drop variables that are constant within an experiment. This yields $52$ cases, which we evaluate in two conditions: with the true root cause \emph{observed}, and with it \emph{hidden}, so that it acts as a latent confounder among its former children. Its children then remain dependent through it, but it is no longer available to condition on. In the observed condition, $|R^*| = 1$; in the hidden condition, the hidden cause has between $1$ and $7$ children, all of which must be recovered as root causes.

\textbf{Observation:}
Figure~\ref{fig:latent_str}(c) reports PRR in both conditions. When the root cause is observed, there is no confounding and every case has a single root cause; here, \myalgo{} is competitive but does not lead ($88\%$, against $100\%$ for RCG and $90\%$ for RCD). Hiding the root cause changes the problem in two ways: it introduces a latent confounder, and it turns each of the hidden variable's children into a root cause, so a case may now have several root causes that must all be recovered. Under this harder setting, the ordering reverses: \myalgo{} attains $87\%$, against $73\%$ for the best baseline, and it is the only method whose performance barely changes between conditions ($-2$ points, against $-50$ for RCG and $-21$ for RCD). The gap is largest on the cases with multiple root causes: on these cases alone, \myalgo{} attains a PRR of $63\%$, while CIRCA reaches $25\%$, RCG $13\%$, and RCD $0\%$. RCD fails here by design: its elimination procedure keeps only one surviving candidate, so it can recover a single root cause but never a set of several. In contrast, \myalgo{} tests each variable separately for whether its own mechanism changed, so all children of the hidden cause are flagged and appear together at the top of the ranking.

\subsection{Microservice Datasets}
\label{app:micro}

\textbf{Exp 5:}
We evaluate \myalgo{} on SockShop, a cloud computing dataset recorded from a microservice-based replica of a web application specifically designed for evaluating root cause analysis methods~\citep{pham2024root}. The repository contains performance issues corresponding to five fault types: CPU hog (``cpu''), memory leak (``mem''), disk I/O stress (``disk''), network delay (``delay''), and packet loss (``loss''). Each fault type is injected five times into each of five services (carts, catalogue, orders, payment, and users), yielding $125$ anomaly datasets. Each dataset provides roughly $360$ normal and $360$ anomalous observations over $14$ recorded services, which we split into normal and anomalous segments based on the injection time. We assume that \emph{no} causal structure is available to \myalgo{}: instead of a call graph, we supply a \emph{confounded sink}, in which every recorded service points to the service under test and a single shared latent confounds all of them. The confounder is essential. A plain sink graph, with directed edges into the target but no edges among the remaining services, asserts that those services are mutually independent, which is false in these systems, where co-located services share load and infrastructure (in the normal regime, we measure a mean absolute pairwise correlation of $0.11$ on SockShop and $0.18$ on Online Boutique). Adding the shared latent removes this false independence claim while encoding no structural knowledge beyond ``any service may be responsible, and the services may share unobserved common causes.'' This places \myalgo{} at a disadvantage relative to two of the baselines: CIRCA and RCG are given the true edge-inverted call graph, while NSigma, BARO, and RCD do not use a graph at all.

\textbf{Exp 6:}
We repeat the evaluation on Online Boutique~\citep{pham2024root}, a second microservice benchmark with the same five fault types, each injected five times, again yielding $125$ anomaly datasets, but over $11$ services and with substantially longer traces (roughly $2100$ normal and $2100$ anomalous observations per case). The graph treatment is identical to Exp 5: \myalgo{} receives only the confounded sink, while CIRCA and RCG receive the true call graph. Because the two benchmarks differ in topology, service count, and trace length, together they test whether a method's behavior transfers across deployments rather than being tuned to one.

\textbf{Results:}
Table~\ref{tab:per-fault-topk} reports per-fault top-$k$ accuracy. On SockShop, \myalgo{} attains the highest average top-$1$ accuracy, $0.88$, ahead of NSigma ($0.75$), BARO ($0.74$), CIRCA ($0.65$), RCG ($0.54$), and RCD ($0.38$); every gap is significant under a paired test ($p<0.001$). It is best or tied for best on all five fault types, and its errors are near-misses rather than failures: its average top-$3$ accuracy reaches $0.99$. On Online Boutique, \myalgo{} again ranks first, at $0.78$, ahead of RCG ($0.71$), NSigma ($0.70$), BARO ($0.62$), CIRCA ($0.52$), and RCD ($0.49$), with an average top-$3$ accuracy of $0.90$; the margins over BARO, CIRCA, and RCD are significant, while those over NSigma and RCG are not at this sample size.

Two observations are worth highlighting. First, no baseline is consistently second: NSigma is the strongest competitor on SockShop, whereas RCG rises from fifth place on SockShop to second on Online Boutique and BARO drops from third to fourth. This is consistent with~\citet{ikram2025root}, who suggest that BARO may be tuned to SockShop, and with our synthetic and Causal Chamber results, where the ordering among baselines likewise changes with the setting. \myalgo{} is the only method that ranks first on both benchmarks. Second, packet loss is among the hardest fault types for \myalgo{} on both benchmarks ($0.76$ on SockShop, tied with disk, and $0.44$ on Online Boutique), whereas CPU and memory faults are recovered almost perfectly ($1.00$ and $0.96$--$1.00$). Loss perturbs a service only intermittently, so the induced mechanism shift is small relative to normal traffic variation. On Online Boutique, most baselines also score lowest on this category, which suggests that the difficulty lies largely in the anomaly signal itself rather than in the attribution method.

Notably, \myalgo{} attains these results without any causal structure, while the two graph-based baselines are given the true call graph. Together with the graph misspecification results in Section~\ref{exp:sim-dat}, this supports the view that the advantage of \myalgo{} does not depend on access to an accurate structure.

\definecolor{hdr}{RGB}{232,236,244}
\definecolor{avg}{RGB}{243,246,251}
\definecolor{ours}{RGB}{253,246,227}

\begin{table}[t]
\centering
\setlength{\tabcolsep}{3.5pt}
\renewcommand{\arraystretch}{1.2}
\caption{Per-fault Top-1/Top-3/Top-5 accuracy. Best value per column within each dataset is in bold.}
\label{tab:per-fault-topk}
\resizebox{\textwidth}{!}{%
\begin{tabular}{@{}ll ccc ccc ccc ccc ccc >{\columncolor{avg}}c >{\columncolor{avg}}c >{\columncolor{avg}}c@{}}
\toprule
\rowcolor{hdr}
& & \multicolumn{3}{c}{CPU} & \multicolumn{3}{c}{MEM} & \multicolumn{3}{c}{DISK}
& \multicolumn{3}{c}{DELAY} & \multicolumn{3}{c}{LOSS} & \multicolumn{3}{c}{\textsc{Average}} \\
\cmidrule(lr){3-5}\cmidrule(lr){6-8}\cmidrule(lr){9-11}\cmidrule(lr){12-14}\cmidrule(lr){15-17}\cmidrule(l){18-20}
Dataset & Method
& T1 & T3 & T5 & T1 & T3 & T5 & T1 & T3 & T5 & T1 & T3 & T5 & T1 & T3 & T5 & T1 & T3 & T5 \\
\midrule
\multirow{6}{*}{Sock Shop}
 & \cellcolor{ours}\textbf{DCM} & \cellcolor{ours}\textbf{1.00} & \cellcolor{ours}\textbf{1.00} & \cellcolor{ours}\textbf{1.00} & \cellcolor{ours}\textbf{0.96} & \cellcolor{ours}\textbf{0.96} & \cellcolor{ours}\textbf{0.96} & \cellcolor{ours}\textbf{0.76} & \cellcolor{ours}\textbf{1.00} & \cellcolor{ours}\textbf{1.00} & \cellcolor{ours}\textbf{0.92} & \cellcolor{ours}\textbf{1.00} & \cellcolor{ours}\textbf{1.00} & \cellcolor{ours}\textbf{0.76} & \cellcolor{ours}\textbf{1.00} & \cellcolor{ours}\textbf{1.00} & \cellcolor{ours}\textbf{0.88} & \cellcolor{ours}\textbf{0.99} & \cellcolor{ours}\textbf{0.99} \\
 & NSigma & \textbf{1.00} & \textbf{1.00} & \textbf{1.00} & \textbf{0.96} & \textbf{0.96} & \textbf{0.96} & 0.64 & 0.96 & \textbf{1.00} & 0.60 & 0.92 & \textbf{1.00} & 0.56 & \textbf{1.00} & \textbf{1.00} & 0.75 & 0.97 & \textbf{0.99} \\
 & BARO   & \textbf{1.00} & \textbf{1.00} & \textbf{1.00} & 0.48 & \textbf{0.96} & \textbf{0.96} & 0.68 & 0.96 & \textbf{1.00} & \textbf{0.92} & \textbf{1.00} & \textbf{1.00} & 0.64 & \textbf{1.00} & \textbf{1.00} & 0.74 & 0.98 & \textbf{0.99} \\
 & CIRCA  & 0.96 & \textbf{1.00} & \textbf{1.00} & 0.56 & \textbf{0.96} & \textbf{0.96} & 0.68 & 0.84 & \textbf{1.00} & 0.48 & \textbf{1.00} & \textbf{1.00} & 0.56 & \textbf{1.00} & \textbf{1.00} & 0.65 & 0.96 & \textbf{0.99} \\
 & RCD    & 0.76 & 0.92 & 0.96 & 0.28 & 0.48 & 0.48 & 0.32 & 0.52 & 0.52 & 0.28 & 0.48 & 0.52 & 0.28 & 0.32 & 0.44 & 0.38 & 0.54 & 0.58 \\
 & RCG    & 0.32 & 0.68 & 0.80 & 0.24 & 0.32 & 0.32 & 0.68 & 0.76 & 0.80 & 0.72 & 0.80 & 0.80 & 0.72 & 0.80 & 0.80 & 0.54 & 0.67 & 0.70 \\
\midrule
\multirow{6}{*}{Online Boutique}
 & \cellcolor{ours}\textbf{DCM} & \cellcolor{ours}\textbf{1.00} & \cellcolor{ours}\textbf{1.00} & \cellcolor{ours}\textbf{1.00} & \cellcolor{ours}\textbf{1.00} & \cellcolor{ours}\textbf{1.00} & \cellcolor{ours}\textbf{1.00} & \cellcolor{ours}0.64 & \cellcolor{ours}0.88 & \cellcolor{ours}\textbf{1.00} & \cellcolor{ours}\textbf{0.80} & \cellcolor{ours}\textbf{1.00} & \cellcolor{ours}\textbf{1.00} & \cellcolor{ours}\textbf{0.44} & \cellcolor{ours}0.60 & \cellcolor{ours}0.88 & \cellcolor{ours}\textbf{0.78} & \cellcolor{ours}0.90 & \cellcolor{ours}\textbf{0.98} \\
 & NSigma & \textbf{1.00} & \textbf{1.00} & \textbf{1.00} & \textbf{1.00} & \textbf{1.00} & \textbf{1.00} & 0.56 & \textbf{1.00} & \textbf{1.00} & 0.52 & 0.96 & \textbf{1.00} & \textbf{0.44} & 0.60 & 0.80 & 0.70 & \textbf{0.91} & 0.96 \\
 & BARO   & \textbf{1.00} & \textbf{1.00} & \textbf{1.00} & 0.36 & \textbf{1.00} & \textbf{1.00} & 0.64 & 0.88 & \textbf{1.00} & 0.64 & \textbf{1.00} & \textbf{1.00} & \textbf{0.44} & 0.64 & 0.72 & 0.62 & 0.90 & 0.94 \\
 & CIRCA  & 0.88 & \textbf{1.00} & \textbf{1.00} & 0.72 & \textbf{1.00} & \textbf{1.00} & 0.20 & 0.84 & \textbf{1.00} & 0.44 & 0.92 & \textbf{1.00} & 0.36 & 0.68 & \textbf{0.92} & 0.52 & 0.89 & \textbf{0.98} \\
 & RCD    & \textbf{1.00} & \textbf{1.00} & \textbf{1.00} & 0.96 & \textbf{1.00} & \textbf{1.00} & 0.20 & 0.36 & 0.40 & 0.20 & 0.28 & 0.28 & 0.08 & 0.32 & 0.40 & 0.49 & 0.59 & 0.62 \\
 & RCG    & 0.96 & \textbf{1.00} & \textbf{1.00} & 0.76 & 0.80 & 0.88 & \textbf{0.72} & 0.80 & 0.80 & \textbf{0.80} & 0.80 & 0.80 & 0.32 & \textbf{0.76} & 0.76 & 0.71 & 0.83 & 0.85 \\
\bottomrule
\end{tabular}%
}
\end{table}

\subsection{Computational Complexity}
\label{app:complexity}

\paragraph{Asymptotic complexity.}
DCM-RCA trains one DCM over the full causal graph to model the normal
distribution and then trains one candidate-specific DCM for each node in
the graph to test whether that node is a root cause. Let $|\mathbf{V}|$ be
the number of variables, $N$ the number of samples, $B$ the batch size,
$T$ the number of training epochs, and $c$ the cost of one training update
for a DCM over the full graph. Each epoch performs $\lceil N/B \rceil$
updates, so training one DCM costs $O(T \lceil N/B \rceil c)$, and
evaluating all candidate root causes sequentially requires training
$|\mathbf{V}|$ additional DCMs, giving total complexity
\[
O\!\left(|\mathbf{V}|\, T \lceil N/B \rceil c\right).
\]
However, the candidate-specific DCMs are independent once the normal DCM
is trained, so they can be executed in parallel. With $P$ parallel workers,
the wall-clock complexity becomes approximately
\[
O\!\left(T \lceil N/B \rceil c
\left(1+\left\lceil \frac{|\mathbf{V}|}{P}\right\rceil\right)\right),
\]
up to parallelization overhead and memory constraints. Thus, although the
total computational cost grows linearly with the number of candidate
variables, the practical runtime can be substantially reduced through
parallel execution: for $T=50$, $N=5000$, $B=256$, $|\mathbf{V}|=10$ and
$P=15$, the sequential cost is $11{,}000c$ versus $2000c$ in parallel, a
$5.5\times$ speedup.

\paragraph{Empirical wall-clock time.}
Table~\ref{tab:runtime} reports per-dataset runtime for DCM
(\texttt{dcm\_flow}, $50$ epochs) against BARO and RCD in the same
synthetic setting ($12$ variables, $4$ latents, $N_s = N_a = 1000$).

\begin{table}[h]
\centering
\caption{Per-dataset wall-clock time in the synthetic setting
($12$ variables, $4$ latents, $N_s = N_a = 1000$).}
\label{tab:runtime}
\begin{tabular}{ll}
\toprule
Method & Typical time / dataset \\
\midrule
BARO        & $\sim 0.01$\,s \\
RCD         & $\sim 0.4$\,s (median) \\
DCM (ours)  & $\sim 41$\,s (median) \\
\bottomrule
\end{tabular}
\end{table}

DCM is slower than RCD by $10^{2}\times$ and than BARO by
$10^{3}$--$10^{4}\times$, as expected: BARO is a lightweight
distributional score, RCD a localized CI-test search, whereas DCM trains a
full multi-node, multi-epoch causal model (here with up to $10$ parallel
workers). DCM runtime scales with sample size ($\sim 9$\,s at $N=100$,
$\sim 41$\,s at $N=1000$, and $\sim 2$--$3$\,min at $N_s=2000$,
$N_a=1000$), while BARO and RCD remain sub-second throughout. These
baselines, however, cannot reliably handle systems with unobserved
confounders, which our algorithm can.

We do not give a matching closed-form complexity for BARO, RCD, and CIRCA:
BARO performs a single pass per variable; RCD's own analysis notes that
CI-test-based discovery is in general exponential in the number of nodes
for non-sparse graphs (its localized search targets avoiding this in
practice, without a published closed form); and CIRCA's complexity is
dominated by its causal-discovery step. We therefore view this as an
\emph{accuracy--compute trade-off}: DCM-RCA substantially improves accuracy
under latent confounding at the cost of
runtime, while the baselines remain preferable when raw speed is the
priority. DCM-RCA's cost is justified whenever reliability under unobserved
confounding matters.

\begin{algorithm}[t]
\caption{$\operatorname{RunDCM}(G, \mathbb{G})$}
\label{alg:run-nf}
\begin{algorithmic}[1]
\STATE \textbf{Input:} Causal graph $G = (\mathcal{V}, \mathcal{E})$, DCM $\mathbb{G}$.
\STATE $\hat{\mathbf{v}} \leftarrow \emptyset$ \hfill \COMMENT{Initialize generated samples}
\STATE $conf \leftarrow \emptyset$ \hfill \COMMENT{Initialize confounding noise storage}

\FOR{$U \in latent\_confounders(G)$}
    \STATE $v_1, v_2 = U.ch1, U.ch2$ \hfill \COMMENT{Get the two observed children of latent confounder $U$}
    \STATE $z_U \sim p(z)$ \hfill \COMMENT{Sample one shared latent noise}
    \STATE $conf[v_1] \leftarrow Append(conf[v_1], z_U)$ \hfill \COMMENT{Assign shared noise to $v_1$}
    \STATE $conf[v_2] \leftarrow Append(conf[v_2], z_U)$ \hfill \COMMENT{Assign shared noise to $v_2$}
\ENDFOR

\FOR{$V_i \in \mathcal{V}$ in causal graph $G$ topological order}
    \STATE $par = get\_parents(V_i, G)$ \hfill \COMMENT{Get observed parents of $V_i$}
    \STATE $exos \sim p(z)$ \hfill \COMMENT{Sample exogenous noise for $V_i$}
    \STATE $conf_i = conf[V_i]$ \hfill \COMMENT{Collect latent confounding noise for $V_i$}
    \STATE $v_i = \mathbb{G}_{\theta_i}(exos, conf_i, \hat{\mathbf{v}}_{par})$ \hfill \COMMENT{Generate $V_i$ using its Causal Normalizing Flow module}
    \STATE $\hat{\mathbf{v}} \leftarrow Append(\hat{\mathbf{v}}, v_i)$ \hfill \COMMENT{Store generated value}
\ENDFOR

\STATE \textbf{Return} Samples $\hat{\mathbf{v}}$ or Fail
\end{algorithmic}
\end{algorithm}

\begin{figure}[t]
    \centering
    \includegraphics[width=0.5\linewidth]{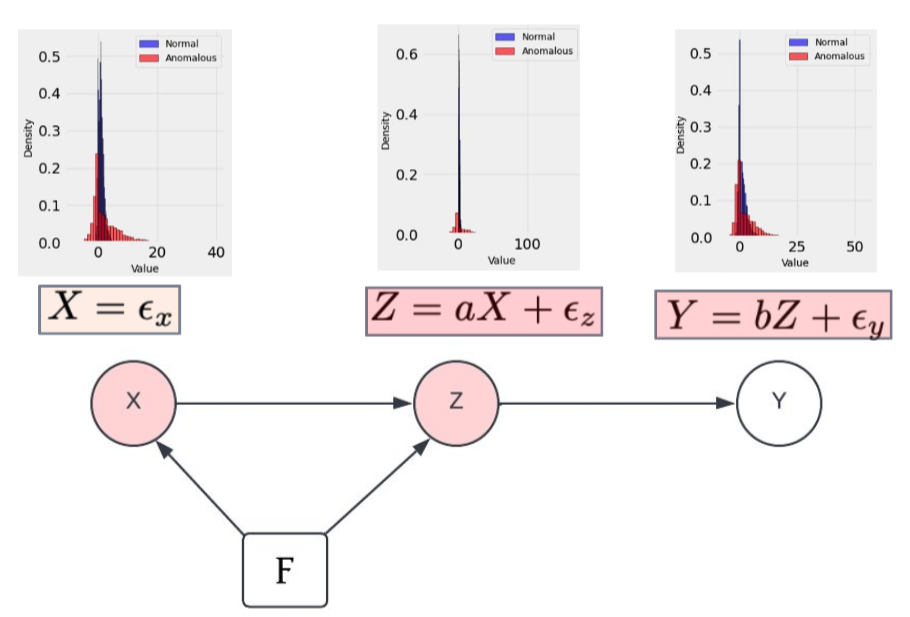}
    \caption{Non-rc $Y$ has higher shift in median than RC  $X$.}
    \label{fig:hetero-motiv}
\end{figure}

\section{Reproducibility Statement}
The full set of assumptions required for our soundness, completeness, and graph misspecification results is stated in Appendix~\ref{sec:assumptions}, and the corresponding proofs are provided in Appendices~\ref{sec:proof-solrc}--\ref{sec:proof-mis}. Section 5 describes the datasets, causal graphs, anomaly settings, evaluation metrics, baselines, and experimental protocols needed to reproduce the main results. Training details, implementation choices, and dataset-specific settings are given in Appendix~\ref{appex:exp}, with further details for nonlinear models with unobserved confounders in Appendix~\ref{app:sim-dat} and computational complexity in Appendix~\ref{app:complexity}. Since DCM-RCA trains one DCM for the full graph and one candidate-specific DCM for each node, the candidate-specific models can be trained in parallel, reducing practical wall-clock time. We report variability information where applicable, including error bars for the simulated experiments and accuracy across multiple anomaly instances for the real-world datasets; the reported perfect recovery and top-1 accuracy results are computed over repeated runs or multiple injected anomaly datasets. The datasets are either synthetically generated using the procedures described in the paper or publicly available. Code, with instructions for reproducing the reported experiments, is available at \url{https://github.com/Musfiqshohan/RCA-DCM}.

\clearpage

\end{document}